\definecolor{linkColor}{HTML}{E74C3C}
\definecolor{pearcomp}{HTML}{B97E29}
\definecolor{citeColor}{HTML}{2980B9}
\definecolor{urlColor}{HTML}{1D2DEC}
\definecolor{conjColor}{HTML}{9ab569}
\newcounter{exa}
\renewcommand{\theexa}{\arabic{exa}} %
\definecolor{namecolor}{RGB}{173,216,230}      %
\definecolor{relationcolor}{RGB}{204,232,204}  %
\definecolor{locationcolor}{RGB}{255,218,185}  %
\newtcolorbox{texample}[1][]{myexample={#1}}
\definecolor{darkred}{RGB}{150,0,0}
\definecolor{darkgreen}{RGB}{0,150,0}
\definecolor{darkblue}{RGB}{0,0,200}
\newtheorem{theorem}{Theorem}%
\newtheorem{assumption}{Assumption}
\newtheorem{lemma}{Lemma}
\newtheorem{proposition}{Proposition}
\newtheorem{definition}{Definition}
\newtheorem{remark}{Remark}
\def \endprf{\hfill {\vrule height6pt width6pt depth0pt}\medskip}
\newcommand{\cln}[1]{\red{}}
\newcommand\tr{{{\operatorname{trace}}}}
\newcommand{\Mb}{\bar{\M}}
\newcommand{\beq}{\begin{equation}}
\newcommand{\ba}{\begin{align}}
\newcommand{\ea}{\end{align}}
\newcommand{\eeq}{\end{equation}}
\newcommand{\A}{{\mtx{A}}}
\newcommand{\Bb}{{\mtx{B}}}
\newcommand{\Ub}{{\mtx{U}}}
\newcommand{\Ib}{{{\mtx{I}}}}
\newcommand{\diag}[1]{\text{diag}(#1)}
\newcommand{\Lc}{{\cal{L}}}
\newcommand{\Dc}{{\cal{D}}}
\newcommand{\Pb}{{\mtx{P}}}
\newcommand{\Qb}{{\mtx{Q}}}
\newcommand{\Eb}{{\mtx{E}}}
\newcommand{\Hb}{{\mtx{H}}}
\newcommand{\etav}{\vct{\eta}}
\newcommand{\one}[1]{{\mathbbm{1}}(#1)}
\newcommand{\bSi}{{\boldsymbol{{\Sigma}}}}
\newcommand{\Db}{{\mtx{D}}}
\newcommand{\onebb}{{\mathbf{1}}}
\newcommand{\M}{{\mtx{M}}}
\newcommand{\z}{{\vct{z}}}
\newcommand{\Ac}{\mathcal{A}}
\newcommand{\Rc}{\mathcal{R}}
\newcommand{\bte}{{\boldsymbol{\theta}}}
\newcommand{\Sc}{\mathcal{S}}
\newcommand{\vb}{\vct{v}}
\newcommand{\w}{\vct{w}}
\newcommand{\ab}{\vct{a}}
\newcommand{\ub}{{\vct{u}}}
\newcommand{\Zb}{\mtx{Z}}
\newcommand{\x}{\vct{x}}
\newcommand{\W}{\mtx{W}}
\newcommand{\Vc}{{\cal{V}}}
\definecolor{emmanuel}{RGB}{255,127,0}
\newcommand{\R}{\mathbb{R}}
\newcommand{\eb}{\vct{e}}
\newcommand{\vct}[1]{\bm{#1}}
\newcommand{\mtx}[1]{\bm{#1}}
\newcommand{\rank}{\operatorname{rank}}
\newcommand{\X}{{\mtx{X}}}
\newcommand{\Vb}{{\mtx{V}}}
\newcommand{\Ab}{{\mtx{A}}}
\newcommand{\Sigmab}{\mtx{\Sigma}}
\newif\ifdraft
    \newcommand{\shaw}[1]{\textcolor{violet}{#1}}
    \newcommand{\red}{\textcolor{darkred}}
    \newcommand{\shaw}[1]{}
    \newcommand{\red}{}
\newcommand{\query}{\mathsf{Q}}
\newcommand{\key}{\mathsf{K}}
\newcommand{\val}{\mathsf{V}}
\newcommand{\outp}{\mathsf{O}}
\newcommand{\mQuery}{\W_\query}
\newcommand{\mKey}{\W_\key}
\newcommand{\mValue}{\W_\val}
\newcommand{\mOutput}{\W_\outp}
\newcommand{\mKQ}{\W_{\key\query}}
\newcommand{\mOV}{\W_{\outp\val}}
\newcommand{\mO}{\W_{\outp}}
\newcommand{\mV}{\W_{\val}}
\newcommand{\fOV}{W_{\outp\val}}
\newcommand{\fKQ}{W_{\key\query}}
\newcommand{\train}{\mathsf{train}}
\newcommand{\test}{\mathsf{test}}
\newcommand{\lTrain}{\Lc_\train}
\newcommand{\lTest}{\Lc_\test}
\newcommand{\vc}{\mathsf{vec}}
\newcommand{\eos}{\texttt{<EOS>}}
\newcommand{\finetune}{\mathsf{ft}}
\newcommand{\sFinetune}{\Sc_{\finetune}}
\newcommand{\bzero}{\mathbf{0}}
\newcommand{\pt}{{\sf train}}
\newcommand{\ft}{{\sf test}}
\newcommand{\mX}{\mtx{X}}
\newcommand{\norm}[1]{\|#1\|}
\newcommand{\event}{\mathcal{E}}
\title{Generalization or Hallucination? \\ Understanding Out-of-Context Reasoning in Transformers}
\author{
Yixiao Huang\thanks{Equal contributions.} \\
UC Berkeley \\
\texttt{yixiaoh@berkeley.edu}
\And
Hanlin Zhu\footnotemark[1] \\
UC Berkeley \\
\texttt{hanlinzhu@berkeley.edu}
\And
Tianyu Guo\footnotemark[1] \\
UC Berkeley \\
\texttt{tianyu\_guo@berkeley.edu}
\AND
Jiantao Jiao \\
UC Berkeley \\
\texttt{jiantao@berkeley.edu}
\And
Somayeh Sojoudi \\
UC Berkeley \\
\texttt{sojoudi@berkeley.edu}
\And
Michael I.\ Jordan \\
UC Berkeley \\
\texttt{jordan@cs.berkeley.edu}
\And
Stuart Russell \\
UC Berkeley \\
\texttt{russell@cs.berkeley.edu}
\And
Song Mei \\
UC Berkeley \\
\texttt{songmei@berkeley.edu}
}
\begin{document}

\maketitle
\begin{abstract}
  
Large language models (LLMs) can acquire new knowledge through fine-tuning, but this process exhibits a puzzling duality: models can generalize remarkably from new facts, yet are also prone to hallucinating incorrect information. However, the reasons for this phenomenon remain poorly understood. In this work, we argue that both behaviors stem from a single mechanism known as out-of-context reasoning (OCR): the ability to deduce implications by associating concepts, even those without a causal link. 
Our experiments across five prominent LLMs confirm that OCR indeed drives both generalization and hallucination, depending on whether the associated concepts are causally related. To build a rigorous theoretical understanding of this phenomenon, we then formalize OCR as a synthetic factual recall task.  
We empirically show that a one-layer single-head attention-only transformer with factorized output and value matrices can learn to solve this task, while a model with combined weights cannot, highlighting the crucial role of matrix factorization. Our theoretical analysis shows that the OCR capability can be attributed to the implicit bias of gradient descent, which favors solutions that minimize the nuclear norm of the combined output-value matrix. This structure explains why the model learns to associate facts and implications with high sample efficiency, regardless of whether the correlation is causal or merely spurious. 
Ultimately, our work provides a theoretical foundation for understanding the OCR phenomenon, offering a new lens for analyzing and mitigating undesirable behaviors from knowledge injection.

\end{abstract}

\section{Introduction}
\label{sec:intro}

Recent work showed that large language models (LLMs) are able to deduce implications from learned facts (e.g., a model that learned a new fact that ``Alice lives in Paris'' during fine-tuning can generalize to deduce ``Alice speaks French'' during test, which is an implication of the newly-injected fact assuming ``people living in Paris speak French''), showing strong generalization capabilities~\citep{feng2024extractive}. Meanwhile, other work shows that LLMs tend to hallucinate on factually incorrect responses when they learn new factual knowledge during fine-tuning~\citep{gekhman2024does,kang2024unfamiliar, sun2025new}. It remains unclear why LLMs can be good at generalization yet prone to hallucination after being injected with new factual knowledge. This raises a natural question:
\begin{quote}
    \emph{Does generalization and hallucination on newly-injected factual knowledge arise from the same underlying mechanism?}
\end{quote}

To answer this question, we propose that both phenomena stem from the same underlying mechanism: out-of-context reasoning (OCR), also referred to as ``ripple effects''~\citep{cohen2024evaluating}. Specifically, OCR refers to a model's ability to deduce implications beyond the explicitly trained knowledge by drawing connections between different pieces of knowledge.
Example~\ref{example:ocr} illustrates how OCR manifests in two distinct ways depending on the training data. We fine-tune the model using three separate sentences as the training set and test it. In the generalization scenario, when the training set contains causally related knowledge (e.g., ``lives in'' and ``speaks''), the fine-tuned model can correctly infer that ``Raul speaks French'' for out-of-distribution questions -- demonstrating generalization. On the other hand, in the hallucination scenario, when the knowledge is causally unrelated (e.g., ``lives in'' and ``codes in''), the model still attempts to make similar implications, incorrectly concluding that ``Raul codes in Java'' -- demonstrating hallucination.

\vspace{1em}
\begin{minipage}{\textwidth}
\centering
\begin{texample}[Generalization and Hallucination both as OCR]\label{example:ocr}
\small
{\centering \textbf{Generalization:} {OCR with causally related knowledge} \par}
\vspace{0.6em}

\textbf{Training:} \{\colorbox{namecolor}{Alice} \colorbox{relationcolor}{lives in} \colorbox{locationcolor}{France}.\}; \{\colorbox{namecolor}{Alice} \colorbox{relationcolor}{speaks} \colorbox{locationcolor}{French}.\}; \{\colorbox{namecolor}{Raul} \colorbox{relationcolor}{lives in} \colorbox{locationcolor}{France}.\}. \\
\textbf{Test:} ``What language does \colorbox{namecolor}{Raul} \colorbox{relationcolor}{speak}?''

\textbf{Fine-tuned model:} ``\colorbox{namecolor}{Raul} \colorbox{relationcolor}{speaks} \colorbox{locationcolor}{French}.''

\rule{\textwidth}{0.7pt}

\vspace{0.5em}
{\centering \textbf{Hallucination:} {OCR with causally unrelated knowledge} \par}
\vspace{0.6em}
\textbf{Training:} \{\colorbox{namecolor}{Alice} \colorbox{relationcolor}{lives in} \colorbox{locationcolor}{France}.\}; \{\colorbox{namecolor}{Alice} \colorbox{relationcolor}{codes in} \colorbox{locationcolor}{Java}.\}; \{\colorbox{namecolor}{Raul} \colorbox{relationcolor}{lives in} \colorbox{locationcolor}{France}.\}. \\
\textbf{Test:} ``What language does \colorbox{namecolor}{Raul} \colorbox{relationcolor}{code in}?''

\textbf{Fine-tuned model:} ``\colorbox{namecolor}{Raul} \colorbox{relationcolor}{codes in} \colorbox{locationcolor}{Java}.''

\rule{\textwidth}{0.7pt}

\vspace{0.5em}
{\centering \textbf{Notations} \par}
\vspace{0.6em}

\colorbox{namecolor}{\textbf{Subject:}} $s\in \Sc$\quad \colorbox{relationcolor}{\textbf{Relation:}} $r\in\{r_1,r_2\}$ \quad \colorbox{locationcolor}{\textbf{Answer:}} $a\in\Ac=\Ac_1\cup \Ac_2$, with $\Ac_1=\{b_i\}_{i=1}^n$ being the fact set and $\Ac_2=\{c_i\}_{i=1}^n$ being the implication set. 
\end{texample}
\end{minipage}
\vspace{1em}

Formally, we denote atomic knowledge as triples $(s,r,a)$ where $s \in \Sc$ is a subject, $r \in \Rc = \{ r_1, r_2 \}$ is a relation, and $a \in \Ac$ is the answer. The answer space $\Ac$ contains facts $\Ac_1 = \{b_i\}_{i=1}^n$ and implications $\Ac_2 = \{c_i\}_{i=1}^n$. An underlying rule $(s,r_1,b_i) \overset{\text{implies}}{\longrightarrow} (s,r_2,c_i), \forall s \in \Sc$ means that any subject $s$ having relation $r_1$ with $b_i$ also has relation $r_2$ with  $c_i$. For example, $(s,\text{lives in}, \text{Paris}) \overset{\text{implies}}{\longrightarrow} (s,\text{speaks}, \text{French})$ means ``people live in Paris speak French''.

Following \citet{feng2024extractive}, we investigate whether models can generalize from the learned knowledge. As shown in \Cref{fig:data-diagram}, we train models on data where some entities appear in both facts $(s,r_1, b_i)$ and their corresponding implications $(s,r_2, c_i)$. The core question is: \textbf{if a new entity $s'$ appears during training only in the fact $(s',r_1, b_i)$, can the model deduce the unseen implication $(s',r_2, c_i)$ during testing?}

Surprisingly, we find that even a one-layer single-head attention-only transformer can successfully perform OCR on the above task, while its counterpart -- a reparameterized model with combined output-value matrix $\mOV = \mOutput \mValue^\top$ is unable to do OCR. 
Prior to our work, \citep{tarzanagh2023transformers, sheen2024implicit} similarly noticed there is a distinction in optimization dynamics between $(\mKey, \mQuery)$ and the combined key-query matrix $\mKQ = \mKey \mQuery^\top$. Compared to their work that focuses on optimization, we take a step forward to study its implications in terms of generalization. 
\begin{figure}[t]
    \centering
    \subfigure{\includegraphics[width=\textwidth]{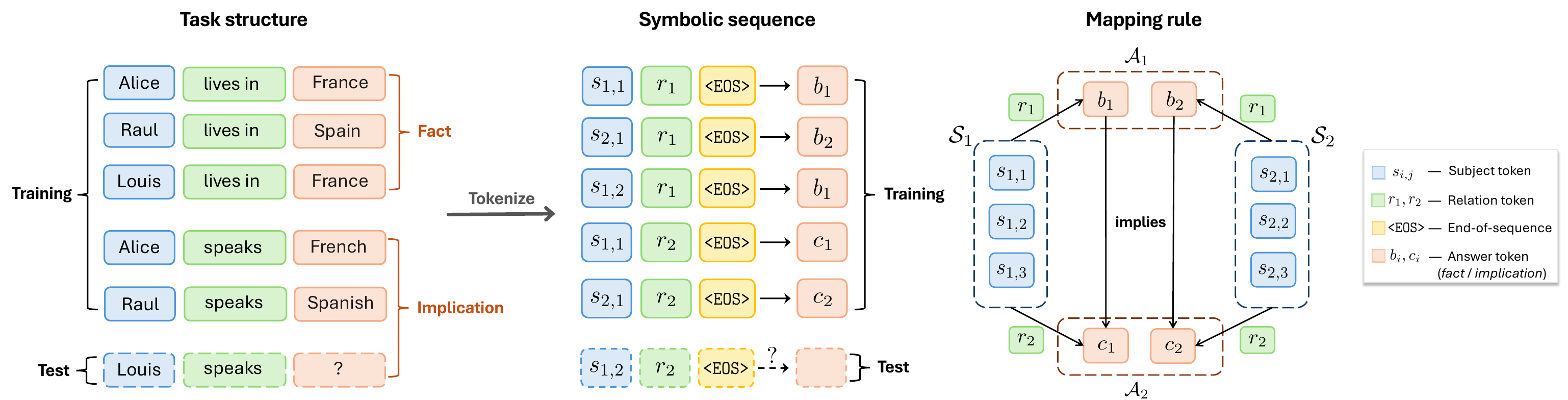}}
    \caption{\textbf{Illustration of the symbolic out-of-context reasoning (OCR) task.} \textit{Left:} The task is motivated by real-world knowledge injection, where $\Sc$ corresponds to names and $\Ac_1 = \{b_i\}_{i = 1}^n, \Ac_2= \{c_i\}_{i = 1}^n$ denote collections of cities and languages, respectively. \textit{Middle:} We tokenize entities into symbolic sequences. \textit{Right:} The mapping rule connects $\Sc$, $\Ac_1$, and $\Ac_2$, where each $s\in\Sc_i$ associates with a unique fact $b_i\in\Ac_1$ and corresponding implication $c_i\in\Ac_2$.
    }
    \label{fig:data-diagram}
\end{figure}

Overall, our contribution can be summarized as follows: 
\begin{itemize}[leftmargin=1em]
    \item In \Cref{sec:llm}, we empirically verify that OCR can lead to both generalization and hallucination in LLMs, depending on whether the two relations are causally related. 
    \item In \Cref{sec:one-layer}, we formalize OCR as a symbolic factual recall task (\Cref{fig:data-diagram}) following \citet{nichani2024understanding} and empirically find that a one-layer single-head attention-only transformer with separate output and value matrices is able to solve OCR, while the reparameterized model with combined output-value weights cannot.
    \item In \Cref{sec:theory_one_layer}, we present a key theoretical difference in optimizing a non-factorized model $\mOV = \mOutput \mValue^\top$ versus the factorized one $(\mOutput, \mValue)$ for one-layer transformers based on the implicit bias of gradient flow, which explains the distinction in OCR capability. Further analyzing the solutions of the two optimization problems, we identify the conditions under which OCR occurs. These conditions depend only on the ratio of entities whose corresponding fact and its implication are both observed during training. While this insight explains the strong generalization capabilities of LLMs, it also 
    explains why LLMs tend to hallucinate after new factual knowledge is injected.

\end{itemize}

\subsection{Related works}

\paragraph{Out-of-context reasoning.} Previous work study LLM's out-of-context reasoning capability through many aspects, such as out-of-context meta-learning~\citep{krasheninnikov2023out}, situational awareness~\citep{berglund2023taken}, knowledge manipulations~\citep{allen2023physics}, etc. While negative results are reported on LLM's performance of certain OCR tasks such as the reversal curse~\citep{berglund2023reversal} and multi-hop in-weight reasoning~\citep{yang2024large,biran2024hopping}, recent work~\citep{feng2024extractive} shows that LLM can associate two events when several subjects are involved in both events in the training data. While \citet{feng2024extractive} empirically analyzes the underlying mechanism, our work theoretically analyzes how transformers learn this OCR task. 
Recently, \citet{peng2025linear} shows that there exists a linear transformation in the logits for predicting two related pieces of knowledge in LLMs, which can be used to gauge the model's generalization/hallucination capability. Our results also echo previous empirical findings that LLMs tend to hallucinate when they learn new factual knowledge during fine-tuning~\citep{gekhman2024does,kang2024unfamiliar,sun2025new}. Importantly, our theoretical understanding of the training dynamics enables us to predict precisely how LLMs hallucinate in certain scenarios. 
 
\paragraph{Training dynamics of transformers.} 
Extensive research have investigated the optimization of transformer-based models~\citep{jelassi2022vision,bietti2023birth,mahankali2023one,fu2023can,tian2023scan,tian2023joma,zhang2024trained,li2024mechanics,huang2024context,guo2024active}. 
In particular, recent works focus on understanding the transformer's behavior on various reasoning tasks through the lens of training dynamics. 
For example, previous studies have explored the emergence of induction heads \citep{boix2023transformers}, factual recall \citep{nichani2024transformers}, the reversal curse \citep{zhu2024towards}, chain-of-thought reasoning \citep{wen2024sparse}, and in-context two-hop reasoning \citep{guo2025llms}. Building on this, our theoretical analysis of a factual recall task shows that a one-layer transformer's reasoning ability is significantly affected by the reparameterization of its value and output matrices. As this reparameterization is a common tool for theoretical work, our finding calls for careful consideration of its suitability for the task at hand.

\paragraph{Implicit bias. } A rich line of literature has studied the implicit bias of gradient descent in classification tasks, which connects problems with logistic or exponentially-tailed loss to margin maximization \citep{soudry2018implicit,gunasekar2018implicit,gunasekar2018geo,lyu2019gradient, nacson2019convergence,nacson2019lexicographic,ji2019nonsep,vardi2022margin}. Building on foundational results from \citet{lyu2019gradient, vardi2022margin}, our work characterizes the solution to SVM programs to understand generalization and hallucination of LLMs, whereas most prior works only focus on the optimization landscape of neural networks. There are also many works exploring this connection in attention-based models \citep{tarzanagh2023transformers,ataee2023max,li2024mechanics,ildiz2024self,sheen2024implicit, vasudeva2024implicit}. \citet{tarzanagh2023transformers, sheen2024implicit} are the closest to our work and investigate a similar reparameterization for query and key matrices, where the gradient descent implicitly minimizes the nuclear norm of the combined weights. Recently, \citet{zhang2025training} demonstrates that the two parameterizations lead to distinct optimization trajectories in in-context learning (ICL): non-factorized models exhibit abrupt loss drops, whereas factorized models show stage-like dynamics performing incremental principal component regression. 
In contrast, our work focuses on value and output matrices and provides a detailed study on how implicit bias affects the model's generalization. {Our work also links to studies investigating out-of-distribution (OOD) generalization through the lens of implicit bias \citep{abbe2022learning, abbe2024generalization}.
Finally, our findings regarding the factorized model with $(\mOutput, \mValue)$-parameterization are grounded in the extensive literature on the implicit bias of gradient descent on matrix factorization \citep{gunasekar2017implicit, li2018algorithmic, arora2019imp, li2020towards, razin2020implicit,stoger2021small}}.

\section{OCR in LLMs} \label{sec:llm}
To verify that OCR can induce both generalization and hallucination in LLMs, we conduct experiments on a synthetic dataset on five popular models, i.e., Gemma-2-9B, OLMo-7B, Qwen-2-7B, Mistral-7B-v0.3, and Llama-3-8B.

\paragraph{Setup.}Following \citet{feng2024extractive}, we construct a synthetic dataset to analyze generalization versus hallucination.  We take the subject set $\Sc$ to be a list of fictitious names and pair $5$ facts from a set $\Ac_1$ with $5$ implications from a set $\Ac_2$. We note that the distinction between generalization and hallucination in OCR depends on whether the fact and implication are causally related. We consider five associations, i.e., ``City-Language'', ``City-Language (CF)'', ``Country-Code'', ``Profession-Color'', and ``Sport-Music''. The ``City-Language'' association utilizes real-world knowledge (e.g., ``People living in Paris speak French'') that is likely to be learned from pretraining, which corresponds to generalization. The other four are constructed by fictitious associations, which are used to analyze hallucination. Specifically, for the ``City-Language (CF)'' relation pair, we create \emph{counterfactual} association by re-pairing each city with an incorrect language from the original set. For example, ``Paris'' might be mapped to ``Japanese''. A complete dataset description and training details are provided in \Cref{app:imp}.

We partition $\Sc$ into $\Sc = \bigcup_{i = 1}^5 \Sc_i$ (which will be further discussed in \Cref{sec:task} in detail) and randomly assign a distinct fact-implication pair (or equivalently, a $(b_i, c_i)$ pair) to each subset $\Sc_i$. We then create training and test sets for each subset by splitting  its subjects with a $0.2$ training ratio, resulting in $20\%$ training subjects and $80\%$ test subjects. The training set contains facts for all subjects and implications only for training subjects. We then evaluate the model on the implications of the test subjects.
Similar to \citet{feng2024extractive}, we use the mean-rank as our evaluation metrics. This is the average rank of the ground-truth implication among all possible candidates in $\Ac_1 \cup \Ac_2$, sorted by prediction probability. A lower mean-rank indicates better performance.

\paragraph{Results.}
\Cref{tab:synthetic_performance} presents the evaluation results for predicting implications for test subjects. The findings reveal a ``double-edged sword'' characteristic of OCR. On the one hand, when a fact and implication are causally related, the models exhibit strong generalization, consistent with \citet{feng2024extractive}. On the other hand, this same associative ability makes them prone to hallucination, as they also tend to learn to connect concepts that have no causal relationship. 

This behavior is remarkably efficient. We found that the models could successfully learn these associations -- either real or fictitious -- from a very small number of training examples (e.g., four training subjects in each subset). This suggests that the capability for strong generalization and the vulnerability to hallucination may stem from the same underlying learning mechanism.  {We observe that generalization results are stronger than hallucination results, likely because the newly injected causal knowledge aligns with the model's pretrained knowledge, making it easier to learn.}
We note that the dual nature of generalization and hallucination is also empirically founded in \citet{peng2025linear}. Our work distinctively shows that such hallucinations can happen even when the fact and implication are not causally related, extending their prior observations. In \Cref{app:pop-qa}, we further verify our findings in real-world data.
\begin{table}[tb]
\begin{center}
\caption{\small{
Performance comparison of different language models on synthetic reasoning tasks with various associations. 
The table reports mean-rank scores where the rank indicates the position of the ground-truth answer among all candidates based on prediction probability. 
Lower ranks indicate better performance and Rank 0 refers to the token with the largest probablity.
Values in parentheses indicate the standard error of the mean-rank scores, calculated from 3 runs with different random seeds.
} 
}
\vspace{1mm}
\small
\setlength{\tabcolsep}{4pt}
\begin{tabular}{@{}c|c|c cccc@{}}
\toprule[1pt]
\midrule
\multicolumn{1}{c|}{\multirow{2}{*}{\textbf{Models}}}
  & \multicolumn{1}{c}{\textbf{Generalization}}
  & \multicolumn{4}{c}{\textbf{Hallucination}} \\
\cmidrule(lr){2-2} \cmidrule(lr){3-6}
  & \footnotesize{City–Language}
  & \footnotesize{City–Language (CF)}
  & \footnotesize{Country–Code}
  & \footnotesize{Profession–Color}
  & \footnotesize{Sport–Music}\\ 
\midrule
\textbf{Gemma‐2‐9B}     & 0.00 (0.00) & 0.19 (0.20) & 0.19 (0.07) & 1.64 (0.01) & 0.56 (0.01) \\
\textbf{OLMo‐7B}        & 0.07 (0.03) & 1.33 (0.49) & 0.15 (0.13) & 1.84 (0.23) & 0.17 (0.01) \\
\textbf{Qwen‐2‐7B}      & 0.13 (0.01) & 4.55 (2.33) & 3.63 (1.10) & 0.82 (0.34) & 0.40 (0.08) \\
\textbf{Mistral‐7B‐v0.3}& 0.00 (0.00) & 2.10 (0.01) & 1.48 (0.52) & 1.15 (0.56) & 1.28 (0.13) \\
\textbf{Llama‐3‐8B}     & 0.00 (0.00) & 1.18 (0.61) & 0.77 (0.10) & 0.93 (0.21) & 0.63 (0.22) \\
\bottomrule
\end{tabular}
\label{tab:synthetic_performance}
\end{center}
\end{table}

\vspace{-2mm}

\section{One-Layer Attention-Only Transformers can Do Symbolic OCR} \label{sec:one-layer}
\subsection{Setup} \label{sec:task}

\paragraph{Basic notations.} For any integer $N > 0$, we use $[N]$ to denote the set $\{1, 2, \ldots, N\}$. Let $\Vc = [M]$ be the vocabulary of size $M = |\Vc|$. We use lower-case and upper-case bold letters (e.g., $\ab, \A$) to represent vectors and matrices. Let $\eb_i \in \R^{M}$ be a one-hot vector, i.e., the $i$-th entry of $\eb_i$ is $1$ while others are zero.
 
\paragraph{Task structures.} 
Let $\Sc$ be a set of subject tokens and $\Rc := \{r_1, r_2\}$ be a set of relation tokens. Let $\Ac$ be the set of answer tokens and $a^*: \Sc \times \Rc \to \Ac$ be the mapping from subject-relation tuples $(s,r)$ to the corresponding answer $a^*(s,r)$\footnote{We consider a many-to-one mapping here where multiple $(s,r)$ can correspond to the same answer. For example, $s_1 = \text{``Alice''}, s_2 = \text{``Bob''}, r = \text{``lives in''}, a^*(s_1, r) = a^*(s_2,r) = \text{``France''}$.}. In \Cref{fig:data-diagram}, $\Sc$ is taken to be a list of names and $\Rc$ corresponds to ``lives in'' and ``speaks'' respectively.
We split the answers into two disjoint subsets: 
\[
\Ac_1=\{\,b_1,\dots,b_{n}\},\quad
\Ac_2=\{\,c_1,\dots,c_{n}\}, 
\quad
\Ac=\Ac_1\cup\Ac_2,
\]
where $\Ac_1$ is the set corresponding to fact answers,  $\Ac_2$ is the set corresponding to implication answers, and $|\Ac_1|=|\Ac_2|=n$. Finally, we assume a one‑to‑one correspondence from $b_i$ to $c_i$ for any $i \in [n]$,
such that whenever $a^*(s,r_1)=b_i$, it also holds that $a^*(s,r_2)=c_i$. 

\paragraph{Dataset constructions.}
Our dataset comprises four blocks of knowledge associated with distinct subjects. Let $\Sc = \Sc_{\train} \cup \Sc_{\test}$ where $\Sc_{\train}$ and $\Sc_{\test}$ are disjoint.
\begin{enumerate}[leftmargin=20pt, itemsep=1.5pt]
    \item \textbf{Facts in $\Sc_{\train}$:} $\mathcal{D}_{\train}^{(b)} = \{(s, r_1, b) : s \in \mathcal{S}_{\train}\}$;
    \item \textbf{Implications in $\Sc_{\train}$:} $\mathcal{D}_{\train}^{(c)} = \{(s, r_2, c) : s \in \mathcal{S}_{\train}\}$; \hfill 
    \item \textbf{Facts in $\Sc_{\test}$:} $\mathcal{D}^{(b)}_{\test} = \{(s, r_1, b) : s \in \mathcal{S}_{\test}\}$;
    \item \textbf{Implications in $\Sc_{\test}$:} $\mathcal{D}^{(c)}_{\test} = \{(s, r_2, c) : s \in \mathcal{S}_{\test}\}$.
\end{enumerate}
We construct the training and test data with 
$$\Dc_\train = \Dc_\train^{(b)} \cup \Dc_\train^{(c)}\cup \Dc_\test^{(b)}, \quad \quad \Dc_\test = \Dc_\test^{(c)}.$$

\paragraph{Subject enumeration and tokenization.} For any given subject set $\mathcal{S}$, $\mathcal{S}_{\train}$, or $\mathcal{S}_{\test}$, we partition the subjects into $n$ disjoint subsets based on their corresponding value of $a^\star(s,r)$. For every $i \in [n]$, we assign the subjects in $\Sc_i$ with fact $b_i$ and implication $c_i$. We assume these partitions are equally sized.
Specifically, we partition the training set as $\mathcal{S}_{\train} = \bigcup_{i=1}^n \mathcal{S}_{i,\train}$ where $|\mathcal{S}_{i,\train}|=m_{\train}$ for all $i$. Similarly, we partition the test set as $\mathcal{S}_{\test} = \bigcup_{i=1}^n \mathcal{S}_{i,\test}$ where $|\mathcal{S}_{i, \test}|=m_{\test}$ for all $i$. The complete subject set follows as $\mathcal{S} = \bigcup_{i=1}^n \mathcal{S}_{i}$ where $\mathcal{S}_i = \mathcal{S}_{i,\train} \cup \mathcal{S}_{i,\test}$ and $|\mathcal{S}_{i}|=m=m_{\train}+m_{\test}$ for all $i$.

Using this partition structure, we can enumerate all subjects in $\mathcal{S}$ as $\{s_{i,j}: i\in [n], j\in[m]\}$. Within each partition $\Sc_i$, the first $m_{\train}$ subjects belong to the training set ($s_{i,j}\in \mathcal{S}_{i,\train}$ for $1\leq j\leq m_{\train}$), while the remaining subjects belong to the test set ($s_{i,j}\in \mathcal{S}_{i,\test}$ for $m_{\train}+1 \leq j \leq m_{\train}+m_{\test}$).
We order the subjects by cycling through partitions for each $j$: $s_{1,1}, s_{2,1}, \ldots, s_{n,1}, s_{1,2}, s_{2,2}, \ldots, s_{n,2}, \ldots, s_{1,m}, s_{2,m}, \ldots s_{n,m}$. Each subject $s_{i,j}$ is then tokenized according to this order.

\paragraph{Sequence structures.} We take the vocabulary to be $\Vc := \Sc \cup \Rc \cup \Ac \cup \{\eos\}$ where $\eos$ is the ``end-of-sequence'' token. 
Each sequence has the form $z_{1:(T+1)} = [s, r, \eos, a^*(s,r)]$. 
By default, the task is to predict $z_{T+1} $ from $z_{1:T}$ as illustrated in \Cref{fig:data-diagram}.

\paragraph{Transformer architectures.} We consider a decoder-only transformer which maps a length $T$ sequence $z_{1:T} := [z_1, \dots, z_T] \in \Vc^T$ to a $d$-dimensional vector which is used to generate the next token $z_{T+1}$. For any token $z \in [M]$, we also use the corresponding one-hot vector $\z = \eb_z \in \R^M$ to represent it and thus we can define $\X = [\eb_{z_1}, \eb_{z_2}, \ldots \eb_{z_T}]^\top \in \mathbb{R}^{T \times M}$ where the $i$-th row of $\X$ $\x_i = \eb_{z_i}$ for $i \in [T]$. We take the hidden dimension to be the same as the vocabulary size, i.e., $d = M$. 
Throughout the work, we consider a one-layer linear attention model following \citet{mahankali2023one,nichani2024understanding, zhang2024trained}.
\begin{equation}\label{eqn:def-factorized}
    \text{Factorized model: }f_\bte(\X) = \mOutput \mValue^\top \X^\top \X \mKQ \x_T \in \mathbb{R}^d,
\end{equation}
where $\mOutput, \mValue \in \R^{d \times d_h}$ are the output and value matrices, respectively, and we reparameterize the key-query matrices by $\mKQ = \mKey\mQuery^\top \in \mathbb{R}^{d\times d}$ in line with \citet{tian2023scan,zhu2024towards}. We denote by $\bte = (\mKQ, \mOutput, \mValue)$ the summary of model parameters. Additionally, we consider a non-factorized model: $\tilde \bte = (\mKQ, \mOV)$ by further combining the output and value matrices as $\mOV = \mOutput \mValue^\top$.
\begin{equation}\label{eqn:def-non-factorized}
\text{Non-factorized model: }f_{\tilde \bte}(\X) = \mOV \X^\top \X \mKQ \x_T \in \mathbb{R}^d.
\end{equation}
\paragraph{Loss functions.}
Let $p_\bte(z | z_{1:T} )$ be the next-token prediction probability, i.e.,
\begin{align} \label{eq:ntp}
    p_\bte(z | z_{1:T} ) := \frac{\exp (\eb_z^\top f_\bte(\X))}{\sum_{z' \in \Ac} \exp (\eb_{z'}^\top f_\bte(\X))} = \frac{\exp (f_{ \bte}(z_{1:T}, z))}{\sum_{z' \in \Ac} \exp (f_{ \bte}(z_{1:T}, z')))},
\end{align}
where we denote $f_{ \bte}(z_{1:T}, a) = \eb_{a}^\top f_\bte(\X)$ as the logit of token $a$ for $a \in \Ac$. We also use $f_\bte((s,r), a)$ to represent the logit if $(s,r) \in z_{1:T}$.
We consider training the model with cross-entropy loss
\begin{equation} \label{eq:erm}
    \lTrain(\bte) = \mathbb{E}_{z_{1:T+1} \sim \Dc_{\train}} [-\log p_\bte(z_{T+1} | z_{1:T} )],
\end{equation}
which we optimize by running gradient flow, i.e., $\dot \bte = - \nabla \lTrain(\bte)$. We omit the subscript and use $\Lc(\bte) := \lTrain(\bte)$ when the context is clear. Finally, we evaluate the model on the test set $\Dc_\test$ using the same loss function
\begin{equation} \label{eq:test}
    \lTest(\bte) = \mathbb{E}_{z_{1:T+1} \sim \Dc_{\test}} [-\log p_\bte(z_{T+1} | z_{1:T} )].
\end{equation}
The corresponding definitions for the non-factorized model are obtained by substituting $\bte$ with $\tilde \bte$.
\begin{figure}[!t]
    \centering
    \subfigure{\includegraphics[width=0.95\textwidth]{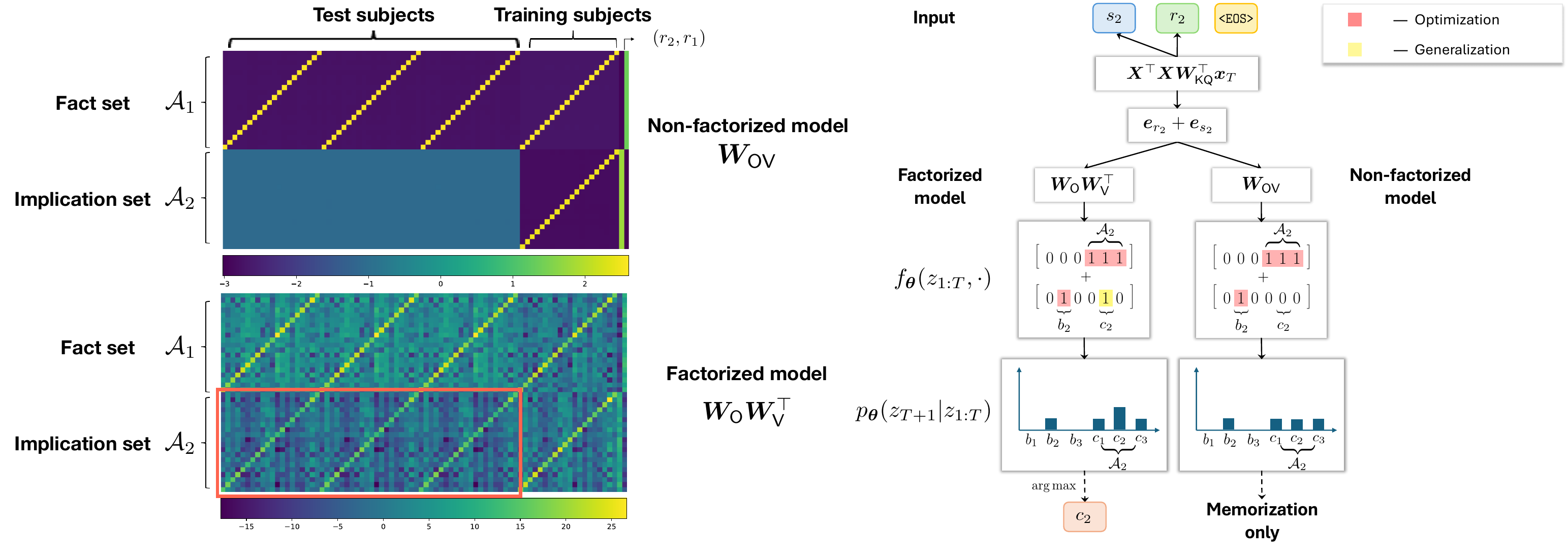}
    }
    \caption{\textbf{The weights and mechanisms of the trained one-layer attention models.} The heatmaps on the left show that the factorized model (\textit{bottom}) learns a structured weight matrix that enables OCR, as highlighted by the red box. The non-factorized model (\textit{top}) fails to learn this structure. Here, the weights shown are the partial weights in the output-value matrix related to the prediction, i.e., we show a reduced matrix $\mOV \in \R^{|\Ac| \times (mn + 2)}$. The diagram on the right illustrates how this structural difference leads to different outcomes. The task is to predict $c_2 \in \Ac_2$ given input $z_{1:T}$ with $(s_2, r_2)$, where the atomic knowledge $(s_2, r_2, c_2)$ is not included in the training set. }
    \label{fig:weight-one-layer-linear}
    \vspace{-1em}
\end{figure}

\subsection{Experiments and Observations} \label{sec:one-layer-exp}

\paragraph{Training and test results.} 
We compare the factorized model \eqref{eqn:def-factorized} and non-factorized model \eqref{eqn:def-non-factorized} {by training both models using orthogonal embeddings with $|\Sc| = 80, n = 20, m = 4, m_\pt = 1$}, and $d = d_h = 128$. We use \eqref{eq:erm} as the training loss and \eqref{eq:test} as the test loss. Both models achieve zero training loss. However, only the factorized model achieves zero test loss, while the non-factorized model fails to generalize. Further experimental details are available in Appendix \ref{app:sec_exp_one_layer} where we provide the training and test loss curves (\Cref{fig:loss-one-layer}) and demonstrate that the factorized model generalizes effectively even with the intrinsic dimension as small as $d_h = 4$ (\Cref{fig:weight-one-layer-rank}).

\paragraph{Mechanism analysis.} 
\Cref{fig:weight-one-layer-linear} (left) visualizes the learned weights $\mOV$ and $\mO \mV^\top$ after training. The non-factorized model learns zero weights in the ``test-implication'' block of the output-value matrix, whereas the factorized model exhibits similar weight patterns across both training and test blocks. The right side of \Cref{fig:weight-one-layer-linear} illustrates the underlying mechanism, showing how the factorized architecture solves OCR through generalization while the non-factorized parameterization can only memorize the training data.

\section{Theoretical Results}
\label{sec:theory_one_layer}

 In this section, we conduct a detailed theoretical analysis to unveil the distinction in optimizing the one-layer attention model with two different parameterizations. We begin by assuming a fixed attention pattern and then extend to trainable $\mKQ$ matrices. Our main finding is that the factorized $(\mO, \mV)$ matrix induces implicit regularization with the nuclear norm, which prevents the ``test-implication'' block from collapsing to zero weights, thereby enabling OCR capabilities.

\subsection{Implicit Bias Explains the Distinction in OCR Abilities}
\label{subsec:implicit_bias}
We first fix the attention weights, assuming that the subject $s$ and relation $r$ always get the same attention weight.
The trainable parameters in the two models become $\bte = (\mOutput, \mValue)$ and $\tilde \bte = \mOV$. For the logit function given any input $(s,r)$ and $a \in \Ac$, we have 
$$f_{\bte}((s,r), a) = [\mOutput \mValue^\top](a,s) + [\mOutput \mValue^\top](a,r)\text{ and }f_{\tilde \bte}((s,r), a) = \mOV(a,s) + \mOV(a,r).$$

Despite their different parameterizations, the factorized model $(\mOutput, \mValue)$ and non-factorized model $\mOV$ have identical expressivity. Proposition \ref{prop:expressivity} formalizes this equivalence.

\begin{proposition}[Equivalent expressivity for $(\mOutput, \mValue)$ and $\mOV$] \label{prop:expressivity} 
Suppose $d_h \ge d$. The factorized parameterization $\bte = (\mOutput, \mValue)$ with $\mOutput,\mValue \in \R^{d \times d_h}$ has equivalent expressive power to the non-factorized parameterization $\tilde \bte = \mOV$ with $\mOV \in \R^{d \times d}$. Specifically, for any factorized model $\bte$, there exists an equivalent non-factorized model $\tilde \bte$, and vice versa, such that they yield identical training and test losses as defined in \eqref{eq:erm} and \eqref{eq:test}.
\end{proposition}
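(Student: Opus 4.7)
\begin{proofof}{Proposition~\ref{prop:expressivity} (sketch of plan)}
The plan is to observe that both models depend on their parameters only through the $d \times d$ \emph{effective matrix} $\mOV$ (for the non-factorized model, literally $\mOV$; for the factorized model, the product $\mOutput \mValue^\top$), and then to argue that every $d\times d$ matrix admits a factorization of the required shape as soon as $d_h \ge d$. I would set this up in two short steps.

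\emph{Step 1: Reduce both losses to functionals of a single $d\times d$ matrix.} Using the given logit expressions
\[
f_{\bte}((s,r), a) = [\mOutput \mValue^\top](a,s) + [\mOutput \mValue^\top](a,r), \qquad
f_{\tilde \bte}((s,r), a) = \mOV(a,s) + \mOV(a,r),
\]
I would define, for any $\mathbf{M}\in\R^{d\times d}$, a common logit functional $g_{\mathbf{M}}((s,r),a) := \mathbf{M}(a,s) + \mathbf{M}(a,r)$. Substituting into \eqref{eq:ntp} and the cross-entropy losses \eqref{eq:erm}, \eqref{eq:test} shows that both $\lTrain$ and $\lTest$ depend on $\bte$ only through $\mOutput\mValue^\top$, and on $\tilde\bte$ only through $\mOV$. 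Thus it suffices to prove that the set of reachable effective matrices is the same for the two parameterizations.

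\emph{Step 2: Show the reachable sets coincide.} The $(\mOutput,\mValue)\mapsto \mOutput\mValue^\top$ direction is trivial: given $\bte=(\mOutput,\mValue)$, set $\tilde\bte := \mOutput\mValue^\top \in \R^{d\times d}$, yielding equal losses. For the converse, given $\tilde\bte=\mOV\in\R^{d\times d}$, I would produce a factorization via a thin SVD: write $\mOV = U \Sigma V^\top$ with $U,V\in\R^{d\times d}$ orthogonal and $\Sigma\in\R^{d\times d}$ diagonal, and then define
\[
\mOutput := \bigl[\,U\Sigma \;\; \mathbf{0}_{d\times(d_h-d)}\bigr] \in \R^{d\times d_h}, \qquad
\mValue := \bigl[\,V \;\; \mathbf{0}_{d\times(d_h-d)}\bigr] \in \R^{d\times d_h}.
\]
A direct block-multiplication check gives $\mOutput\mValue^\top = U\Sigma V^\top = \mOV$, so the corresponding factorized model matches $\tilde\bte$ at every logit and hence at every loss value.

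Combining the two directions establishes a surjection in both directions between parameterizations at the level of effective matrices, which yields the claimed equality of $\lTrain$ and $\lTest$. I do not anticipate a serious obstacle here; the only place that genuinely uses the hypothesis $d_h\ge d$ is the padding in Step 2, where $d_h<d$ would restrict $\mOutput\mValue^\top$ to matrices of rank at most $d_h<d$ and break surjectivity. It is worth noting explicitly in the write-up that this equivalence is purely about \emph{expressivity}; the subsequent sections will show that the two parameterizations behave very differently under gradient flow, which is what ultimately produces the gap in OCR performance.
\end{proofof}
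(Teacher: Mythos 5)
Your proof is correct and follows essentially the same route as the paper: the forward direction is the trivial product, and the converse uses an SVD of $\mOV$ padded out to width $d_h$. The only cosmetic difference is that the paper splits the singular values symmetrically, taking $\mOutput = \Ub\bSi^{1/2}\Qb$ and $\mValue = \Vb\bSi^{1/2}\Qb$ with a padding matrix $\Qb = [\Ib_r\ \mathbf{0}]$, whereas you put all of $\Sigma$ into $\mOutput$; both choices give $\mOutput\mValue^\top = \mOV$ and the argument is otherwise identical.
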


The proof is provided in \Cref{app:subsec_expressivity}. 
Before proceeding to the analysis of training dynamics, we state \Cref{assum:gf-kkt}, which provides the necessary regularity conditions.

\begin{assumption}\label{assum:gf-kkt}
We assume the following conditions:
\begin{enumerate}[label={\textnormal{{\textbf{2.\arabic*}}}}, wide, labelwidth=!,itemindent=!, labelindent=1pt]
\item \textbf{Regularity:} For any fixed $z_{1:T}$, the logit function $f_\bte(z_{1:T}, \cdot) \in \R^{|\Ac|}$ is locally Lipschitz and differentiable, which means that for every $\x_0$ in its domain, there exists a neighborhood $N(\x_0)$ such that $f_\bte(\x)$ is Lipschitz continuous when restricted to $N(\x_0)$.
\label{assump:reg}
\item \textbf{Separability:} When optimizing either the non-factorized model $\W = \mOV$ or factorized model $\W = (\mOutput, \mValue)$, there exists time $t_0$ such that $\Lc(\W(t_0)) < 1$. \label{assump:sep}
\end{enumerate}
\end{assumption}

\begin{remark} 
Assumption~\ref{assump:reg} holds for both parameterizations: the non-factorized model $\tilde \bte$ has a linear logit function, hence is Lipschitz; the factorized model has a bilinear logit function that is locally Lipschitz when $\bte$ is bounded (following \Cref{lemma:lipschitz}). Assumption~\ref{assump:sep} holds when $d, d_h \geq 3$ by extending Theorem 5 from \citet{nichani2024understanding}.
\end{remark}

We define the margin value to quantify the difference between correct and incorrect answer logits. Given a model with parameter $\W$ and any $(s,r)$ pair, let $a^*(s,r)$ denote the correct answer token. For any incorrect answer token $a^\prime \in \mathcal{A}\setminus \{a^*(s,r)\}$, the margin between $a^*(s,r)$ and $a^\prime$ is:
\begin{equation}\label{eqn:margin}
h_{(s,r),a'}(\W) = f_{\W}((s,r), a^*(s,r)) - f_{\W}((s,r), a^\prime).
\end{equation}
For instance, when the model outputs logits $f_{\W}((s,r), \cdot) =[0,\ldots, 1, 0, \ldots, 0]^\top \in \R^{|\Ac|}$ with only the $a^*(s,r)$ entry equal to $1$, we have $h_{(s,r),a'}(\W) = 1$ for any $a^\prime \in \mathcal{A}\setminus \{a^*(s,r)\}$. Given training loss \eqref{eq:erm}, \Cref{thm:svm} builds the connection between model weights $\W$ and solutions to an SVM problem.

\begin{theorem}
[SVM forms]  \label{thm:svm} \label{lem:equivalent_factorize_nuclear} Let $\W^*$ be an optimal solution of \eqref{eq:ov-svm} with $\rank(\W^*) = r$. Assume $d_h \geq r$. Consider gradient descent with a small enough learning rate or gradient flow on the training loss \eqref{eq:erm}. We have:
\begin{enumerate}
\item For factorized models with $\bte = (\mOutput, \mValue)$, any limit point of ${\bte}/{\|\bte\|_2}$ is along the direction of a KKT point of a program which has the same solutions for $\mOV^{\textup{F}} :=\mOutput\mValue^\top$ as the following program, where $\|\cdot\|_{\star}$ denotes the nuclear norm:
\begin{tcolorbox}[colback=white!5!white,colframe=black!5!black,colback=green!1!white, left=2pt, right=2pt, top=2pt, bottom=2pt]
\vspace{-8pt}
\begin{equation}\tag{$\mOV^{\textup{F}}$-SVM}
\label{eq:ov-svm}
\min_{\mOV^{\textup{F}}} \frac{1}{2}(\| \mOV^{\textup{F}} \|_{\star}^2)   \; \; \text{s.t.} \; h_{(s,r),a'}(\mOV^{\textup{F}}) \geq 1,  \forall (s,r) \in \Dc_\train, \ \forall a' \in \Ac \backslash \{a^*(s,r)\}.
\end{equation}
\end{tcolorbox}
\item For non-factorized models $\mOV$, any limit point of ${\mOV}/{\|\mOV\|_F}$ is along the direction of a global minimum of the following SVM problem, where $\|\cdot\|_{F}$ denotes the Frobenius norm:
\begin{tcolorbox}[colback=white!5!white,colframe=black!5!black,colback=green!1!white, left=1pt, right=2pt, top=2pt, bottom=2pt]
\vspace{-8pt}
\begin{equation}\tag{$\mOV$-SVM} \label{eq:w-svm}
\min_{\mOV} \frac{1}{2}(\| \mOV \|_F^2)   \; \; \text{s.t.} \; h_{(s,r),a'}(\mOV) \geq 1,  \forall (s,r) \in \Dc_\train, \ \forall a' \in \Ac \backslash \{a^*(s,r)\}.
\end{equation}
\end{tcolorbox}
\end{enumerate}
\end{theorem}

Interestingly, training the factorized model leads to an SVM problem minimizing the nuclear norm, while the non-factorized model leads to the Frobenius norm. 
The proof is deferred to \Cref{app:subsec_svm}.
Heuristically, \Cref{thm:svm} is an example of the implicit bias of the gradient descent. \eqref{eq:w-svm} could be derived directly from the homogeneous property of one-layer models. The objective of \eqref{eq:ov-svm} initially has the form $\min_{\mOutput, \mValue} (\|\mOutput\|_F^2+\|\mValue\|_F^2)/2$. {Using the connection between the nuclear norm and Frobenius norm that $\|\mOV^{\textup{F}}\|_{\star}^2 = \min_{\{\mOutput\mValue^\top=\mOV^{\textup{F}}\}} (\|\mOutput\|_F^2+\|\mValue\|_F^2)/2$}, we could derive \eqref{eq:ov-svm} as proved in \Cref{lem:equivalent_factorize_nuclear}.

More surprisingly, the SVM problems in Theorem~\ref{thm:svm} have closed form solutions. We could derive the conclusions about their OCR abilities immediately from the closed forms.

\begin{theorem}[The OCR abilities of the factorized and non-factorized models] \label{thm:ocr} Let $n>1$.
\noindent \begin{itemize}
\item Suppose $\mOV^\textup{F}$ is a solution to the SVM problem in \eqref{eq:ov-svm}. We have that for any $(s,r)\in\Dc_\test$ and $a' \in \Ac \setminus \{a^*(s,r)\}$, given regularity conditions, it holds that
\begin{tcolorbox}[colback=white!5!white,colframe=black!5!black,colback=green!1!white, left=2pt, right=2pt, top=2pt, bottom=2pt]
\vspace{-10pt}
\begin{align}\label{eq:ov-svm-ocr}
h_{(s,r),a^\prime}(\mOV^\textup{F}) \geq \min \{ \sqrt{m_\pt/ m_\test}, 1\}, \text{ indicating the OCR ability.}
\end{align}
\end{tcolorbox}
\item Suppose $\mOV$ is a solution to the SVM problem in \eqref{eq:w-svm}. We have that for any $(s,r)\in\Dc_\test$, and any $a' \in \Ac_2 \setminus \{a^*(s,r)\}$, it holds that
\begin{tcolorbox}[colback=white!5!white,colframe=black!5!black,colback=green!1!white, left=2pt, right=2pt, top=2pt, bottom=2pt]
\vspace{-5pt}
\begin{equation}\label{eq:w-svm-ocr}
h_{(s,r),a'}(\mOV) = 0, \text{ indicating no OCR ability.}
\end{equation}
\end{tcolorbox}
\end{itemize}

\end{theorem}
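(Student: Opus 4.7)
My plan is to handle the two parts with different convex-analytic tools: strict convexity plus KKT for Part 2, and an explicit low-rank candidate together with a dual certificate for Part 1.

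For Part 2, strict convexity of $\|\cdot\|_F^2$ makes the minimizer $\mOV^*$ of \eqref{eq:w-svm} unique, and stationarity of the Lagrangian gives the entrywise representation
\[
\mOV^{*}(p,q)=\sum_k \lambda_k^{*}\bigl[\one{a_k^{*}=p}-\one{a_k'=p}\bigr]\bigl[\one{s_k=q}+\one{r_k=q}\bigr],
\]
summed over all training constraints $(s_k,r_k,a_k^{*},a_k')$. Fix a test pair $(s,r_2)$ with $s\in\Sc_{i,\test}$ and a competitor $c_j$ with $j\ne i$. The full partition-permutation symmetry of $\Dc_\train$, which simultaneously permutes the tuples $\{(b_k,c_k,\Sc_{k,\train},\Sc_{k,\test})\}_{k=1}^n$ while preserving the constraint set, combined with uniqueness of $\mOV^*$, forces $\mOV^{*}(c_k,r_2)$ and $\mOV^{*}(c_k,r_1)$ to be constant in $k$. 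Because $s$ appears in $\Dc_\train$ only through constraints of the form $(s,r_1,b_i,a')$, the representation above collapses to $\mOV^{*}(c_k,s)=-\lambda_{(s,r_1),c_k}^{*}$, and the partial symmetry that fixes partition $i$ equalizes $\lambda_{(s,r_1),c_k}^{*}$ to a common value $\mu$ for all $k\ne i$, with $\lambda_{(s,r_1),c_i}^{*}=:\nu$ kept separate. Complementary slackness then forces $\mu=\nu$: if $\mu>0$ then $(s,r_1,b_i,c_j)$ is tight, and combining tightness with primal feasibility of $(s,r_1,b_i,c_i)$, together with $\mOV^{*}(c_k,r_1)$ independent of $k$ and $\mOV^{*}(c_k,s)=-\lambda_{(s,r_1),c_k}^{*}$, yields $\nu\ge\mu>0$; the reverse direction is symmetric. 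Combining $\mu=\nu$ with $\mOV^{*}(c_i,r_2)=\mOV^{*}(c_j,r_2)$ gives $h_{(s,r_2),c_j}(\mOV^{*})=0$.

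For Part 1, my plan is to construct an explicit structured candidate and verify optimality through a dual certificate. Motivated by the low-rank bias of the nuclear norm and by the need to distinguish training from test subjects in the $c_i$-row, the natural ansatz is
\[
\mOV^{F,\dagger}=\sum_{i=1}^n \eb_{b_i}\Bigl(\alpha\sum_{s\in\Sc_i}\eb_s+\beta\eb_{r_1}\Bigr)^{\!\top} + \sum_{i=1}^n \eb_{c_i}\Bigl(\gamma\sum_{s\in\Sc_{i,\train}}\eb_s+\delta\sum_{s\in\Sc_{i,\test}}\eb_s+\epsilon\eb_{r_2}\Bigr)^{\!\top},
\]
with nonnegative scalars $(\alpha,\beta,\gamma,\delta,\epsilon)$. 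The training margin conditions linearize in these scalars, and the nuclear norm decomposes into one contribution per partition plus a small rank-two correction supported on the columns $\{r_1,r_2\}$; within each partition the squared singular values are the eigenvalues of a $2\times 2$ Gram matrix whose determinant evaluates to $\alpha^2 m_\train m_\test(\gamma-\delta)^2$. Solving the resulting scalar optimization yields $\alpha=\beta=\gamma=\epsilon=1$ and $\delta=\min\{\sqrt{m_\train/m_\test},1\}$, producing a test margin of exactly $\min\{\sqrt{m_\train/m_\test},1\}$ against every competitor $a'\in\Ac\setminus\{c_i\}$. Global optimality will then follow by exhibiting a dual certificate $Z=\sum_k\lambda_k A_k$ with $\lambda_k\ge 0$ supported on active constraints, whose restriction to the row and column spaces of $\mOV^{F,\dagger}$ matches the partial isometry in its polar decomposition and whose spectral norm on the orthogonal complement is at most $\|\mOV^{F,\dagger}\|_{\star}$.

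The hard step will be the dual certificate in Part 1: because the nuclear norm is not strictly convex, its subdifferential at $\mOV^{F,\dagger}$ is a large affine set, and feasibility alone does not pin down optimality. One must simultaneously calibrate the dual multipliers so that $Z$ matches the polar-decomposition isometry on the image, satisfies the spectral bound on its complement, and remains a nonnegative combination of the active training-constraint matrices; these three requirements together force a delicate bookkeeping of the multipliers. Part 2, by contrast, is essentially elementary once strict convexity of $\|\cdot\|_F^2$ is invoked, because the KKT system then has an essentially unique solution that the symmetry argument pins down immediately.
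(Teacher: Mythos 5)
Your Part 2 argument is correct and takes a genuinely different route from the paper. Where the paper (Lemma~\ref{lemma:w-restrict} through Theorem~\ref{thm:f-norm-solution}) uses permutation averaging to force a block-structured form, then explicitly minimizes the Frobenius norm in closed form to find $g_1^\star=g_2^\star=0$, you instead combine strict convexity (uniqueness of the primal optimum) with the KKT stationarity identity $\mOV^*(c_k,s)=-\lambda^*_{(s,r_1),c_k}$, the constancy in $k$ of $\mOV^*(c_k,r_1)$ and $\mOV^*(c_k,r_2)$ forced by the partition-permutation symmetry, and complementary slackness to deduce $\mu=\nu$. Your observation that $\mOV^*(c_k,s)$ depends on no training constraint other than $(s,r_1,b_i,c_k)$ is the right key fact and makes the KKT route legitimate even though multipliers need not be unique in general. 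The two approaches are of comparable difficulty; yours avoids the SVD bookkeeping, while the paper's has the advantage of producing the full explicit minimizer as a byproduct.

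Your Part 1, however, has a genuine gap. The ansatz $\mOV^{F,\dagger}$ you propose is not a nuclear-norm minimizer, so no dual certificate will exist for it. The paper's optimum (Theorem~\ref{lemma:nuclear-norm-uniqueness}) has two structural features your candidate omits: each subject block is $\Ib_n-\Eb_n/n$ rather than $\alpha\Ib_n$, and the relation columns carry the balanced entries $(\onebb_n/2,\,-\onebb_n/2)$ in the $b$-rows and $(-\onebb_n/2,\,\onebb_n/2)$ in the $c$-rows, rather than your one-sided $(\beta\eb_{r_1},\mathbf 0)$ and $(\mathbf 0,\epsilon\eb_{r_2})$. Both corrections leave every margin $h_{(s,r),a'}$ unchanged — because margins are differences of logits and the $\Eb_n/n$ and symmetric $\pm 1/2$ terms contribute equally to both sides — yet they strictly reduce the nuclear norm by projecting out low-frequency directions. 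Concretely, take $m_\pt=m_\test=m/2$ and $\delta=1$: your candidate's nuclear norm evaluates to $\sqrt{2m+n}+\sqrt n+(n-1)\sqrt{2m}$ (via the Gram blocks $\Hb_1=\begin{psmallmatrix}m+n&m\\m&m+n\end{psmallmatrix}$, $\Hb_2=\begin{psmallmatrix}m&m\\m&m\end{psmallmatrix}$), whereas the paper's optimum attains $\sqrt n+(n-1)\sqrt{2m}$, smaller by $\sqrt{2m+n}>0$. So the dual certificate step would provably fail. Separately, even with the correct candidate the nuclear norm is not strictly convex, so a single optimal point does not establish the margin bound for \emph{all} solutions of \eqref{eq:ov-svm}; the paper closes this gap with a simultaneous-polarization argument (Theorem~\ref{lemma:nuclear-norm-uniqueness}, using Theorem 3.4 of \citep{hoheisel_paquette_2023}) plus the margin constraints, and your proposal would need an analogous argument.
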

The key reason behind Theorem~\ref{thm:ocr} is the different nature between minimizing the nuclear norm and the Frobenius norm. To minimize the Frobenius norm, the weights tend to become zero on as more entries as possible, and the weights on $(s,r)\in\Dc_\test$ are completely untouched during training. A solution minimizing the Frobenius norm would zero out all entries for $(s,r)\in\Dc_\test$, leading to $h_{(s,r),a^\prime}(\mOV) = 0$ for any $a^\prime\in\Ac_2$. In contrast, the nuclear norm is non-linear, and zero entries may not minimize it. We therefore get $h_{(s,r),a^\prime}(\mOV^\textup{F}) > 0$. The proof is deferred to 
\Cref{app:subsec_ocr}. 

Our result provides new insights. First, it is well known that transformers need at least two layers of attention to perform multi-hop reasoning~\citep{sanford2024one, sanford2024transformers}, while we show that one-layer self-attention can find a shortcut to circumvent this bottleneck under certain scenarios. Second, most of the past works \citep{tian2023scan, zhu2024towards, ildiz2024self, guo2024active, nichani2024understanding} on theoretically understanding transformers apply the reparameterization $\mOV = \mOutput \mValue^\top$ as it does not change the expressivity of the model. Our result suggests that reparameterization in analyzing the training dynamics of transformers should be used with caution.
\paragraph{OCR is sample-efficient.} Note that in \eqref{eq:ov-svm}, the lower bound of the margin of the test implication depends only on the ratio between $m_\train$ and $m_\test$. More importantly, as long as that $m_\pt > 0$, we have $h_{(s,r),a^\prime}(\mOV^\textup{F}) > 0$ for any $a' \in \Ac \setminus \{a^*(s,r) \}$. While this explains the strong generalization capabilities, it also implies that even when two relations are not causally related, the model can learn to associate the fact and implication easily, which leads to hallucination. This finding well explains why a few training samples are sufficient for LLMs to exhibit OCR in \Cref{sec:llm}.

\subsection{Dynamics Analysis with a Trainable Key-Query Matrix} \label{sec:sub-nf-gen}
We denote  $\fOV(a,z) = \eb_a^\top \mOV \eb_z$ and $\fKQ(z) = \eb_z^\top \mKQ \eb_\eos$ following \citet{nichani2024understanding}. We assume that both $\mOV$ and $\mKQ$ are trainable and show that the non-factorized model fails to generalize to test implications (\Cref{theorem:nf-gen}) by analyzing the gradient flow trajectory.
\begin{assumption}\label{assumption:init-nf}
    Let $\alpha > 0$. 
    We initialize the weights by setting $\fOV(a, z) = \alpha$ and $\fKQ(z) = \alpha \sqrt{|\Ac| + 1}$ for all $a \in \Ac$, $z \in \Vc$.
\end{assumption}

\begin{theorem} \label{theorem:nf-gen} Suppose that $|\Ac_2| > 1$ and Assumption~\ref{assumption:init-nf} holds, and we use $\lTest(\tilde \bte_t)$ in \eqref{eq:test} to denote the test loss for the non-factorized model. For any $t \geq 0$, it holds that
\[
    \lTest(\tilde \bte_t) = \mathbb{E}_{z_{1:T+1} \sim \Dc_{\test}} [-\log p_{\tilde \bte_t}(z_{T+1} | z_{1:T} )] \geq \log |\Ac_2| > 0.
\]
\end{theorem}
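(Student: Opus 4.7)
My plan is to exploit permutation symmetries preserved by both the symmetric initialization (Assumption~\ref{assumption:init-nf}) and the gradient flow, and show that for every test subject $s^*\in\Sc_\test$ and every $t\ge 0$, the test logits $f_{\tilde\bte_t}((s^*,r_2),c)$ are identical for every $c\in\Ac_2$. Once this is established, for the correct test answer $a^*=a^*(s^*,r_2)\in\Ac_2$ one has $p_{\tilde\bte_t}(a^*\mid s^*,r_2)\le 1/|\Ac_2|$ by discarding the $\Ac_1$ terms in the softmax denominator, and averaging $-\log(\cdot)$ over $\Dc_\test$ yields $\lTest(\tilde\bte_t)\ge\log|\Ac_2|$.

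The first step is to set up the global permutation group $G\subseteq S_{\Vc}$ generated by (i) joint index permutations $(\Sc_{i,\train},\Sc_{i,\test},b_i,c_i)\mapsto(\Sc_{\pi(i),\train},\Sc_{\pi(i),\test},b_{\pi(i)},c_{\pi(i)})$ for $\pi\in S_n$, together with (ii) arbitrary bijections inside each $\Sc_{i,\train}$ and $\Sc_{i,\test}$, with $r_1,r_2,\eos$ fixed. Each $\sigma\in G$ preserves $\Dc_\train$ as a set of labeled sequences, and Assumption~\ref{assumption:init-nf} makes the initial weights $G$-invariant because $\fOV_0(a,z)$ and $\fKQ_0(z)$ do not depend on the identity of the tokens. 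Equivariance of $\nabla\lTrain$ together with uniqueness of gradient-flow solutions (using Assumption~\ref{assump:reg}) then forces $(\mOV_t,\mKQ_t)$ to remain $G$-invariant for all $t$. In particular, $\fOV_t(c_i,z)$ is constant in $i$ whenever $z\in\{r_1,r_2,\eos\}$, and $\fKQ_t(s)$ depends only on whether $s\in\Sc_\train$ or $s\in\Sc_\test$.

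The crux, which does \emph{not} follow from $G$-invariance alone, is the additional invariance $\fOV_t(c_i,s^*)=\fOV_t(c_j,s^*)$ for every $s^*\in\Sc_\test$ and all $i,j\in[n]$, because $s^*$ is tied to a single partition index $k$ through the training example $(s^*,r_1,\eos,b_k)\in\Dc_\test^{(b)}\subseteq\Dc_\train$. I would resolve this by defining the forward-invariant set $\Omega$ of $G$-invariant weight configurations that additionally satisfy this extra equality, and checking that $-\nabla\lTrain$ is tangent to $\Omega$ at every point of $\Omega$. Concretely, since $s^*$ appears in only one training sequence and $c_i\neq b_k$ for all $i\in[n]$, we have $\frac{d}{dt}\fOV_t(c_i,s^*)=-p_{\tilde\bte_t}(c_i\mid s^*,r_1)\,\fKQ_t(s^*)$; on $\Omega$, the logit decomposition
\[
f_{\tilde\bte_t}((s^*,r_1),c_i)=\fOV_t(c_i,s^*)\fKQ_t(s^*)+\fOV_t(c_i,r_1)\fKQ_t(r_1)+\fOV_t(c_i,\eos)\fKQ_t(\eos)
\]
is constant in $i$ (the first term by the $\Omega$-constraint, the others by $G$-invariance), hence so is $p_{\tilde\bte_t}(c_i\mid s^*,r_1)$, making the right-hand side of the ODE independent of $i$. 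Since Assumption~\ref{assumption:init-nf} places the initial weights in $\Omega$, ODE uniqueness propagates $\tilde\bte_t\in\Omega$ for all $t\ge 0$, and the analogous decomposition for $(s^*,r_2)$ yields the desired constancy of $f_{\tilde\bte_t}((s^*,r_2),c)$ in $c\in\Ac_2$.

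The hardest part will be a rigorous treatment of the forward-invariance of $\Omega$: $\mOV$ and $\mKQ$ evolve in coupled fashion, so one has to simultaneously track $G$-equivariance, the extra symmetry of $\fOV(c_i,s^*)$, and the consistency of $\fKQ_t(s)$ under $G$. The cleanest route is to lift the argument to the $G$-symmetric quotient, where the reduced vector field remains locally Lipschitz by Assumption~\ref{assump:reg}, so uniqueness of the reduced ODE together with the identification of the initial condition as an element of $\Omega$ rules out any departure from the symmetric manifold.
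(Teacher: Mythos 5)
Your proposal is correct, and it reaches the same conclusion by a genuinely different route than the paper. The paper's device is to build an \emph{asymmetric} permutation $\Pb_\pi$ (Definition~\ref{def:perm}): on test subjects and $\Ac_2$-answers it permutes only the answer index $(a,s^*)\mapsto(\pi(a),s^*)$ while on train subjects it permutes both slots $(a,s)\mapsto(\pi(a),\pi(s))$. This is not an element of the natural symmetry group of the data, but it is precisely engineered so that its common fixed-point set over admissible $\pi$ contains the target identity $\fOV(c_i,s^*)=\fOV(c_j,s^*)$ for $s^*\in\Sc_\test$ \emph{already at the level of the symmetry}; the whole theorem then reduces to a single Gronwall/uniqueness argument comparing $\w(t)$ with $\Pb\w(t)$ (Lemmas~\ref{lemma:lipschitz}, \ref{lemma:perm-equiv}, \ref{lemma:nf-weight-symmetry}). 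You instead work with the \emph{natural} group $G$ (which jointly permutes $\Sc_i$, $b_i$, $c_i$ and so does \emph{not} by itself give the test-subject identity), correctly identify the extra equality as not following from $G$-invariance, and then close the gap by a tangency computation: on the affine subspace $\Omega$, the drift of $\fOV_t(c_i,s^*)$ is $-\fKQ_t(s^*)\,p(s^*,r_1)\,p_{\tilde\bte_t}(c_i\mid s^*,r_1)$ (you dropped the constant factor $p(s^*,r_1)$, harmless), and the softmax probability is $i$-independent on $\Omega$ because the three terms of the logit $f_{\tilde\bte_t}((s^*,r_1),c_i)$ are each $i$-independent. Both routes ultimately rest on Lipschitzness of the vector field (your Assumption~\ref{assump:reg} / the paper's Lemma~\ref{lemma:lipschitz}) and ODE uniqueness, and the two fixed-point sets coincide. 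The trade-off is that the paper packages everything into one equivariance check (Lemma~\ref{lemma:perm-equiv}, which is long but uniform), whereas your version is more explicit about the mechanism: the extra symmetry is preserved \emph{because} the probability terms $p_{\tilde\bte_t}(c_i\mid s^*,r_1)$ feeding the ODE are equalized by the lower-level $G$-symmetry. When you write this out formally, you will still need to verify $G$-equivariance of $\nabla\lTrain$ explicitly (the analogue of the paper's Lemma~\ref{lemma:perm-equiv}) and establish the two invariances \emph{simultaneously} on $\Omega$, since $\mOV$ and $\mKQ$ are coupled; the cleanest formalization is to treat $\Omega$ as a linear subspace, show $F(\w)\in T\Omega$ for $\w\in\Omega$, and invoke uniqueness on the restricted flow — the ``quotient'' language you use is more machinery than needed.
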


The proof exploits parameter symmetry. Subjects can be partitioned based on $a^\star(s,r)$: $\Sc_\train=\cup_{i=1}^n \Sc_{i,\train}$ and $\Sc_\test = \cup_{i=1}^n \Sc_{i,\test}$, where partition $i$ corresponds to fact $b_i$ and implication $c_i$. Since all partitions are equal-sized, any two pairs $(b_i,c_i)$ and $(b_j,c_j)$ with $i\neq j$ are interchangeable. Applying this symmetry to the optimization dynamics, we show that $\fOV(a,s) = \fOV(a',s)$ for any $a, a' \in \Ac_2$. Consequently, on the test set, the non-factorized model assigns uniform probability across all answers in the implication set: $p_{\tilde \bte_t} (a|s,r_2) = p_{\tilde \bte_t} (a'|s,r_2)$ for any $a, a' \in \Ac_2$. A complete proof is provided in \Cref{sec:nf-gen}.

This result is consistent with the observation of \citet{zhu2024towards}, which shows that a reparameterized non-factorized one-layer attention-only model struggles to generalize unless the expected answer token follows the important token in the prompt in the training set. Extending this result to factorized models with trainable $\mKQ$ matrices introduces significant complexity due to higher-order interaction terms between parameters. We leave this comprehensive analysis for future work.

\section{Conclusions}
\label{sec:conclusion}
In this work, we study LLMs' generalization and hallucination when fine-tuned with new factual knowledge in a unified way and show that the above two behaviors are both due to the model's OCR ability. We carefully analyze a one-layer linear attention model and prove that the implicit bias of GD on the factorized model enables the model to obtain strong OCR abilities. Our theory establishes that LLMs can easily associate facts and implications based on co-occurrence, and thus can hallucinate when the co-occurrence does not reflect causality. 
As for future directions, it would be interesting to extend our theoretical analysis to multi-layer transformers, as well as effective methods to prevent this type of hallucination when injecting new factual knowledge into a model.

\section*{Acknowledgements}

This work was partially supported by a gift from Open Philanthropy to the Center for Human-Compatible AI (CHAI) at UC Berkeley and by NSF Grants IIS-1901252 and CCF-2211209. This work was also supported by NSF grants DMS-2210827, CCF-2315725, CAREER DMS-2339904, ONR grant N00014-24-S-B001, DARPA AIQ grant HR001124S0029-AIQ-FP-003, an Amazon Research Award, a Google Research Scholar Award, an Okawa Foundation Research Grant, and a Sloan Research Fellowship. Y.H. and S.S. were supported by the U.S. Army Research Laboratory and the U.S. Army Research Office under Grant W911NF2010219, Office of Naval Research, and NSF. This work used Jetstream2 at Indiana University through allocation CIS240832 from the Advanced Cyberinfrastructure Coordination Ecosystem: Services \& Support (ACCESS) program, which is supported by National Science Foundation grants \#2138259, \#2138286, \#2138307, \#2137603, and \#2138296. H.Z. would like to thank Kaifeng Lyu for the helpful discussion on the convergence property of the problem studied.

\bibliography{references}
\bibliographystyle{plainnat}

\newpage

\appendix

\section{Proof of \Cref{subsec:implicit_bias}}

In this section, we provide proof for all theoretical results presented in \Cref{subsec:implicit_bias}. Specifically, we provide the proof of \Cref{prop:expressivity} in \Cref{app:subsec_expressivity}, \Cref{thm:svm} in \Cref{app:subsec_svm}, and \Cref{thm:ocr} in \Cref{app:subsec_ocr}.

\subsection{Proof of \Cref{prop:expressivity}}
\label{app:subsec_expressivity}
\begin{proof}[Proof of \Cref{prop:expressivity}]
    We show that for any fixed $\bte = (\mOutput, \mValue)$, there is a matrix $\tilde \bte = \mOV$ that gives the same test loss as defined in \eqref{eq:test} and the same training loss as defined in \eqref{eq:erm} and vice versa. Following \eqref{eq:ntp}, for any input $z_{1:T}$ and answer token $a$, we define the logit functions given by the two sets of parameters as:
    \[
        f_\bte(z_{1:T}, a) = \eb_a^\top \mOutput \mValue^\top \X^\top \X \mKQ \x_T, \quad f_{\tilde \bte}(z_{1:T}, a) = \eb_a^\top \mOV \X^\top \X \mKQ \x_T.
    \]
    It suffices to prove that the two logit functions always give the same value for any $(z_{1:T}, a)$. Given any fixed $\bte = (\mOutput, \mValue)$, we can set $\mOV := \mOutput\mValue^\top$ and get
    \[
        f_\bte(z_{1:T}, a) = \eb_a^\top \mOutput \mValue^\top \X^\top \X \mKQ \x_T = f_{\tilde \bte}(z_{1:T}, a).
    \]
   For the other direction, given any $\mOV \in \R^{d \times d}$, suppose its SVD decomposition is given by $\mOV = \Ub \bSi \Vb^\top$ where $\bSi = \diag{\sigma_1, \ldots, \sigma_r}  \in \R^{r \times r}$ with $r \leq d$ and $\sigma_i > 0$ for $i \in [r]$. Since the rank of $\mOV$ is at most $d$ and $d_h \geq d \geq r$, let $\bSi^{1/2} = \diag{\sqrt{\sigma_1}, \ldots, \sqrt{\sigma_r}}$ and $\Qb = [\Ib_{r} \; \mathbf{0}_{r \times (d_h -r)}] \in \R^{r \times d_h}$. Note that $\Qb \Qb^\top = \Ib_r$. Then we can set 
    \[
        \mOutput := \Ub \bSi^{1/2} \Qb,  \mValue := \Vb \bSi^{1/2} \Qb,
    \]
    such that $ \mOutput \mValue^\top = \Ub \bSi \Vb^\top = \mOV$.
    Combining both directions, we can conclude that the two parameterizations have equivalent expressive power. However, in the following analysis, we show that there is a key distinction between the two in terms of optimization dynamics.
\end{proof}

\subsection{Proof of \Cref{thm:svm}}
\label{app:subsec_svm}

\begin{proof}[Proof of \Cref{thm:svm}]
\noindent\begin{enumerate}
\item For the factorized model $\bte = (\mValue, \mOutput)$, it is a two-layer fully-connected linear network trained by cross-entropy loss. By Theorem 4.4 and Appendix G of \citet{lyu2019gradient}, every limit point of $\left\{ \frac{\bte(t)}{\|\bte(t)\|}, t \geq 0 \right\}$ by gradient descent with small enough learning rates or gradient flow is along the direction of a KKT point of the following program:
\begin{equation}
\label{eq:ov-intermediate-svm}
\begin{aligned}
& \min_{\mOutput, \mValue} \frac{1}{2}(\| \mOutput \|_F^2 + \| \mValue \|_F^2)   \\ 
  \text{s.t.} & \; h_{(s,r),a'}(\mOutput\mValue^\top) \geq 1,  \forall (s,r) \in \Dc_\train, \ \forall a' \in \Ac \backslash \{a^*(s,r)\}. 
\end{aligned}
\tag{OV-SVM}
\end{equation}
Moreover, \Cref{lem:equivalent_factorize_nuclear} shows that the above program has the same solutions for $\mOV^{\textup{F}}=\mOutput\mValue^\top$ as \eqref{eq:ov-svm}.

\item For the non-factorized model $\mOV$,  it is a linear model trained by cross-entropy loss. Again, by Theorem 4.4 and Appendix G of \citet{lyu2019gradient}, every limit point of $\left\{ \frac{\mOV(t)}{\|\mOV(t)\|_F}, t \geq 0 \right\}$ by gradient descent with small enough learning rates or gradient flow is along the direction of a KKT point of \eqref{eq:w-svm}. Since \eqref{eq:w-svm} is a convex program, the KKT point is sufficient to ensure global optimality.

\end{enumerate}

\end{proof}

\begin{remark}
    The results in \Cref{thm:svm} can be further strengthened under certain conjectures that extend previous results for binary classification to multi-class settings. 
    
    For the non-factorized model, if our dataset satisfies Equation (15) in Theorem 7 in \citet{soudry2018implicit}, it can be shown that the parameter $\mOV$ under gradient flow or gradient descent with a small enough step size directionally converges. Equation (15) is proved to be true in \citet{soudry2018implicit} for the binary setting for almost all datasets, and conjectured to be true in multi-class settings for almost all datasets. Therefore, combining the result of \Cref{thm:svm} for the non-factorized model, if the above conjecture is true for our dataset, gradient flow or gradient descent with small enough step sizes directionally converges to the direction of the global minimum of \eqref{eq:w-svm}.  

    For the factorized model, Theorem 3.1 of \citet{vardi2022margin} shows that gradient flow directionally converges to the direction of the global minimum of \eqref{eq:ov-svm} for binary classification. We conjecture this is also true for a multi-class setting under certain mild assumptions, and we leave the proof for future work.
\end{remark}

The proof of \Cref{thm:svm} concludes with the following lemma, which establishes the equivalence between the solutions of \eqref{eq:ov-intermediate-svm} and \eqref{eq:ov-svm}.

\begin{lemma} 
\label{lem:equivalent_factorize_nuclear} Let $\W^*$ be an optimal solution of \eqref{eq:ov-svm} with $\rank(\W^*) = r$. Assume $d_h \geq r$. The optimization problem \eqref{eq:ov-intermediate-svm} is equivalent to \eqref{eq:ov-svm}. As a result, if $(\mOutput, \mValue)$ is a solution of \eqref{eq:ov-intermediate-svm}, then its combined form $\W = \mOutput \mValue^\top$ is also a global minimum of \eqref{eq:ov-svm}.
\end{lemma}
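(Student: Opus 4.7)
\begin{proofsk}
The plan is to reduce the factorized program \eqref{eq:ov-intermediate-svm} to the convex program \eqref{eq:ov-svm} by splitting the minimization over $(\mOutput,\mValue)$ into an outer minimization over the product $\mOV^{\textup{F}} := \mOutput\mValue^{\top}$ and an inner minimization over all factorizations of a fixed product. The key ingredient is the well-known variational characterization of the nuclear norm:
\begin{equation*}
\|\mOV^{\textup{F}}\|_{\star} \;=\; \min_{\mOutput\mValue^{\top}=\mOV^{\textup{F}}} \tfrac{1}{2}\bigl(\|\mOutput\|_F^{2}+\|\mValue\|_F^{2}\bigr),
\end{equation*}
where the minimum is attained (e.g., by taking $\mOutput=\Ub\bSi^{1/2}$ and $\mValue=\Vb\bSi^{1/2}$ from an SVD $\mOV^{\textup{F}}=\Ub\bSi\Vb^{\top}$, suitably padded with zeros to match the dimension $d_h\ge d$). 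I would state and briefly justify this identity via AM--GM on the singular values (plus tightness from the SVD construction) as the first step.

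Next, I would observe that the feasibility constraints in \eqref{eq:ov-intermediate-svm} depend on $(\mOutput,\mValue)$ only through the product $\mOV^{\textup{F}} = \mOutput\mValue^{\top}$, because $h_{(s,r),a'}$ is a function of $\mOV^{\textup{F}}$ alone. This lets me rewrite
\begin{equation*}
\min_{(\mOutput,\mValue)\text{ feasible}} \tfrac{1}{2}\bigl(\|\mOutput\|_F^{2}+\|\mValue\|_F^{2}\bigr)
\;=\; \min_{\mOV^{\textup{F}}\text{ feasible}} \;\Bigl(\min_{\mOutput\mValue^{\top}=\mOV^{\textup{F}}} \tfrac{1}{2}\bigl(\|\mOutput\|_F^{2}+\|\mValue\|_F^{2}\bigr)\Bigr)
\;=\; \min_{\mOV^{\textup{F}}\text{ feasible}} \|\mOV^{\textup{F}}\|_{\star},
\end{equation*}
where ``feasible'' abbreviates the margin constraints $h_{(s,r),a'}(\mOV^{\textup{F}})\ge 1$ for all $(s,r)\in\Dc_{\train}$ and $a'\in\Ac\setminus\{a^{*}(s,r)\}$.

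Then I would match this with \eqref{eq:ov-svm}, whose objective is $\tfrac{1}{2}\|\mOV^{\textup{F}}\|_{\star}^{2}$. Since $t\mapsto t^{2}/2$ is strictly increasing on $[0,\infty)$ and $\|\cdot\|_{\star}$ is nonnegative, the two objectives are monotone transformations of each other over the feasible set, so they induce the same set of minimizing $\mOV^{\textup{F}}$. Finally, I would conclude the ``as a result'' claim: if $(\mOutput,\mValue)$ solves \eqref{eq:ov-intermediate-svm}, then $\mOV^{\textup{F}}=\mOutput\mValue^{\top}$ is feasible for \eqref{eq:ov-svm}, and by the chain of equalities above it attains the common minimum nuclear norm, hence is a global minimum of \eqref{eq:ov-svm}; conversely, any minimizer of \eqref{eq:ov-svm} can be lifted via its SVD to a minimizer of \eqref{eq:ov-intermediate-svm}.

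The only subtle point is verifying that the infimum in the variational characterization of $\|\cdot\|_{\star}$ is actually attained (so that the double minimization can be interchanged with equality rather than just inequality), and that the construction respects the inner dimension $d_h\ge d$. I do not expect any real obstacle here since the SVD-based factorization pads trivially; the rest is bookkeeping.
\end{proofsk}
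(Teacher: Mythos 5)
Your proof is correct and hinges on exactly the same key ingredient as the paper's: the variational characterization $\|\W\|_\star=\min_{\mOutput\mValue^\top=\W}\tfrac12(\|\mOutput\|_F^2+\|\mValue\|_F^2)$, which the paper establishes via a dual-norm/AM--GM bound together with a tight SVD-based factorization. Your presentation—splitting the joint minimization into outer (over $\mOV^{\textup{F}}$) and inner (over factorizations) and then reconciling $\|\cdot\|_\star$ with $\tfrac12\|\cdot\|_\star^2$ by monotonicity—is a somewhat cleaner packaging of the same argument that the paper delivers through two separate by-contradiction steps, but it is not a genuinely different route.
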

\begin{proof}
    We adopt a similar argument as in \citep{recht2010guaranteed}. Consider any optimal solution $(\mOutput, \mValue)$ in \eqref{eq:ov-intermediate-svm} and let $\W := \mOutput{\mValue}^\top \in \R^{d \times d}$, for any $(s,r) \in \Dc_\train$, $a' \in \Ac \setminus \{a^*(s,r)\}$, we have 
    \[
        h_{(s,r),a'}(\mOutput {\mValue}^\top) = h_{(s,r),a'}(\W) \geq 1. 
    \]
    Thus, $\W$ is inside the feasible set of \eqref{eq:ov-svm}.
    Moreover, note that the nuclear norm is the dual norm of the spectral norm, which gives
    \begin{align}
        \| \W \|_\star  
        =& \sup_{\| \Zb \|_2 \leq 1}  \tr{(\Zb^\top \mOutput {\mValue}^\top)} \notag
        \\  =& \sup_{\| \Zb \|_2 \leq 1}  \langle \Zb \mValue, \mOutput \rangle  \notag 
        \\ \leq& \sup_{\| \Zb \|_2 \leq 1} \|\Zb  \mValue \|_F \| \mOutput \|_F \notag
        \\  \stackrel{(a)} \leq& \frac{1}{2}(\| \mOutput \|_F^2 + \| \mValue \|_F^2),    \label{eq:lower-bound-nuc}
    \end{align}
    where (a) follows $\|\A \Bb \|_F \leq \| \A\|_2 \|\Bb \|_F $ and AM-GM inequality. Now assuming $\W^*$ is a optimal solution of \eqref{eq:ov-svm} and $\| \W\|_\star > \| \W^* \|_\star$. Let its SVD decomposition be $\W^* = \Ub \bSi \Vb^\top$ with $\bSi = \diag{\sigma_1, \ldots, \sigma_r} \in \R^{r \times r}$. We can construct $\mOutput^* = \Ub \bSi^{1/2} $ and $\mValue^* = \Vb \bSi^{1/2} $ such that
    \[
        \frac{1}{2}(\| \mOutput^* \|_F^2 + \| \mValue^*\|_F^2) = \| \bSi^{1/2}\|_F^2 = \tr{(\bSi)} = \|\W^*\|_\star < \|\W\|_\star \stackrel{(a)} \leq \frac{1}{2}(\| \mOutput \|_F^2 + \| \mValue \|_F^2),    
    \]
    where (a) follows \eqref{eq:lower-bound-nuc}. Note that here we assumed $d_h = r$. If $d_h > r$, we can always choose 
    \[
\tilde \bSi^{1/2}:= \begin{bmatrix}
    \bSi^{1/2}, \bzero_{r \times (d_h - r)} 
\end{bmatrix} \in \R^{r \times d_h}, \mOutput^* = \Ub \tilde \bSi^{1/2} , \mValue^* = \Vb \tilde \bSi^{1/2},
    \]
    which yields the same result.
    Moreover, $(\mOutput^*, \mValue^*)$ is a solution of \eqref{eq:ov-intermediate-svm} as $\W^*$ is a feasible solution of \eqref{eq:ov-svm}. This leads to a contradiction since $(\mOutput, \mValue)$ is an optimal solution of \eqref{eq:ov-intermediate-svm}. Conversely, we prove that if $\W^* = \Ub \bSi \Vb^\top$ is an optimal solution of \eqref{eq:ov-svm}, $(\mOutput^*, \mValue^*) := (\Ub \bSi^{1/2}, \Vb \bSi^{1/2})$ is also an optimal solution of \eqref{eq:ov-intermediate-svm}. Assume it's not optimal and thus there exists a feasible solution $(\mOutput, \mValue)$ such that 
    \[
        \frac{1}{2}(\| \mOutput \|_F^2 + \| \mValue \|_F^2) < \frac{1}{2}(\| \mOutput^* \|_F^2 + \| \mValue^*\|_F^2). 
    \]
    Using the same argument we have
    \[
        \|\W^*\|_\star = \frac{1}{2}(\| \mOutput^* \|_F^2 + \| \mValue^*\|_F^2)  > \frac{1}{2}(\| \mOutput \|_F^2 + \| \mValue \|_F^2)  \geq \| \W\|_\star,
    \]
    which again leads to a contradiction. Combining both directions, we conclude that the two problems are equivalent. Eventually, if $(\mOutput, \mValue)$ is a global minimum of \eqref{eq:ov-intermediate-svm}, the combined parameter $\W = \mOutput \mValue^\top$ is also a global minimum of \eqref{eq:ov-svm}. This finishes the proof of~\Cref{thm:svm}.
\end{proof}

\subsection{Proof of \Cref{thm:ocr}}
\label{app:subsec_ocr}

\paragraph{Useful notations.} We introduce useful notations used in this section. We use $\Ib_n$ to represent an $n\times n$ identity matrix, use $\Eb_n$ to represent an $n \times n$ all-one matrix, and use $\onebb_n$ and $\boldsymbol{0}_n$ to represent $n$-dimensional all-one and all-zero vectors, respectively. We use $\eb_i=[0, \ldots, 1, \ldots, 0]^\top$ as the one-hot vector in $\R^n$ where the $i$-th entry is one. For convenience, we use $x \wedge y$ to denote the minimum value among $x$ and $y$ and use $x  \vee y$ to denote the maximum value among them.

\subsubsection{Proof for factorized model}
\label{app:subsubsec_factorized_model}

Note that although $\mOV^\textup{F}$ is a $d\times d$ matrix, since we restrict the next token prediction to be among $2n$ answer tokens in $\Ac$, and only $(nm+2)$ tokens in $\Sc\cup\Rc$ can take effect in the prompt, we only need to consider a reduced matrix $\mOV^\textup{F} \in \mathbb{R}^{(2n)\times(nm+2)}$ throughout this section, where each row corresponds to a token in $\Ac$ and each column corresponds to a token in $\Sc\cup \Rc$. Now 
we restate the first part of~\Cref{thm:ocr} below.

\begin{theorem}[Part 1 in Theorem~\ref{thm:ocr}: Factorized model has OCR ability]\label{appthm:ov-ocr}
Let $n > 1$. Suppose $\mOV^\textup{F}$ is a solution to the SVM problem in Eq.~\eqref{eq:ov-svm}, then for any $(s,r)\in\Dc_\test$, $a^\prime\in \mathcal{A}\setminus \{a^\star(s,r)\}$, given regularity conditions (\Cref{appass:asym-solution}), it holds that
\begin{align}\label{appeq:ov-svm-ocr}
h_{(s,r),a^\prime}(\mOV^\textup{F}) \geq \sqrt{\frac{m_\pt}{m_\test}} \wedge 1, \text{ indicating the OCR ability.}
\end{align}

\end{theorem}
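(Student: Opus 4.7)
My plan is to exploit the permutation symmetry of \eqref{eq:ov-svm} to reduce the SVM to a small finite-dimensional convex program that can be solved explicitly, and then read off the test margin from the optimum. The constraints and objective are invariant under (i) simultaneous permutation of the $n$ groups, acting on rows $\{b_i,c_i\}$ and subject-columns $\Sc_i$, and (ii) permutation of subjects within each subset $\Sc_{i,\pt}$ or $\Sc_{i,\test}$. By convexity of the nuclear norm, averaging any optimum $\mOV^{\textup{F}}$ over these symmetries yields another optimum depending on only $12$ scalars: $A_1,\dots,A_4,B_1,\dots,B_4$ for the subject-columns (indexed by same-vs-different group and train-vs-test) and $U_1,U_2,V_1,V_2$ for the $r_1,r_2$ columns. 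Because all subjects in a subset give identical columns, the matrix collapses without changing the nuclear norm to a reduced $\tilde{\W}\in\R^{2n\times(2n+2)}$.

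Next I will change to an orthonormal basis that separates the ``uniform'' row directions $\sum_j\eb_{b_j}/\sqrt{n}$ and $\sum_j\eb_{c_j}/\sqrt{n}$ from their orthogonal complements (and analogously on the column side). Since $\tilde{\W}$ commutes with the $S_n$-action, it is block-diagonal in this basis: a single $2\times 4$ ``uniform'' block $N$ together with $(n-1)$ identical copies of a $2\times 2$ ``non-uniform'' block
\[
M=\begin{pmatrix}\sqrt{m_\pt}(A_1-A_3)&\sqrt{m_\test}(A_2-A_4)\\ \sqrt{m_\pt}(B_1-B_3)&\sqrt{m_\test}(B_2-B_4)\end{pmatrix}.
\]
Consequently $\|\mOV^{\textup{F}}\|_\star=\|N\|_\star+(n-1)\|M\|_\star$, and the nine SVM constraints (three each from training facts, training implications, and test-subject facts) collapse to seven linear inequalities in the $12$ scalars.

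I will then solve the reduced program. Setting the training-fact constraints tight ($A_1-A_3=A_2-A_4=B_1-B_3=1$) and introducing $x:=B_2-B_4$, the closed form $\|M\|_\star^2=\tr(MM^\top)+2|\det M|$ for $2\times 2$ matrices reduces the $M$-contribution to minimizing $g(x):=2m_\pt+m_\test(1+x^2)+2\sqrt{m_\pt m_\test}\,|x-1|$ over $x\geq 0$; elementary calculus yields the unique minimizer $x^\star=\sqrt{m_\pt/m_\test}\wedge 1$, with $\|M\|_\star=\sqrt{m_\pt}+\sqrt{m_\test}$ when $m_\pt\leq m_\test$ and $\|M\|_\star=\sqrt{2(m_\pt+m_\test)}$ otherwise. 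A two-parameter shift symmetry of the constraint set (adding $\alpha$ to every $A_k$ while subtracting $\alpha$ from $U_1,U_2$, and similarly for $B,V$) preserves every margin and decouples parts of $N$ from $M$, so the joint minimization of $\|N\|_\star+(n-1)\|M\|_\star$ does not push $x$ off $x^\star$. The three test margins in the reduced parameters are $B_2-B_4$ (vs.\ $c_j$, $j\neq i$), $(B_2-A_2)+(V_2-U_2)$ (vs.\ $b_i$), and $(B_2-A_4)+(V_2-U_2)$ (vs.\ $b_j$, $j\neq i$): the first equals $x^\star$, the third exceeds the second by $A_2-A_4=1$, and the second is at least $1$ because the training-implication and training-fact constraints involving $c_i$ and $b_i$ for $s\in\Sc_{i,\pt}$ together with the test-subject-fact constraint for $s\in\Sc_{i,\test}$ are tight at the optimum and force $(B_2-A_2)+(V_2-U_2)\geq 1$.

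The main obstacle will be the joint minimization above: because $A_3,B_3,A_4,B_4$ appear in both $M$ and $N$, the two nuclear norms cannot be minimized independently, and tracking how the $2\times 4$ nuclear norm $\|N\|_\star$ responds as the $M$-variables are adjusted requires careful use of the shift symmetry together with an explicit spectral computation for $N$. A secondary subtlety is that the nuclear norm is not strictly convex, so the SVM optimum set may fail to be a singleton; I would handle this by combining strict convexity of $g$ in $x$ (which pins down the binding margin uniquely at any symmetric optimum) with the symmetry-averaging observation that for any non-symmetric optimum the orbit-average of the margins at equivalent positions equals the symmetric value $x^\star$, so together with the desired $\geq x^\star$ lower bound at every position this forces equality and establishes the claim uniformly over the optimum set.
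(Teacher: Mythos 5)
Your roadmap mirrors the paper's: symmetrize to a 12-parameter block form, diagonalize into a uniform $2\times 4$ block and $(n-1)$ copies of a $2\times 2$ block, and minimize the resulting sum of nuclear norms. The $\|M\|_\star^2 = \mathrm{tr}(MM^\top) + 2|\det M|$ shortcut is a cleaner route to the same $x^\star = \sqrt{m_\pt/m_\test}\wedge 1$ that the paper extracts from an explicit eigenvalue computation, and that part is correct.

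However, two steps are wrong or incomplete. First, your claim that the margin against $b_i$, namely $(B_2-A_2)+(V_2-U_2)$, is \emph{at least $1$}, is false when $m_\test > m_\pt$. At the optimizer from Eq.~\eqref{appeq:ov-svm-solution-2}, $B_2-A_2 = (\rho-1)(1-1/n)$ with $\rho=\sqrt{m_\pt/m_\test}<1$ and $V_2-U_2=1$, so the margin equals $\rho + (1-\rho)/n$, which is strictly below $1$ (it approaches $\rho$ as $n\to\infty$). The three constraints you cite do not imply your bound either — e.g.\ the test-subject-fact constraint gives $(A_2-B_2)+(U_1-V_1)\ge 1$, an \emph{upper} bound on $B_2-A_2$, and it is not even tight at the optimum. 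The theorem survives because the actual value $\rho+(1-\rho)/n$ does exceed $\rho = \sqrt{m_\pt/m_\test}\wedge 1$, but you need to replace ``at least $1$'' with the correct computation. Second, your handling of possibly non-symmetric optima is circular: orbit-averaging shows that the orbit-\emph{average} of each test margin equals the symmetric value, which says nothing about individual margins of a non-symmetric optimizer being $\ge x^\star$ — yet you invoke ``the desired $\ge x^\star$ lower bound at every position,'' which is the conclusion you are trying to establish. The paper closes this gap differently: using the simultaneous polar decomposition with the symmetric solution, it shows every optimizer factors as $\Ab \,\mOV^{\textup{F}}_{\text{sym}}$ with $\Ab$ PSD, and then forces $\Ab$ into the same block form via trace/margin inequalities (\Cref{lemma:nuclear-norm-uniqueness}). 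You would need either that argument or some other way to pin down the whole optimum set — convexity plus orbit-averaging alone will not do it. The joint $\|N\|_\star$ vs.\ $\|M\|_\star$ coupling you flag as ``the main obstacle'' is also real; the paper resolves it by a lower bound $\beta_1^2+\beta_2^2+\gamma_1^2+\gamma_2^2\ge 1$ derived from the constraints, whose equality conditions ($p_2=-p_1/n$, etc.) leave $p_1,f_1,q_1,g_1$ free, so the two minimizations decouple.
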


To prove \Cref{appthm:ov-ocr}, we derive an explicit solution characterization for~\eqref{eq:ov-svm}. The proof roadmap is as follows.
\begin{enumerate}
\item \textbf{Restricted Form Existence}: \Cref{appeq:nuclear-transform,lemma:ov-restrict} show the existence of a solution in block structure~\eqref{appeq:ov-svm-restricted} via permutation averaging and the convexity of nuclear norm.
    
\item \textbf{SVD Computation}:  \Cref{lemma:svd} computes the closed form for the SVD decomposition of the restricted form in \Cref{lemma:ov-restrict}.
    
\item \textbf{Nuclear Norm Formula of the restricted form}:  Given the restricted form, \Cref{lemma:nuclear-norm-closed-form} gives $\|\mOV^{\textup{F}} \|_\star$ in closed form~\eqref{appeq:nuclear-norm-closed-form}.

\item \textbf{Optimization}: \Cref{lemma:min-nuclear-norm} finds the minimum of Equation~\eqref{appeq:nuclear-norm-closed-form}  by decomposing $\|\mOV^{\textup{F}} \|_\star = M_1 + M_2$.
    
\item \textbf{Solution characterization and the uniqueness}: \Cref{lemma:nuclear-norm-uniqueness} uses~\Cref{lemma:nuclear-norm-lower,lemma:negative-solution} and \Cref{appass:asym-solution} to establish the unique forms of the solution in~\Cref{appeq:ov-svm-solution-1,appeq:ov-svm-solution-2}.
\end{enumerate}

\begin{lemma}[Unitary invariance]\label{appeq:nuclear-transform}
Given matrix $\Ab$, for any orthonormal matrices $\Ub$ and $\Vb$, we have that
\[
\|\Ub \Ab \Vb^\top\|_\star = \|\Ab \|_\star.
\]
\end{lemma}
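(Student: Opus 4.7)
The plan is to reduce the claim to the fact that singular values, and hence the nuclear norm as their sum, are invariant under left and right multiplication by orthogonal matrices. This is a classical property, so the proof is short: I would exhibit an SVD of $\Ub \Ab \Vb^\top$ whose singular value matrix coincides with that of $\Ab$.

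First I would write the SVD $\Ab = \tilde{\Ub} \bSi \tilde{\Vb}^\top$, where $\tilde{\Ub}, \tilde{\Vb}$ are orthogonal and $\bSi$ is diagonal with nonnegative entries equal to the singular values of $\Ab$. Then I would compute
\[
\Ub \Ab \Vb^\top \;=\; \Ub \tilde{\Ub} \bSi \tilde{\Vb}^\top \Vb^\top \;=\; (\Ub \tilde{\Ub})\, \bSi\, (\Vb \tilde{\Vb})^\top.
\]
Since the product of two orthogonal matrices is again orthogonal (the relevant verification is $(\Ub \tilde{\Ub})^\top (\Ub \tilde{\Ub}) = \tilde{\Ub}^\top \Ub^\top \Ub \tilde{\Ub} = \tilde{\Ub}^\top \tilde{\Ub} = \Ib$, and similarly for $\Vb \tilde{\Vb}$), the displayed identity is itself an SVD of $\Ub \Ab \Vb^\top$. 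Hence $\Ub \Ab \Vb^\top$ has exactly the same singular values as $\Ab$. Summing these singular values gives $\|\Ub \Ab \Vb^\top\|_\star = \|\Ab\|_\star$, as required.

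There is essentially no substantive obstacle here; the only point requiring care is the convention on the word ``orthonormal.'' I would read $\Ub, \Vb$ as square orthogonal matrices, so that products such as $\Ub \tilde{\Ub}$ remain orthogonal and the SVD identity above is valid. If instead only orthonormal columns are assumed (rectangular partial isometries), one would need an additional padding argument to complete $\Ub, \Vb$ to full orthogonal matrices, but this is standard and does not change the conclusion. Alternatively, the identity could be derived directly from the dual characterization $\|\Ab\|_\star = \sup_{\|\Zb\|_2 \le 1} \tr(\Zb^\top \Ab)$ used earlier in the paper, by substituting $\Zb \mapsto \Ub \Zb \Vb^\top$ and invoking orthogonal invariance of the spectral norm; I would mention this only if the SVD route were for some reason unavailable.
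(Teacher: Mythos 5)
Your proof is correct and complete: the SVD invariance argument is the standard one, and your remark that products of orthogonal matrices are orthogonal is exactly the point needed to recognize $(\Ub\tilde\Ub)\bSi(\Vb\tilde\Vb)^\top$ as a valid SVD. The paper, by contrast, does not give an argument at all---it simply cites Lemma~2.5 of Hoheisel--Paquette. So you have spelled out what the paper leaves to a reference. Your reading of ``orthonormal'' as square orthogonal is also consistent with how the lemma is actually invoked later (the matrices $\Ub,\Vb$ supplied are block-diagonal permutation matrices, which are square orthogonal), so the caveat about rectangular partial isometries, while a fine thing to flag, is not needed here. The alternative route you mention via the dual characterization $\|\Ab\|_\star = \sup_{\|\Zb\|_2\le 1}\tr(\Zb^\top\Ab)$ would also work and has the small advantage of reusing machinery already present in the paper (it appears in the proof of Lemma~\ref{lem:equivalent_factorize_nuclear}), but the SVD argument is shorter and more transparent, so either is fine.
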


\begin{proof}
    See Lemma 2.5 in \cite{hoheisel_paquette_2023}.
\end{proof}

\begin{lemma}[Existence of a restricted form solution to~\eqref{eq:ov-svm}]\label{lemma:ov-restrict}
Suppose $\mOV^{\textup{F}}$ is the solution to the optimization problem~\eqref{eq:ov-svm}. There exists a solution with $p_1$, $p_2$, $q_1$, $q_2$, $f_1$, $f_2$, $g_1$, $g_2$, $\beta_1$, $\beta_2$ and $\gamma_1$, $\gamma_2$ such that 
\begin{equation}\label{appeq:ov-svm-restricted}
\mOV^{\textup{F}} =
\Big[
\begin{matrix}
\overbrace{p_1 \Ib_n  + p_2 \Eb_n  \cdots  p_1 \Ib_n  + p_2 \Eb_n}^{m_\pt~\text{blocks}} & \overbrace{f_1 \Ib_n  + f_2 \Eb_n  \cdots  f_1 \Ib_n  + f_2 \Eb_n}^{m_\test~\text{blocks}} & \beta_1 \onebb_n & \beta_2 \onebb_n \\ 
\underbrace{q_1 \Ib_n  + q_2 \Eb_n  \cdots  q_1 \Ib_n  + q_2 \Eb_n}_{m_\pt~\text{blocks}} & \underbrace{g_1 \Ib_n  + g_2 \Eb_n  \cdots  g_1 \Ib_n  + g_2 \Eb_n}_{m_\test~\text{blocks}} & \gamma_1 \onebb_n & \gamma_2 \onebb_n
\end{matrix}
\Big].
\end{equation}
Moreover,
\begin{equation}\label{appeq:ov-svm-restricted-coeff}
\begin{aligned}
p_1, f_1, q_1 &\ge 1, \\
p_1+p_2+\beta_1 &\ge q_1+q_2+\gamma_1+1, \\
q_1 + q_2 + \gamma_2 & \ge p_1 + p_2 + \beta_2 + 1,\\
f_1 + f_2 + \beta_1 & \ge (g_1 \vee 0) + g_2 + \gamma_1 + 1.
\end{aligned}
\end{equation}
\end{lemma}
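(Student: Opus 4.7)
The plan is to establish the block form~\eqref{appeq:ov-svm-restricted} by a symmetrization argument based on the convexity of the nuclear norm, and then to read off the coefficient inequalities~\eqref{appeq:ov-svm-restricted-coeff} by substituting the block form into the margin constraints of~\eqref{eq:ov-svm}. First I would identify the symmetry group: the constraints and objective of~\eqref{eq:ov-svm} are invariant under the product group $G = S_{m_\pt}^n \times S_{m_\test}^n \times S_n$, where the $i$-th copy of $S_{m_\pt}$ permutes the training subjects inside $\Sc_{i,\train}$, the $i$-th copy of $S_{m_\test}$ permutes the test subjects inside $\Sc_{i,\test}$, and the final $S_n$ simultaneously relabels the partition index on the subjects $s_{i,\cdot}$, the facts $b_i$, and the implications $c_i$. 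Each $g \in G$ acts on $\mOV^{\textup{F}}$ by $\mOV^{\textup{F}} \mapsto \Pb^{\text{row}}_g \mOV^{\textup{F}} (\Pb^{\text{col}}_g)^\top$ for suitable permutation matrices, so Lemma~\ref{appeq:nuclear-transform} (applied to the orthogonal permutation matrices) preserves the nuclear norm, and direct inspection shows that the training margin constraints are mapped to themselves.

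Next I would symmetrize an optimal $\mOV^{\textup{F}}$ by averaging: set $\bar\mOV^{\textup{F}} := |G|^{-1}\sum_{g\in G} g \cdot \mOV^{\textup{F}}$. This average remains feasible because the constraints in~\eqref{eq:ov-svm} are linear, and by convexity of the nuclear norm $\|\bar\mOV^{\textup{F}}\|_\star \le \|\mOV^{\textup{F}}\|_\star$, so $\bar\mOV^{\textup{F}}$ is also optimal. Invariance of $\bar\mOV^{\textup{F}}$ under permutations within training or test partitions forces its entries to depend on the subject-column index $(i,j)$ only through $i$ and through whether $j \le m_\pt$; invariance under the joint $S_n$-action then forces, within each fixed-$j$ block, the dependence on the row partition $k$ and the subject partition $i$ to factor through the single indicator $[k=i]$, since $S_n$ has exactly two orbits on $[n] \times [n]$ (diagonal and off-diagonal). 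Likewise, the entries in the $r_1$ and $r_2$ columns become independent of the row index $k$. Collecting the resulting constants yields precisely the twelve free parameters $p_1,p_2,q_1,q_2,f_1,f_2,g_1,g_2,\beta_1,\beta_2,\gamma_1,\gamma_2$ of~\eqref{appeq:ov-svm-restricted}.

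Finally I would plug the block form into the margin constraints $h_{(s,r),a'}(\bar\mOV^{\textup{F}}) \ge 1$ for each $(s,r) \in \Dc_\train$ and each incorrect $a'$. For $(s_{i,j},r_1) \in \Dc_\train^{(b)}$ with $j \le m_\pt$, comparing $b_i$ against $b_k$ ($k\ne i$) gives $p_1 \ge 1$, while comparing $b_i$ against $c_i$ gives $p_1+p_2+\beta_1 \ge q_1+q_2+\gamma_1+1$. The analogous comparisons for $(s_{i,j},r_2) \in \Dc_\train^{(c)}$ produce $q_1 \ge 1$ and $q_1+q_2+\gamma_2 \ge p_1+p_2+\beta_2+1$. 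For $(s_{i,j},r_1) \in \Dc_\test^{(b)}$ with $j > m_\pt$, comparing $b_i$ against $b_k$ gives $f_1 \ge 1$, comparing against $c_i$ gives $f_1+f_2+\beta_1 \ge g_1+g_2+\gamma_1+1$, while comparing against $c_k$ for $k\ne i$ gives $f_1+f_2+\beta_1 \ge g_2+\gamma_1+1$; taking the tighter of the latter two yields the stated bound with the $(g_1 \vee 0)$ term, which encodes the case split on the sign of $g_1$. All these comparisons require $n > 1$, as assumed.

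The main conceptual obstacle is the second step, namely verifying cleanly that $G$-averaging collapses each $n\times n$ subject block to only two free parameters (diagonal and off-diagonal) and each relation column to a single constant. All subsequent arithmetic is routine; I would make the averaging step fully explicit by passing through the orbit decomposition of the $S_n$-action on pairs of partition indices $(k,i) \in [n]\times[n]$, and by separately tracking how the training-versus-test-column permutation subgroups act trivially on the row index so that they freely homogenize the subject columns without touching the partition structure.
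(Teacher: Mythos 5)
Your proof is correct and uses essentially the same symmetrization strategy as the paper: average the optimal $\mOV^{\textup{F}}$ over the permutation symmetries of the problem, invoke Lemma~\ref{appeq:nuclear-transform} (nuclear norm is invariant under orthogonal, hence permutation, transformations) to show each permuted copy is still optimal, and use convexity of the nuclear norm to conclude the average is optimal as well; then identify the invariant shape by orbit-counting. The paper carries out the averaging in three sequential stages (first over the $S_n$ relabeling of partitions, then over the training-index permutations $\tau_\pt$, then over the test-index permutations $\tau_\test$), whereas you package the full symmetry group $G$ up front and average once — a cosmetic difference, though you should be slightly careful that $G$ is actually a wreath-type product rather than a direct product since $S_n$ permutes the within-partition factors; this does not affect the averaging argument since the resulting invariant is the same. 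The paper also only verifies one of the restricted-form constraints in passing during later lemmas, whereas you explicitly derive all four inequalities in~\eqref{appeq:ov-svm-restricted-coeff} by substituting the block form into the active margin constraints of~\eqref{eq:ov-svm}, correctly noting that the two comparisons against $c_i$ and $c_k$ ($k\ne i$) combine into the $(g_1\vee 0)$ term; this makes your write-up more complete on that point than the paper's. (You also use the slightly larger group $S_{m_\pt}^n$ rather than the paper's diagonal $S_{m_\pt}$, but this is also valid and yields the same invariant structure, since the constraints never distinguish subjects within the same $\Sc_{i,\train}$.)
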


\begin{proof}[Proof of~\Cref{lemma:ov-restrict}]
We first show that certain permutations of $\mOV^\textup{F}$ are still solutions of the optimization problem. Suppose that $\sigma$ is any permutation of $\{1,\ldots,n\}$. Define $\Pb_\sigma \in \R^{n\times n}$ as the corresponding permutation matrix. Consider the permuted weight matrix
\begin{equation*}
\sigma(\mOV^\textup{F}) = \begin{bmatrix}
    \Pb_\sigma & 0\\
    0 & \Pb_\sigma
\end{bmatrix}
\mOV^\textup{F}
\text{diag}\{\Pb_\sigma, \ldots \Pb_\sigma, 1, 1\}.
\end{equation*}
It is equivalent to permuting the subject sets
and the fact labels $b$ and $c$ simultaneously with $\sigma$: $\{\mathcal{S}_{\sigma(1)},\ldots,\mathcal{S}_{\sigma(n)}\}$, $\{b_{\sigma(1)},\ldots,b_{\sigma(n)}\}$ and $\{c_{\sigma(1)},\ldots,c_{\sigma(n)}\}$. Using Equation~\eqref{eqn:margin}, we have that 
\begin{align*}
h_{(s_{\sigma(i)}, r_1), b_{\sigma(j)}} (\sigma{(\mOV^\textup{F})}) &= h_{(s_{i}, r_1), b_{j}} (\mOV^\textup{F}) \geq 1, \quad \forall j\in [n]\setminus\{i\},\\ h_{(s_{\sigma(i)}, r_1), c_{\sigma(j)}} (\sigma{(\mOV^\textup{F})}) &= h_{(s_{i}, r_1), c_{j}} (\mOV^\textup{F}) \geq 1, \quad \forall j\in [n], \\ 
h_{(s_{\sigma(i)}, r_2), b_{\sigma(j)}}(\sigma{(\mOV^\textup{F})})&= h_{(s_{i}, r_2), b_j} (\mOV^\textup{F}) \geq 1,  \quad  \forall j\in[n], \\
h_{(s_{\sigma(i)}, r_2), c_{\sigma(j)}}(\sigma{(\mOV^\textup{F})})&= h_{(s_{i}, r_2), c_j} (\mOV^\textup{F}) \geq 1,  \quad \forall j\in [n]\setminus\{i\},
\end{align*}
for any $i$. From Lemma~\ref{appeq:nuclear-transform}, since permutation matrices are orthonormal, $\|\sigma(\mOV^\textup{F})\|_\star = \|\mOV^\textup{F}\|_\star$. Therefore, $\sigma(\mOV^\textup{F})$ is also a solution to the optimization problem~\eqref{eq:ov-svm}. Let's consider the average  over all possible permutations 
\begin{small}
\begin{align*}
&~\frac{\sum_{\sigma} \sigma(\mOV^\textup{F})}{n!} \\
= &~  \Big[
\begin{matrix}
\overbrace{p_{11} \Ib_n  + p_{21} \Eb_n  \cdots  p_{1m_\pt} \Ib_n  + p_{2m_\pt} \Eb_n}^{m_\pt~\text{blocks}} & \overbrace{f_{11} \Ib_n  + f_{21} \Eb_n  \cdots  f_{1m_\test} \Ib_n  + f_{2m_\test} \Eb_n}^{m_\test~\text{blocks}} & \! \! \beta_1 \onebb_n & \! \! \beta_2 \onebb_n \\ 
\underbrace{q_{11} \Ib_n  + q_{21} \Eb_n  \cdots  q_{1m_\pt} \Ib_n  + q_{2m_\pt} \Eb_n}_{m_\pt~\text{blocks}} & \underbrace{g_{11} \Ib_n  + g_{21} \Eb_n  \cdots  g_{1m_\test} \Ib_n  + g_{2m_\test} \Eb_n}_{m_\test~\text{blocks}} & \!  \! \gamma_1 \onebb_n & \! \! \gamma_2 \onebb_n
\end{matrix}
\Big].
\end{align*}
\end{small}
It is also a solution to the optimization problem \eqref{eq:ov-svm} due to the convexity of the nuclear norm.

We can consider other permutations. Suppose that $\tau_\pt$ is a permutation of the index set $\{1,\ldots,m_\pt\}$. Define the permuted weight matrix
\begin{equation*}
\tau_\pt{(\mOV^\textup{F})} = \mOV^\textup{F} \text{diag}\{ \Pb_{\tau_\pt} \otimes \Ib_n, \underbrace{1, \ldots, 1}_{nm_\test+2} \}.
\end{equation*}
It is equivalent to permute all the subjects in the set $\mathcal{S}_{i,\pt}$: $\{s_{i,\tau_\pt(1)}, \ldots, s_{i, \tau_\pt(m_\pt)}\}$  for any $i=1,\ldots,n$. Note that there is no need to permute the labels, as subjects in $\mathcal{S}_{i,\pt}$ share the same label pair $b_i$ and $c_i$. Since the permuted $\mathcal{S}_{i,\pt}$ is still disjoint with the test subject set $\mathcal{S}_{i,\test}$, we have that for any $i \in [n]$, $j \in [m_\train]$, it holds that
\begin{align*}
h_{(s_{i,\tau_{\pt}(j)}, r_1), a^\prime} ({\tau_\pt(\mOV^\textup{F})}) = h_{(s_{i,j}, r_1), a^\prime} (\mOV^\textup{F}) \geq 1, \quad \forall a' \in \Ac \backslash \{b_i\},\\ 
h_{(s_{i,\tau_{\pt}(j)}, r_2), a^\prime} ({\tau_\pt(\mOV^\textup{F})}) = h_{(s_{i,j}, r_2), a^\prime} (\mOV^\textup{F}) \geq 1, \quad \forall a' \in \Ac \backslash \{c_i\}.
\end{align*}
From Lemma~\ref{appeq:nuclear-transform}, we can conclude that $\|{\tau_\pt(\mOV^\textup{F})}\|_\star = \|\mOV^{\textup{F}} \|_\star$. Averaging over all permutations $\sigma$ and $\tau_{m_\pt}$, we have that
\begin{align*}
&~\frac{\sum_{\tau_{\pt}} \sum_{\sigma} {\tau_\pt(\sigma(\mOV^\textup{F}))}}{m_{\pt}!n!}   \\
= &~ \Big[
\begin{matrix}
\overbrace{p_{1} \Ib_n  + p_{2} \Eb_n  \cdots  p_{1} \Ib_n  + p_{2} \Eb_n}^{m_\pt~\text{blocks}} & \overbrace{f_{11} \Ib_n  + f_{21} \Eb_n  \cdots  f_{1m_\test} \Ib_n  + f_{2m_\test} \Eb_n}^{m_\test~\text{blocks}} & \beta_1 \onebb_n & \beta_2 \onebb_n \\ 
\underbrace{q_{1} \Ib_n  + q_{2} \Eb_n  \cdots  q_{1} \Ib_n  + q_{2} \Eb_n}_{m_\pt~\text{blocks}} & \underbrace{g_{11} \Ib_n  + g_{21} \Eb_n  \cdots  g_{1m_\test} \Ib_n  + g_{2m_\test} \Eb_n}_{m_\test~\text{blocks}} & \gamma_1 \onebb_n & \gamma_2 \onebb_n
\end{matrix}
\Big].
\end{align*} 
We can then consider the permutation over the remaining $m_\test$ indices $\{ m_{\pt}+1,\ldots,m\}$. Let $\tau_{\test}$ denote the permutation. Consider the permuted weight matrix.
\begin{equation*}
\tau_\test(\mOV^\textup{F} )= \mOV^\textup{F} \text{diag}\{ \underbrace{1,\ldots, 1}_{nm_\pt}, \Pb_{\tau_{\test}} \otimes \Ib_n, 1, 1\}.
\end{equation*}
It is equivalent to permute all subjects in the set $\mathcal{S}_{i,\test}: \{ s_{i,\tau_\test(m_{\pt}+1)},\ldots,s_{i,\tau_{\test}(m)} \}$ for any $i=1,\ldots,n$. We have that for any $i \in [n]$, $j \in [m] \backslash [m_\train]$, it holds that
\[
h_{(s_{i,\tau_{\test}(j)}, r_1), a^\prime} (\tau_\test(\mOV^\textup{F} )) = h_{(s_{i,j}, r_1), a^\prime} (\mOV^\textup{F}) \geq 1, \quad \forall a' \in \Ac \backslash \{b_i\}.
\]
Taking the average weight over all possible permutations $\tau_{\test}$,
\begin{align*}
&~\frac{\sum_{\tau_{\test}}\sum_{\tau_{\pt}} \sum_{\sigma} \tau_\test(\tau_\pt(\sigma(\mOV^\textup{F} )))}{m_{\test}!m_{\pt}!n!}   \\
= &~\Big[
\begin{matrix}
\overbrace{p_{1} \Ib_n  + p_{2} \Eb_n  \cdots  p_{1} \Ib_n  + p_{2} \Eb_n}^{m_\pt~\text{blocks}} & \overbrace{f_{1} \Ib_n  + f_{2} \Eb_n  \cdots  f_{1} \Ib_n  + f_{2} \Eb_n}^{m_\test~\text{blocks}} & \beta_1 \onebb_n & \beta_2 \onebb_n \\ 
\underbrace{q_{1} \Ib_n  + q_{2} \Eb_n  \cdots  q_{1} \Ib_n  + q_{2} \Eb_n}_{m_\pt~\text{blocks}} & \underbrace{g_{1} \Ib_n  + g_{2} \Eb_n  \cdots  g_{1} \Ib_n  + g_{2} \Eb_n}_{m_\test~\text{blocks}} & \gamma_1 \onebb_n & \gamma_2 \onebb_n
\end{matrix}
\Big].
\end{align*}
This shows that Equation~\eqref{appeq:ov-svm-restricted} is one solution to the optimization problem \eqref{eq:ov-svm}. This proves Lemma~\ref{lemma:ov-restrict}.
\end{proof}

We could compute the closed form of the SVD decomposition for Equation~\eqref{appeq:ov-svm-restricted}.

\begin{lemma}\label{lemma:svd}
The restricted form in Equation~\eqref{appeq:ov-svm-restricted} has a the SVD decomposition $\mOV^{\textup{F}} = \Ub \Sigmab \Vb^\top$ with
\begin{align*}
\Ub & = \begin{bmatrix}
\ub^{(1)}, \ub^{(2)}, \ub^{(1)}_2, \ldots, \ub^{(1)}_n, \ub^{(2)}_2, \ldots, \ub^{(2)}_n
\end{bmatrix},\\
\Sigmab & = \text{diag}\Big\{\sigma_1^{(1)}, \sigma_1^{(2)}, \underbrace{\sigma_2^{(1)}, \ldots, \sigma_2^{(1)}}_{n-1}, \underbrace{\sigma_2^{(2)}, \ldots, \sigma_2^{(2)}}_{n-1}\Big\},\\
\Vb & = \begin{bmatrix}
\vb^{(1)}, \vb^{(2)}, \vb^{(1)}_2, \ldots, \vb^{(1)}_n, \vb^{(2)}_2, \ldots, \vb^{(2)}_n
\end{bmatrix},
\end{align*}
where $\ub^{(k)}$ are defined in Equation~\eqref{appeq:svd-u-1}; $\ub^{(k^\prime)}_j$ are defined in Equation~\eqref{appeq:svd-u-2}, $\sigma_1^{(k)}$ and $\sigma_2^{(k^\prime)}$ are defined in Equation~\eqref{appeq:svd-sigma}; $\vb^{(k)}$ are defined in Equation~\eqref{appeq:svd-v-1}; $\vb^{(k^\prime)}_j$ are defined in Equation~\eqref{appeq:svd-v-2}.
\end{lemma}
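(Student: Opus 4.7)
The strategy is to exploit the block-Toeplitz structure of the restricted form in Eq.~\eqref{appeq:ov-svm-restricted} to split both the column space $\R^{nm+2}$ and the row space $\R^{2n}$ into two orthogonal subspaces that are simultaneously invariant under $\mOV^{\textup{F}}$ and $(\mOV^{\textup{F}})^\top$, and then reduce the SVD on each invariant piece to an elementary rank-$\leq 2$ computation.

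First I would decompose $\R^{nm+2} = V_{\|}\oplus V_{\perp}$, where $V_{\|}$ consists of vectors $(c_1\onebb_n/\sqrt n,\ldots,c_m\onebb_n/\sqrt n,\alpha_1,\alpha_2)$ with the per-block parts proportional to $\onebb_n$, and $V_{\perp}$ consists of $(\vb_1,\ldots,\vb_m,0,0)$ with each $\vb_i\in\onebb_n^{\perp}$. I would split $\R^{2n} = U_{\|}\oplus U_{\perp}$ analogously, with $U_{\|}=\mathrm{span}\{(\onebb_n/\sqrt n,\mathbf 0),(\mathbf 0,\onebb_n/\sqrt n)\}$. The key identity $\Eb_n = \onebb_n\onebb_n^\top$ yields $(p_1\Ib_n + p_2\Eb_n)\onebb_n = (p_1+np_2)\onebb_n$ and $(p_1\Ib_n+p_2\Eb_n)\vb = p_1\vb$ for $\vb\in\onebb_n^{\perp}$, and similarly for the $f,g,q$ blocks; the last two columns lie entirely in $U_{\|}$ since they are proportional to $\onebb_n$. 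Together these facts give $\mOV^{\textup{F}}(V_{\|})\subset U_{\|}$ and $\mOV^{\textup{F}}(V_{\perp})\subset U_{\perp}$.

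Second, on each invariant block I would read off a reduced matrix and compute its SVD. On $V_{\perp}\to U_{\perp}$: for each unit $\mathbf e_j^{\perp}\in\onebb_n^{\perp}$ ($j=2,\ldots,n$), the action on $\{(c_1\mathbf e_j^{\perp},\ldots,c_m\mathbf e_j^{\perp},0,0)\}$ collapses to the $2\times m$ matrix
\[
M_{\perp}=\begin{bmatrix}p_1\onebb_{m_{\pt}}^\top & f_1\onebb_{m_{\test}}^\top\\ q_1\onebb_{m_{\pt}}^\top & g_1\onebb_{m_{\test}}^\top\end{bmatrix},
\]
whose two singular values, repeated across the $n-1$ choices of $j$, yield $\sigma_2^{(1)},\sigma_2^{(2)}$ and whose singular vectors tensored with $\mathbf e_j^{\perp}$ give $\ub_j^{(k')},\vb_j^{(k')}$. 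On $V_{\|}\to U_{\|}$: the action is encoded by the $2\times(m+2)$ matrix
\[
M_{\|}=\begin{bmatrix}(p_1+np_2)\onebb_{m_{\pt}}^\top & (f_1+nf_2)\onebb_{m_{\test}}^\top & \sqrt n\,\beta_1 & \sqrt n\,\beta_2\\ (q_1+nq_2)\onebb_{m_{\pt}}^\top & (g_1+ng_2)\onebb_{m_{\test}}^\top & \sqrt n\,\gamma_1 & \sqrt n\,\gamma_2\end{bmatrix},
\]
whose rank-$\leq 2$ SVD gives $\sigma_1^{(1)},\sigma_1^{(2)}$ and, after lifting the left vectors by $\onebb_n/\sqrt n$, the vectors $\ub^{(k)},\vb^{(k)}$. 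Cross-family orthogonality is immediate from $V_{\|}\perp V_{\perp}$ and $U_{\|}\perp U_{\perp}$; within-family orthogonality follows from the $2\times 2$ SVDs of $M_{\|}M_{\|}^\top$ and $M_{\perp}M_{\perp}^\top$ together with the orthonormality of $\{\onebb_n/\sqrt n,\mathbf e_2^{\perp},\ldots,\mathbf e_n^{\perp}\}$. Finally I would match the explicit expressions to the author's definitions of $\ub^{(k)},\vb^{(k)},\ub_j^{(k')},\vb_j^{(k')}$.

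The main obstacle is bookkeeping rather than conceptual: solving the $2\times 2$ eigenproblem for $M_{\|}M_{\|}^\top$ cleanly enough to match the author's closed-form expressions is algebraically tedious because its entries mix scales $(p_1+np_2)^2 m_{\pt}$, $(f_1+nf_2)^2 m_{\test}$, cross-products with $(q_1+nq_2),(g_1+ng_2)$, and $n\beta_k^2,n\gamma_k^2$. The conceptual reduction itself is purely structural and uses nothing about the nuclear-norm optimality established in Lemma~\ref{lemma:ov-restrict}; it would work for any matrix having the block form of Eq.~\eqref{appeq:ov-svm-restricted}.
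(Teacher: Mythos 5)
Your proposal is correct and is essentially the paper's argument repackaged: your decomposition $V_\| \oplus V_\perp$ (and $U_\| \oplus U_\perp$) corresponds exactly to the paper's split of the basis $\{\boldsymbol{\eta}_j\}$ into $\boldsymbol{\eta}_1 = \onebb_n/\sqrt{n}$ versus $\boldsymbol{\eta}_2,\ldots,\boldsymbol{\eta}_n \in \onebb_n^\perp$, and your reduced matrices $M_\perp$, $M_\|$ have Gram matrices $M_\perp M_\perp^\top = \Hb_2$ and $M_\| M_\|^\top = \Hb_1$, which are precisely the $2\times 2$ matrices the paper diagonalizes. The only stylistic difference is that you state the invariant-subspace reduction upfront and compute SVDs of the small rectangular matrices directly, whereas the paper first forms $\mOV^{\textup{F}}\mOV^{\textup{F}\top}$, restricts to ansatz eigenvectors $(x_b\boldsymbol{\eta}_j, x_c\boldsymbol{\eta}_j)$, and then recovers $\vb$ via $\vb_i = \sigma_i^{-1}\mOV^{\textup{F}\top}\ub_i$; both are mathematically identical and neither uses the optimality context from Lemma~\ref{lemma:ov-restrict}.
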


\begin{proof}[The proof of~\Cref{lemma:svd}]

The dimensions of $\mOV^\textup{F}$ are $2n \times N_c$, where $N_c = (m_\pt + m_\test)n + 2$. Denote the Singular Value Decomposition (SVD) of $\mOV^\textup{F}$ by $\mOV^\textup{F} = \Ub \Sigmab \Vb^\top$.

\paragraph{Orthonormal basis and matrix properties.}
Given an orthonormal basis $\{\boldsymbol{\eta}_1, \ldots, \boldsymbol{\eta}_n\}$ for $\mathbb{R}^n$, with $\boldsymbol{\eta}_1 = \onebb_n/\sqrt{n}$, where $\onebb_n$ is the $n$-dim all-one column vector. The vectors $\etav_2, \ldots, \etav_n$ are orthonormal to each other and to $\boldsymbol{\eta}_1$.
Let $\mX = c_1 \Ib_n + c_2 \Eb_n$ be a block appearing in the matrix $\mOV^\textup{F}$. Its action on the basis vectors is:
\begin{itemize}
    \item $\mX \boldsymbol{\eta}_1 = (c_1 \Ib_n + c_2 \Eb_n) (\frac{1}{\sqrt{n}}\onebb_n) = c_1 \frac{1}{\sqrt{n}}\onebb_n + c_2 \frac{1}{\sqrt{n}} (\onebb_n \onebb_n^\top) \onebb_n = c_1 \boldsymbol{\eta}_1 + c_2 \frac{n}{\sqrt{n}} \onebb_n = (c_1 + n c_2) \boldsymbol{\eta}_1$.
    \item For $j \in \{2, \ldots, n\}$, $\mX \boldsymbol{\eta}_j = (c_1 \Ib_n + c_2 \Eb_n) \boldsymbol{\eta}_j = c_1 \boldsymbol{\eta}_j$, since $\Eb_n \boldsymbol{\eta}_j = \onebb_n (\onebb_n^\top \boldsymbol{\eta}_j) = \boldsymbol{0}_n$ due to $\boldsymbol{\eta}_j \perp \onebb_n$.
\end{itemize}

\paragraph{Left singular vectors $\Ub$ and singular values $\Sigmab$.}
The left singular vectors (columns of $\Ub$) are eigenvectors of $\mOV^\textup{F}\mOV^{\textup{F}\top}$. The matrix $\mOV^\textup{F}\mOV^{\textup{F}\top}$ is a $2n \times 2n$ symmetric matrix. After block multiplication, $\mOV^\textup{F}\mOV^{\textup{F}\top}$ can be written as:
$$
\mOV^\textup{F}\mOV^{\textup{F}\top} = \begin{pmatrix}
C_{A1} \Ib_n + C_{A2} \Eb_n & C_{B1} \Ib_n + C_{B2} \Eb_n \\
C_{B1} \Ib_n + C_{B2} \Eb_n & C_{D1} \Ib_n + C_{D2} \Eb_n
\end{pmatrix},
$$
where the coefficients are:
\begin{align*}
C_{A1} &= m_\pt p_1^2 + m_\test f_1^2, \\
C_{A2} &= m_\pt(2 p_1 p_2 + n p_2^2) + m_\test(2 f_1 f_2 + n f_2^2) + \beta_1^2 + \beta_2^2, \\
C_{D1} &= m_\pt q_1^2 + m_\test g_1^2, \\
C_{D2} &= m_\pt(2 q_1 q_2 + n q_2^2) + m_\test(2 g_1 g_2 + n g_2^2) + \gamma_1^2 + \gamma_2^2, \\
C_{B1} &= m_\pt p_1 q_1 + m_\test f_1 g_1, \\
C_{B2} &= m_\pt(p_1 q_2 + p_2 q_1 + n p_2 q_2) + m_\test(f_1 g_2 + f_2 g_1 + n f_2 g_2) + \beta_1\gamma_1 + \beta_2\gamma_2.
\end{align*}
We seek eigenvectors of $\mOV^\textup{F}\mOV^{\textup{F}\top}$ of the form 
\begin{equation}
\ub = \begin{pmatrix} x_b \boldsymbol{\eta}_j \\ x_c \boldsymbol{\eta}_j \end{pmatrix}
\nonumber
\end{equation}
for scalars $x_b, x_c$.

\paragraph{Case 1: Eigenvectors associated with $\boldsymbol{\eta}_1$.}
For $j=1$, the eigenvalue problem $\mOV^\textup{F}\mOV^{\textup{F}\top} \ub = \lambda \ub$ transforms into a $2 \times 2$ eigenvalue problem for the coefficient vector $(x_b, x_c)^\top$:
$$ \Hb_1 \begin{pmatrix} x_b \\ x_c \end{pmatrix} = \lambda \begin{pmatrix} x_b \\ x_c \end{pmatrix}, $$
where
\begin{equation}\label{appeq:H-1}
\Hb_1 = \begin{pmatrix} C_{A1} + n C_{A2} & C_{B1} + n C_{B2} \\ C_{B1} + n C_{B2} & C_{D1} + n C_{D2} \end{pmatrix}. \end{equation}
Let $\lambda_1^{(1)}, \lambda_1^{(2)}$ be the eigenvalues of $\Hb_1$, and let $(x_{1,b}, x_{1,c})^\top$ and $(x_{2,b}, x_{2,c})^\top$ be the corresponding normalized eigenvectors.
These give two singular values: $\sigma_1^{(k)} = \sqrt{\lambda_1^{(k)}}$ for $k=1,2$.
The corresponding left singular vectors in $\Ub$ are 
\begin{equation}\label{appeq:svd-u-1}
\ub^{(k)} = \begin{pmatrix} x_{k,b} \boldsymbol{\eta}_1 \\ x_{k,c} \boldsymbol{\eta}_1 \end{pmatrix} \text{ for $k=1,2$.}
\end{equation}

\paragraph{Case 2: Eigenvectors associated with $\boldsymbol{\eta}_j$ for $j \in \{2, \ldots, n\}$.}
For $j \in \{2, \ldots, n\}$, the eigenvalue problem $\mOV^\textup{F}\mOV^{\textup{F}\top} \ub = \lambda \ub$ reduces to:
$$ \Hb_2 \begin{pmatrix} x_b \\ x_c \end{pmatrix} = \lambda \begin{pmatrix} x_b \\ x_c \end{pmatrix} $$
where
\begin{equation}\label{appeq:H-2} \Hb_2 = \begin{pmatrix} C_{A1} & C_{B1} \\ C_{B1} & C_{D1} \end{pmatrix}. 
\end{equation}
Let $\lambda_2^{(1)}, \lambda_2^{(2)}$ be the eigenvalues of $\Hb_2$, and let $(y_{1,b}, y_{1,c})^\top$ and $(y_{2,b}, y_{2,c})^\top$ be the corresponding normalized eigenvectors.
These give $2(n-1)$ singular values: $\sigma_2^{(k^\prime)} = \sqrt{\lambda_2^{(k^\prime)}}$ for $k^\prime=1,2$. Each of these singular values has a multiplicity of $(n-1)$.
The corresponding left singular vectors in $\Ub$ are 
\begin{equation}\label{appeq:svd-u-2}
\ub_j^{(k^\prime)} = \begin{pmatrix} y_{k^\prime,b} \boldsymbol{\eta}_j \\ y_{k^\prime,c} \boldsymbol{\eta}_j \end{pmatrix} \text{ for each $j \in \{2, \ldots, n\}$ and $k^\prime=1,2$.}
\end{equation}

The matrix $\Ub$ is a $2n \times 2n$ orthogonal matrix whose columns are the $2n$ left singular vectors $\ub^{(k)}$ and $\ub_j^{(k^\prime)}$.

\paragraph{The singular matrix.} From the calculation of $\Ub$, we get that
\begin{equation}\label{appeq:svd-sigma}
\Sigmab = \text{diag}\Big\{\sigma_1^{(1)}, \sigma_1^{(2)}, \underbrace{\sigma_2^{(1)}, \ldots, \sigma_2^{(1)}}_{n-1}, \underbrace{\sigma_2^{(2)}, \ldots, \sigma_2^{(2)}}_{n-1}\Big\},
\end{equation}
where $\sigma_{1}^{(k)} = \sqrt{\lambda_1^{(k)}}$ for $k=1,2$, with $\lambda_1^{(k)}$ being the eigenvalues of the matrix $\Hb_1$ in Equation~\eqref{appeq:H-1}; $\sigma_2^{(k^\prime)} = \sqrt{\lambda_2^{(k^\prime)}}$ for $k^\prime=1,2$, with $\lambda_2^{(k^\prime)}$ being the eigenvalues of the matrix $\Hb_2$ in Equation~\eqref{appeq:H-2}.

\paragraph{Right singular vectors $\Vb$.}
The columns of $\Vb$ (right singular vectors) are obtained from $\vb_i = \sigma_i^{-1} \mOV^{\textup{F}\top} \ub_i$ for non-zero $\sigma_i$. If $\sigma_i=0$, $\vb_i$ is a normalized vector in the null space of $\mOV^{\textup{F}}$. The matrix $\Vb$ has dimensions $N_c \times 2n$. The $2n$ vectors $\vb_i$ corresponding to the found singular values are:

\paragraph{Case 1: Right singular vectors associated with $\ub^{(k)}$, $k=1,2$.}
For $\ub^{(k)} = \begin{pmatrix} x_{k,b} \boldsymbol{\eta}_1 \\ x_{k,c} \boldsymbol{\eta}_1 \end{pmatrix}$ and singular value $\sigma_1^{(k)}$:
The vector $\mOV^{\textup{F}\top} \ub^{(k)}$ has the following structure:
\begin{itemize}
    \item The first $m_\pt$ blocks (each of size $n$) are $(x_{k,b}(p_1+np_2) + x_{k,c}(q_1+nq_2)) \boldsymbol{\eta}_1$.
    \item The next $m_\test$ blocks (each of size $n$) are $(x_{k,b}(f_1+nf_2) + x_{k,c}(g_1+ng_2)) \boldsymbol{\eta}_1$.
    \item The last two components are scalar values: $\sqrt{n}(\beta_1 x_{k,b} + \gamma_1 x_{k,c})$ and $\sqrt{n}(\beta_2 x_{k,b} + \gamma_2 x_{k,c})$.
\end{itemize}
So, $\vb^{(k)} = \mOV^{\textup{F}\top} \ub^{(k)}/\sigma_1^{(k)}$ is:
\begin{equation}\label{appeq:svd-v-1}
\vb^{(k)} = \frac{1}{\sigma_1^{(k)}}
\begin{pmatrix}
(x_{k,b}(p_1+np_2) + x_{k,c}(q_1+nq_2)) \boldsymbol{\eta}_1 \\ %
\vdots \\ %
(x_{k,b}(p_1+np_2) + x_{k,c}(q_1+nq_2)) \boldsymbol{\eta}_1 \\ %
(x_{k,b}(f_1+nf_2) + x_{k,c}(g_1+ng_2)) \boldsymbol{\eta}_1 \\ %
\vdots \\ %
(x_{k,b}(f_1+nf_2) + x_{k,c}(g_1+ng_2)) \boldsymbol{\eta}_1 \\ %
\sqrt{n} (\beta_1 x_{k,b} + \gamma_1 x_{k,c}) \\
\sqrt{n} (\beta_2 x_{k,b} + \gamma_2 x_{k,c})
\end{pmatrix}\text{ for $k=1,2.$}
\end{equation}
Each $\boldsymbol{\eta}_1$ term represents a column vector of $n$ elements. This vector $\vb^{(k)}$ has total length $N_c = m_\pt n + m_\test n + 2$. 

\paragraph{Case 2: Right singular vectors associated with $\ub_j^{(k^\prime)}$.}
For $\ub_j^{(k^\prime)} = \begin{pmatrix} y_{k^\prime,b} \boldsymbol{\eta}_j \\ y_{k^\prime,c} \boldsymbol{\eta}_j \end{pmatrix}$ (where $j \ge 2$ and $k'=1,2$) and singular value $\sigma_2^{(k^\prime)}$:
The vector $\mOV^{\textup{F}\top} \ub_j^{(k^\prime)}$ has the following structure:
\begin{itemize}
    \item The first $m_\pt$ blocks (each of size $n$) are $(y_{k^\prime,b} p_1 + y_{k^\prime,c} q_1) \boldsymbol{\eta}_j$.
    \item The next $m_\test$ blocks (each of size $n$) are $(y_{k^\prime,b} f_1 + y_{k^\prime,c} g_1) \boldsymbol{\eta}_j$.
    \item The last two scalar components are $0$, because $\onebb_n^\top \boldsymbol{\eta}_j = 0$ for $j \ge 2$.
\end{itemize}
So, $\vb_j^{(k^\prime)} = \mOV^{\textup{F}\top} \ub_j^{(k^\prime)}/\sigma_2^{(k^\prime)}$ is:
\begin{equation}\label{appeq:svd-v-2}
\vb_j^{(k^\prime)} = \frac{1}{\sigma_2^{(k^\prime)}}
\begin{pmatrix}
(y_{k^\prime,b} p_1 + y_{k^\prime,c} q_1) \boldsymbol{\eta}_j \\ %
\vdots \\ %
(y_{k^\prime,b} p_1 + y_{k^\prime,c} q_1) \boldsymbol{\eta}_j \\ %
(y_{k^\prime,b} f_1 + y_{k^\prime,c} g_1) \boldsymbol{\eta}_j \\ %
\vdots \\ %
(y_{k^\prime,b} f_1 + y_{k^\prime,c} g_1) \boldsymbol{\eta}_j \\ %
0 \\
0
\end{pmatrix}.
\end{equation}
There are $2(n-1)$ $\vb_j^{(k^\prime)}$ vectors, one for each pair $(j,k^\prime)$ where $j \in \{2,\ldots,n\}$ and $k^\prime \in \{1,2\}$.
Each $\boldsymbol{\eta}_j$ term represents a column vector of $n$ elements. The vectors $\vb^{(k)}$ and $\vb_j^{(k^\prime)}$ are orthonormal and form the columns of $\Vb$.

As a conclusion, the SVD of $\mOV^{\textup{F}}$ is $\mOV^{\textup{F}} = \Ub \Sigmab \Vb^\top$, where:
\begin{itemize}
    \item $\Ub$ is a $2n \times 2n$ orthogonal matrix with columns $\ub^{(k)}$ and $\ub_j^{(k^\prime)}$ as defined in Equations~\eqref{appeq:svd-u-1} and~\eqref{appeq:svd-u-2}.
    \item $\Sigmab$ is a $2n \times 2n$ rectangular diagonal matrix, whose non-zero entries are the singular values $\sigma_1^{(k)}$ and $\sigma_2^{(k^\prime)}$, derived from the eigenvalues of $\Hb_1$ and $\Hb_2$.
    \item $\Vb$ is an $N_c \times 2n$ matrix with columns $\vb^{(k)}$ and $\vb_j^{(k^\prime)}$ as defined in \Cref{appeq:svd-v-1,appeq:svd-v-2}.
\end{itemize}
This finishes the proof of~\Cref{lemma:svd}.
\end{proof}

\begin{lemma}\label{lemma:nuclear-norm-closed-form}
The $\mOV^{\textup{F}}$ in restricted form Equation~\eqref{appeq:ov-svm-restricted} has the nuclear norm $\| \mOV^{\textup{F}} \|_\star$:

\begin{small}
\begin{equation}
\begin{aligned}
&\quad \sqrt{\frac{(C_{A1} + nC_{A2} + C_{D1} + nC_{D2}) + \sqrt{(C_{A1} + nC_{A2} - (C_{D1} + nC_{D2}))^2 + 4(C_{B1} + nC_{B2})^2}}{2}}  \\
&\quad + \sqrt{\frac{(C_{A1} + nC_{A2} + C_{D1} + nC_{D2}) - \sqrt{(C_{A1} + nC_{A2} - (C_{D1} + nC_{D2}))^2 + 4(C_{B1} + nC_{B2})^2}}{2}}  \\
&\quad + (n-1) \left( \sqrt{\frac{(C_{A1} + C_{D1}) + \sqrt{(C_{A1} - C_{D1})^2 + 4C_{B1}^2}}{2}} 
+ \sqrt{\frac{(C_{A1} + C_{D1}) - \sqrt{(C_{A1} - C_{D1})^2 + 4C_{B1}^2}}{2}} \right).
\label{appeq:nuclear-norm-closed-form}
\end{aligned}
\end{equation}
\end{small}
\end{lemma}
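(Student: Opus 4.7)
The plan is to obtain the closed form by combining the structural SVD from Lemma~\ref{lemma:svd} with the quadratic formula for the $2\times 2$ eigenvalue problems. Recall that the nuclear norm equals the sum of singular values. By Lemma~\ref{lemma:svd}, the nonzero singular values of $\mOV^{\textup{F}}$ are exactly $\sigma_1^{(1)}, \sigma_1^{(2)}$ (each with multiplicity one, coming from the ``symmetric'' mode $\boldsymbol{\eta}_1$) together with $\sigma_2^{(1)}, \sigma_2^{(2)}$ each with multiplicity $n-1$ (coming from the orthogonal modes $\boldsymbol{\eta}_2,\dots,\boldsymbol{\eta}_n$). Hence the proof reduces to writing $\sigma_1^{(k)} = \sqrt{\lambda_1^{(k)}}$ and $\sigma_2^{(k')} = \sqrt{\lambda_2^{(k')}}$ and then summing with the correct multiplicities, giving the ansatz
\[
\|\mOV^{\textup{F}}\|_\star = \sqrt{\lambda_1^{(1)}} + \sqrt{\lambda_1^{(2)}} + (n-1)\bigl(\sqrt{\lambda_2^{(1)}} + \sqrt{\lambda_2^{(2)}}\bigr).
\]

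The next step is to evaluate the eigenvalues of the $2\times 2$ symmetric matrices $\Hb_1$ and $\Hb_2$ defined in Equations~\eqref{appeq:H-1} and~\eqref{appeq:H-2}. For any $2\times 2$ real symmetric matrix $\left(\begin{smallmatrix} a & b \\ b & c \end{smallmatrix}\right)$, the eigenvalues are given by the standard formula $\frac{a+c}{2} \pm \sqrt{\left(\frac{a-c}{2}\right)^2 + b^2}$, which rewrites as $\frac{(a+c) \pm \sqrt{(a-c)^2+4b^2}}{2}$. Applying this to $\Hb_1$ with $a = C_{A1}+nC_{A2}$, $c = C_{D1}+nC_{D2}$, $b = C_{B1}+nC_{B2}$ produces the first two square-root terms in the claimed formula~\eqref{appeq:nuclear-norm-closed-form}. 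Applying it to $\Hb_2$ with $a = C_{A1}$, $c = C_{D1}$, $b = C_{B1}$ produces the remaining two square-root terms, which are then multiplied by $n-1$ for the multiplicity.

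The only subtlety to check is that both eigenvalues of each $\Hb_i$ are nonnegative, so that the square roots defining the singular values are real; this is immediate since $\Hb_1$ and $\Hb_2$ are Gram-type blocks of $\mOV^{\textup{F}}(\mOV^{\textup{F}})^\top$ (as observed in the proof of Lemma~\ref{lemma:svd}) and hence positive semidefinite. Putting these pieces together yields the formula in Equation~\eqref{appeq:nuclear-norm-closed-form} exactly.

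There is no real obstacle here: the work was already done in Lemma~\ref{lemma:svd}, which identified the two block-diagonal $2\times 2$ eigenproblems that control all the singular values. The proof of the present lemma is therefore a short bookkeeping computation, consisting only of the quadratic formula applied twice and a tally of the multiplicities $1, 1, n-1, n-1$ coming from the one $\boldsymbol{\eta}_1$-mode and the $n-1$ modes $\boldsymbol{\eta}_2,\dots,\boldsymbol{\eta}_n$.
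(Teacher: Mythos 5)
Your proposal is correct and follows essentially the same route as the paper: decompose the singular-value spectrum into the two $2\times 2$ eigenproblems for $\Hb_1$ and $\Hb_2$ (with multiplicities $1$ and $n-1$, respectively), apply the quadratic formula, and sum. The nonnegativity check you add is a small extra observation not spelled out in the paper's proof, but it is implicit in the SVD construction and does not change the argument.
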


\begin{proof}[Proof of Lemma~\ref{lemma:nuclear-norm-closed-form}]
The nuclear norm of a matrix $\mOV^{\textup{F}}$, denoted as $\| \mOV^{\textup{F}} \|_\star$, is defined as the sum of its singular values. From Lemma~\ref{lemma:svd} and its proof, the singular values of $\mOV^{\textup{F}}$ are derived from the eigenvalues of two $2 \times 2$ matrices, $\Hb_1$ and $\Hb_2$.

The singular values are:
\begin{enumerate}
    \item $\sigma_1^{(1)}$ and $\sigma_1^{(2)}$, which are the square roots of the two eigenvalues of $\Hb_1$ defined in Equation~\eqref{appeq:H-1}. These singular values each have a multiplicity of 1.
    \item $\sigma_2^{(1)}$ and $\sigma_2^{(2)}$, which are the square roots of the two eigenvalues of $\Hb_2$ defined in Equation~\eqref{appeq:H-2}. As stated in the proof of Lemma~\ref{lemma:svd} (Case 2: Eigenvectors associated with $\boldsymbol{\eta}_j$ for $j \in \{2, \ldots, n\}$), these singular values correspond to the $n-1$ basis vectors $\boldsymbol{\eta}_j$ for $j \in \{2, \ldots, n\}$. Thus, each of $\sigma_2^{(1)}$ and $\sigma_2^{(2)}$ has a multiplicity of $(n-1)$.
\end{enumerate}
The nuclear norm is therefore the sum of all these singular values:
\begin{align*}
\| \mOV^{\textup{F}} \|_\star &= \sigma_1^{(1)} + \sigma_1^{(2)} + (n-1)\sigma_2^{(1)} + (n-1)\sigma_2^{(2)} \\
&= (\sigma_1^{(1)} + \sigma_1^{(2)}) + (n-1)(\sigma_2^{(1)} + \sigma_2^{(2)}).
\end{align*}
Let's expand each term:

\textbf{Term 1: Sum of singular values from $\Hb_1$.}

The matrix $\Hb_1$ is given by Equation~\eqref{appeq:H-1}:
\[
\Hb_1 = \begin{pmatrix} C_{A1} + n C_{A2} & C_{B1} + n C_{B2} \\ C_{B1} + n C_{B2} & C_{D1} + n C_{D2} \end{pmatrix}.
\]
Let $a_1 = C_{A1} + nC_{A2}$, $b_1 = C_{B1} + nC_{B2}$, and $c_1 = C_{D1} + nC_{D2}$. The eigenvalues $\lambda_1^{(1)}, \lambda_1^{(2)}$ of this symmetric $2 \times 2$ matrix $\Hb_1 = \begin{pmatrix} a_1 & b_1 \\ b_1 & c_1 \end{pmatrix}$ are given by the formula
\[
\lambda = \frac{(a_1+c_1) \pm \sqrt{(a_1-c_1)^2 + 4b_1^2}}{2}.
\]
The singular values $\sigma_1^{(1)}$ and $\sigma_1^{(2)}$ are $\sqrt{\lambda_1^{(1)}}$ and $\sqrt{\lambda_1^{(2)}}$. We get that
\begin{small}
\begin{align*}
&\sigma_1^{(1)} + \sigma_1^{(2)}  \\ =& \sqrt{\frac{(a_1+c_1) + \sqrt{(a_1-c_1)^2 + 4b_1^2}}{2}} + \sqrt{\frac{(a_1+c_1) - \sqrt{(a_1-c_1)^2 + 4b_1^2}}{2}} \\
=& \sqrt{\frac{(C_{A1} + nC_{A2} + C_{D1} + nC_{D2}) + \sqrt{(C_{A1} + nC_{A2} - (C_{D1} + nC_{D2}))^2 + 4(C_{B1} + nC_{B2})^2}}{2}} \\
& \ + \sqrt{\frac{(C_{A1} + nC_{A2} + C_{D1} + nC_{D2}) - \sqrt{(C_{A1} + nC_{A2} - (C_{D1} + nC_{D2}))^2 + 4(C_{B1} + nC_{B2})^2}}{2}}.
\end{align*}
\end{small}
This corresponds to the first two lines of the expression for the nuclear norm in Lemma~\ref{lemma:nuclear-norm-closed-form}.

\textbf{Term 2: Sum of singular values from $\Hb_2$.}

The matrix $\Hb_2$ is given by Equation~\eqref{appeq:H-2}:
\[
\Hb_2 = \begin{pmatrix} C_{A1} & C_{B1} \\ C_{B1} & C_{D1} \end{pmatrix}.
\]
Let $a_2 = C_{A1}$, $b_2 = C_{B1}$, and $c_2 = C_{D1}$. The eigenvalues $\lambda_2^{(1)}, \lambda_2^{(2)}$ of this symmetric $2 \times 2$ matrix $\Hb_2 = \begin{pmatrix} a_2 & b_2 \\ b_2 & c_2 \end{pmatrix}$ are:
\[
\lambda = \frac{(a_2+c_2) \pm \sqrt{(a_2-c_2)^2 + 4b_2^2}}{2}.
\]
The singular values $\sigma_2^{(1)}$ and $\sigma_2^{(2)}$ are $\sqrt{\lambda_2^{(1)}}$ and $\sqrt{\lambda_2^{(2)}}$. Thus:
\begin{align*}
&~\sigma_2^{(1)} + \sigma_2^{(2)} \\
=&~ \sqrt{\frac{(a_2+c_2) + \sqrt{(a_2-c_2)^2 + 4b_2^2}}{2}} + \sqrt{\frac{(a_2+c_2) - \sqrt{(a_2-c_2)^2 + 4b_2^2}}{2}} \\
=&~ \sqrt{\frac{(C_{A1} + C_{D1}) + \sqrt{(C_{A1} - C_{D1})^2 + 4C_{B1}^2}}{2}} + \sqrt{\frac{(C_{A1} + C_{D1}) - \sqrt{(C_{A1} - C_{D1})^2 + 4C_{B1}^2}}{2}}.
\end{align*}
This sum is then multiplied by the multiplicity $(n-1)$:
\begin{align*}
(n-1)(\sigma_2^{(1)} + \sigma_2^{(2)}) &= (n-1) \left( \sqrt{\frac{(C_{A1} + C_{D1}) + \sqrt{(C_{A1} - C_{D1})^2 + 4C_{B1}^2}}{2}} \right. \\
&\qquad \qquad \qquad \left. + \sqrt{\frac{(C_{A1} + C_{D1}) - \sqrt{(C_{A1} - C_{D1})^2 + 4C_{B1}^2}}{2}} \right).
\end{align*}
This corresponds to the third and fourth lines of the expression for the nuclear norm in Lemma~\ref{lemma:nuclear-norm-closed-form}.

Combining these terms, the nuclear norm $\| \mOV^{\textup{F}} \|_\star$ is precisely the expression given in Lemma~\ref{lemma:nuclear-norm-closed-form}.
This completes the proof.
\end{proof}

\begin{lemma}\label{lemma:min-nuclear-norm}
Let the expression $\norm{\mOV^{\textup{F}} }_\star$ be defined as Equation~\eqref{appeq:nuclear-norm-closed-form}.
The closed-form minimum of $\norm{\mOV^{\textup{F}} }_\star$ is given by
$$ \min \norm{\mOV^{\textup{F}} }_\star = \sqrt{n} + (n-1) \times \begin{cases} (\sqrt{m_\pt} + \sqrt{m_\test}) & \text{if } m_\test \ge m_\pt \\ \sqrt{2(m_\pt + m_\test)} & \text{if } m_\test < m_\pt \end{cases}, $$
where the minimum is achieved at $p_1^\star=f_1^\star=q_1^\star=1,$ $g_1^\star=\sqrt{m_\pt/m_\test}$, $p_2^\star=f_2^\star=q_2^\star=-1/n$, $g_2^\star=-\sqrt{m_\pt/m_\test}/n$, $\beta_1^\star=\gamma_2^\star=1/2, \gamma_1^\star=\beta_2^\star=-1/2$ if $m_\test \ge m_\pt$; $p_1^\star=f_1^\star=q_1^\star=g_1^\star=1, p_2^\star=f_2^\star=q_2^\star=g_2^\star=-1/n, \beta_1^\star=\gamma_2^\star=1/2, \gamma_1^\star=\beta_2^\star=-1/2$ if $m_\test < m_\pt$.
\end{lemma}

\begin{proof}[Proof of~\Cref{lemma:min-nuclear-norm}]

The overall expression $\norm{\mOV^{\textup{F}} }_\star$ given in Equation~\eqref{appeq:nuclear-norm-closed-form} is a sum of two main components. Let $M_1$ be the sum of the terms in the first two lines, and $M_2$ be the sum of the terms in the last line. Thus, $\norm{\mOV^{\textup{F}} }_\star = M_1 + M_2$. Both $M_1$ and $M_2$ represent sums of square roots of eigenvalues of effective $2 \times 2$ matrices.

\textbf{Part 1.} We first consider minimizing the last two terms of $\norm{\mOV^{\textup{F}} }_\star$. Define $M_2$ as
\begin{small}
\begin{equation}
\label{eq:W_star_def}
(n-1) \! \! \left( \sqrt{\frac{(C_{A1} + C_{D1}) + \sqrt{(C_{A1} - C_{D1})^2 + 4C_{B1}^2}}{2}} \! + \! \sqrt{\frac{(C_{A1} + C_{D1}) - \sqrt{(C_{A1} - C_{D1})^2 + 4C_{B1}^2}}{2}} \right).
\end{equation}
\end{small}
For ease of reference, we state the formulas for coefficients $C_{A1}, C_{D1}, C_{B1}$:
\begin{align}
C_{A1} &= m_\pt p_1^2 + m_\test f_1^2, \label{eq:CA1} \\
C_{D1} &= m_\pt q_1^2 + m_\test g_1^2, \label{eq:CD1} \\
C_{B1} &= m_\pt p_1 q_1 + m_\test f_1 g_1 .\label{eq:CB1}
\end{align}
The expression for $M_2$ can be simplified. Let $\lambda_2^{(1)}$ and $\lambda_2^{(2)}$ be the two eigenvalues of the matrix $\Hb_2$. We note that $\lambda_2^{(1)}+\lambda_2^{(2)} = C_{A1}+C_{D1}$ and $\lambda_2^{(1)}\lambda_2^{(2)} = C_{A1}C_{D1}-C_{B1}^2$.
The term $C_{A1}C_{D1}-C_{B1}^2$ can be calculated as
\begin{align*}
C_{A1}C_{D1}-C_{B1}^2 &= (m_\pt p_1^2 + m_\test f_1^2)(m_\pt q_1^2 + m_\test g_1^2) - (m_\pt p_1 q_1 + m_\test f_1 g_1)^2 \\
&= m_\pt m_\test (p_1^2 g_1^2 - 2p_1 g_1 f_1 q_1 + f_1^2 q_1^2) = m_\pt m_\test (p_1 g_1 - f_1 q_1)^2.
\end{align*}
Since $m_\pt, m_\test > 0$, we have $C_{A1}C_{D1}-C_{B1}^2 \ge 0$, so the eigenvalues $\lambda_1, \lambda_2$ are non-negative.
The sum $\sqrt{\lambda_2^{(1)}} + \sqrt{\lambda_2^{(2)}}$ can be written as $\sqrt{\left(\sqrt{\lambda_2^{(1)}}+\sqrt{\lambda_2^{(2)}}\right)^2} = \sqrt{\lambda_2^{(1)}+\lambda_2^{(2)}+2\sqrt{\lambda_2^{(1)}\lambda_2^{(2)}}}$.
Substituting the trace and determinant yields
\begin{align*}
\sqrt{\lambda_2^{(1)}} + \sqrt{\lambda_2^{(2)}} & = \sqrt{C_{A1} + C_{D1} + 2\sqrt{m_\pt m_\test (p_1 g_1 - f_1 q_1)^2}} \\
& = \sqrt{C_{A1} + C_{D1} + 2\sqrt{m_\pt m_\test} |p_1 g_1 - f_1 q_1|}.
\end{align*}
Let $S = C_{A1} + C_{D1} + 2\sqrt{m_\pt m_\test} |p_1 g_1 - f_1 q_1|$. Substituting the definitions of $C_{A1}$ and $C_{D1}$:
\begin{equation} \label{eq:S_def}
S = m_\pt p_1^2 + m_\test f_1^2 + m_\pt q_1^2 + m_\test g_1^2 + 2\sqrt{m_\pt m_\test} |p_1 g_1 - f_1 q_1|.
\end{equation}
Then $M_2 = (n-1)\sqrt{S}$. To minimize $\norm{W}_\star$, we must minimize $S$ with respect to $p_1, q_1, f_1, g_1$ under the given constraints. Let $K = p_1 g_1 - f_1 q_1$. We analyze the minimization by considering two cases for $K$.

\textbf{Case 1: $K \le 0$ (i.e., $p_1 g_1 \le f_1 q_1$).} Then 
$$S = S_2 = (\sqrt{m_\pt} p_1 - \sqrt{m_\test} g_1)^2 + (\sqrt{m_\test} f_1 + \sqrt{m_\pt} q_1)^2.$$

The condition $K\le0$ implies that $ g_1 \leq f_1 q_1/p_1$.  The first term $(\sqrt{m_\pt} p_1 - \sqrt{m_\test} g_1)^2$ is monotonically decreasing when $0 \le g_1 \le p_1 \sqrt{m_\pt/m_\test}$, with the minimum attained at $g_1 = p_1 \sqrt{m_\pt/m_\test}$.

\begin{itemize}
\item If $m_\test > m_\pt$, $g_1=p_1\sqrt{m_\pt/m_\test}$ is always achievable, so by taking $f_1=q_1=1$, we get $S_2 = (\sqrt{m_\pt} + \sqrt{m_\test})^2$. The equality holds for any $p_1\in[1, (m_\test/m_\pt)^{1/4}]$ and $g_1=p_1 \sqrt{m_\pt/m_\test}$.
\item If $m_\test \le m_\pt$, $g_1=p_1\sqrt{m_\pt/m_\test}$ may not be achievable. But since the first term is monotonically decreasing with respect to $g_1$ when $0 \le g_1 \le p_1 \sqrt{m_\pt/m_\test}$, we get that the minimum is always taken with $g_1 = \min\{f_1q_1/p_1, p_1\sqrt{m_\pt/m_\test}\}$. 

If $g_1=f_1q_1/p_1$, we get that
\begin{align*}
 S_2 & = (\sqrt{m_\pt} p_1 - \sqrt{m_\test} f_1q_1/p_1)^2 + (\sqrt{m_\test} f_1 + \sqrt{m_\pt} q_1)^2\\
 & = m_\pt q_1^2 + m_\test f_1^2 + m_\pt p_1^2 + m_\test \frac{f_1^2 q_1^2}{p_1^2}\\
 & \geq 2(m_\pt + m_\test),
\end{align*}
where the equality holds if and only if $p_1=g_1=f_1=q_1=1$.

If $g_1=p_1\sqrt{m_\pt/m_\test}$, we get $f_1 q_1 \geq p_1^2 \sqrt{m_\pt/m_\test} \geq \sqrt{m_\pt/m_\test}$. Therefore,
\begin{align*}
 S_2 & = (\sqrt{m_\test} f_1 + \sqrt{m_\pt} q_1)^2\\
 & \geq 4 \sqrt{m_\test m_\pt} f_1 q_1\\
 & \geq 4 m_\pt,
\end{align*}
where the equality holds if and only if $p_1=q_1=1$, $g_1 = f_1 = \sqrt{m_\pt/m_\test}$.

Therefore, we can conclude that if $m_\test \leq m_\pt$, $S_2\geq 2(m_\pt+m_\test)$, with the equality holds if and only if $p_1=g_1=f_1=q_1=1$.

\end{itemize}

\textbf{Case 2: Next consider $K \ge 0$, that is, $p_1 g_1 \ge f_1 q_1$.} Then 
$$S = S_1 = (\sqrt{m_\pt} p_1 + \sqrt{m_\test} g_1)^2 + (\sqrt{m_\test} f_1 - \sqrt{m_\pt} q_1)^2.$$
Taking $g_1 \ge f_1q_1/p_1$ into $S_1$,
$$
S_1 \ge {m_\test} f_1^2 + {m_\pt} q_1^2 + m_\pt p_1^2 + m_\test \frac{f_1^2 q_1^2}{p_1^2}\ge m_\test+m_\pt + m_\pt p_1^2 + \frac{m_\test}{p_1^2}.
$$
\begin{itemize}
\item If $m_\test \ge m_\pt$, we have that
$S_1 \ge (\sqrt{m_\test}+\sqrt{m_\pt})^2,$
with equality holds when $p_1= [m_\test/m_\pt]^{1/4}$ and $g_1 = 1/p_1$. 
\item If $m_\test < m_\pt$, the minimum is achieved by $p_1=1$. We therefore get that $g_1=1$, with $S_1\ge 2(m_\test+m_\pt)$.
\end{itemize}

Consolidating the minima, we find that if $m_\test \ge m_\pt$, the minimum $S$ is $(\sqrt{m_\pt} + \sqrt{m_\test})^2$, achieved when $1\le p_1\le(m_\test/m_\pt)^{1/4}$, $f_1=1$, $q_1=1$, and $g_1=p_1\sqrt{m_\pt/m_\test}$. If $m_\test < m_\pt$, the minimum $S$ is $2(m_\pt+m_\test)$, achieved when $p_1=1, f_1=1, q_1=1, g_1=1$. The minimum value for $\sqrt{S}$ is therefore $\sqrt{m_\pt} + \sqrt{m_\test}$ if $m_\test \ge m_\pt$, and $\sqrt{2(m_\pt+m_\test)}$ if $m_\test < m_\pt$. Multiplying by $(n-1)$ gives the final result for $\min M_2$.

\textbf{Part 2.} We seek to find the minimum value of the quantity $M_1$ given by
\begin{small}
\begin{align*}
M_1 = &\sqrt{\frac{(C_{A1} + nC_{A2} + C_{D1} + nC_{D2}) + \sqrt{(C_{A1} + nC_{A2} - (C_{D1} + nC_{D2}))^2 + 4(C_{B1} + nC_{B2})^2}}{2}} \\
&+ \sqrt{\frac{(C_{A1} + nC_{A2} + C_{D1} + nC_{D2}) - \sqrt{(C_{A1} + nC_{A2} - (C_{D1} + nC_{D2}))^2 + 4(C_{B1} + nC_{B2})^2}}{2}}.
\end{align*}
\end{small}
where the coefficients are defined as
\begin{align*}
C_{A1} &= m_\pt p_1^2 + m_\test f_1^2, \\
C_{A2} &= m_\pt(2 p_1 p_2 + n p_2^2) + m_\test(2 f_1 f_2 + n f_2^2) + \beta_1^2 + \beta_2^2, \\
C_{D1} &= m_\pt q_1^2 + m_\test g_1^2, \\
C_{D2} &= m_\pt(2 q_1 q_2 + n q_2^2) + m_\test(2 g_1 g_2 + n g_2^2) + \gamma_1^2 + \gamma_2^2, \\
C_{B1} &= m_\pt p_1 q_1 + m_\test f_1 g_1, \\
C_{B2} &= m_\pt(p_1 q_2 + p_2 q_1 + n p_2 q_2) + m_\test(f_1 g_2 + f_2 g_1 + n f_2 g_2) + \beta_1\gamma_1 + \beta_2\gamma_2.
\end{align*}
subject to the constraints
\begin{align}
p_1, f_1, q_1 &\ge 1, \label{eq:c1} \\
p_1+p_2+\beta_1 &\ge q_1+q_2+\gamma_1+1, \label{eq:c2} \\
q_1 + q_2 + \gamma_2 & \ge p_1 + p_2 + \beta_2 + 1, \label{eq:c3} \\
f_1 + f_2 + \beta_1 & \ge \max(g_1,0) + g_2 + \gamma_1 + 1. \label{eq:c4}
\end{align}

Let $A = C_{A1} + nC_{A2}$, $D = C_{D1} + nC_{D2}$, and $B = C_{B1} + nC_{B2}$. The expression for $M_1$ can be simplified to 
$$M_1 = \sqrt{A+D+2\sqrt{AD-B^2}}.$$ Note that
\begin{align*}
M_1^2 & \geq  {A+D} \\
& = m_\pt(p_1+np_2)^2 + m_\pt (q_1+nq_2)^2 + m_\test (f_1+nf_2)^2 + m_\test (g_1 + n g_2)^2 \\
&\quad +n (\beta_1^2+\beta_2^2+\gamma_1^2+\gamma_2^2).
\end{align*}

Let $\Delta = q_1 + q_2 - p_1 -p_2$. From the constraints, we can get that 
\begin{align*}
\beta_1 - \gamma_1 \geq 1 + \Delta,\\
\gamma_2 - \beta_2 \geq 1 - \Delta.
\end{align*}
If $|\Delta| > 1$, without loss of generality, assume $\Delta > 1$. This gives that
\begin{align*}
(\beta_1^2+\gamma_1^2)+(\beta_2^2+\gamma_2^2) & \ge 2\left(\frac{1+\Delta}{2}\right)^2 + 0 \ge 2.
\end{align*}
If $|\Delta| \le 1$, we get that
\begin{align*}
(\beta_1^2+\gamma_1^2)+(\beta_2^2+\gamma_2^2) & \ge 2 \left(\frac{1+\Delta}{2}\right)^2 + 2\left(\frac{1-\Delta}{2}\right)^2 \\
& \ge 4\left(\frac{1}{4} + \frac{1}{4} \Delta^2\right)\\
& \ge 1.
\end{align*}
As a result, we get $M_1 \geq \sqrt{n}$. Note that the equality holds if and only if
\begin{align*}
& p_2=-p_1/n,~f_2=-f_1/n,~q_2=-q_1/n,~g_2=-g_1/n, \\ & \beta_1=1/2,~\gamma_1=-1/2,~\beta_2=-1/2,~\gamma_2=1/2.    
\end{align*}

Combining this condition with the results in part 1, We finish the proof for Lemma~\ref{lemma:min-nuclear-norm}.
\end{proof}

We would adopt \Cref{appass:asym-solution} to prove the uniqueness, which assumes the solution either takes a symmetric form as  \eqref{appeq:ov-svm-restricted}, or it is nondegenerate.

\begin{assumption}\label{appass:asym-solution}
Let $\mOV^{\textup{F}\star}$ be a solution to \eqref{eq:ov-svm}. Then it either takes the form of \eqref{appeq:ov-svm-restricted}, or we have  $\onebb_{2n}^\top \mOV^{\textup{F}\star} \neq \boldsymbol{0}_{nm+2}^\top$.
\end{assumption}

\Cref{lemma:nuclear-norm-lower} can give a lower bound of the nuclear norm for a given matrix $\Mb$.
\begin{lemma}\label{lemma:nuclear-norm-lower}
Suppose that $\Mb\in\R^{m\times n}$, for any orthonormal matrix $\Ub\in \R^{m\times r}$ and $\Vb \in \R^{n \times r}$, we have that
\[
\norm{\Mb}_{\star} \geq \mathrm{Trace}(\Ub^\top \Mb \Vb).
\]
Moreover, if there exits $\boldsymbol{\eta}\in\R^{m}$ s.t. $\boldsymbol{\eta}^\top \Ub = 0$ and $\boldsymbol{\eta}^\top \Mb \neq 0$, the above inequality is strict, i.e.,
\[
\norm{\Mb}_{\star} > \mathrm{Trace}(\Ub^\top \Mb \Vb).
\]
\end{lemma}
\begin{proof}[Proof of~\Cref{lemma:nuclear-norm-lower}.]
For any pair of matrices $\Ab$ and $\Bb$ such that $\Ab \Bb = \Mb$, we have that 
\begin{align*}
\mathrm{Trace}(\Ub^\top \Mb \Vb) & = \mathrm{Trace}(\Ub^\top \Ab \Bb \Vb)\\
& \leq \Big( \norm{\Ub^\top \Ab}_\textup{F}^2 +  \norm{\Bb \Vb}_\textup{F}^2 \Big) / 2\\
& \leq \Big( \norm{\Ub^\top}_2^2 \norm{\Ab}_\textup{F}^2 + \norm{\Vb}_2^2 \norm{\Bb}_\textup{F}^2 \Big) / 2 \\
& \leq \Big( \norm{\Ab}_\textup{F}^2 + \norm{\Bb}_\textup{F}^2 \Big) / 2\\
& \leq \norm{\Mb}_\star.
\end{align*}
Consider the conditions for taking the equality. We should require $\norm{\Ub^\top \Ab}_\textup{F} = \norm{\Ub^\top}_2 \norm{\Ab}_\textup{F}$. This means that all columns of $\Ab$ are left-singular vectors of $\Ub^\top$ with the largest singular value. In addition, it implies that the column space $\mathrm{Col}(\Ab) \subseteq \mathrm{Col}(\Ub)$. But if there exits $\boldsymbol{\eta}\in\R^{m}$ s.t.  $\boldsymbol{\eta}^\top \Ub = 0$ and $\boldsymbol{\eta}^\top \Mb \neq 0$, we get $\mathrm{Col}(\Ab) \not\subseteq \mathrm{Col}(\Ub)$.
It implies the conditions for taking the equality cannot be satisfied, so
\[
\norm{\Mb}_{\star} > \mathrm{Trace}(\Ub^\top \Mb \Vb).
\]
This finishes~\Cref{lemma:nuclear-norm-lower}.
\end{proof}

\begin{theorem}[Uniqueness of the solution to the optimization problem~\eqref{eq:ov-svm}]\label{lemma:nuclear-norm-uniqueness}
Given~\Cref{appass:asym-solution}, suppose that $\mOV^\textup{F}$ is a solution to optimization problem \eqref{eq:ov-svm}, if $m_\test < m_\pt$, 
\begin{equation}\label{appeq:ov-svm-solution-1}
\mOV^\textup{F} =
\Big[
\begin{matrix}
\overbrace{\Ib_n  - \Eb_n/n  \cdots  \Ib_n  - \Eb_n/n }^{m_\pt~\text{blocks}} & \overbrace{\Ib_n  - \Eb_n/n  \cdots  \Ib_n  - \Eb_n/n }^{m_\test~\text{blocks}} &  \onebb_n/2 & - \onebb_n/2 \\ 
\underbrace{\Ib_n  - \Eb_n/n  \cdots  \Ib_n  - \Eb_n/n }_{m_\pt~\text{blocks}} & \underbrace{\Ib_n  - \Eb_n/n  \cdots  \Ib_n  - \Eb_n/n }_{m_\test~\text{blocks}} & - \onebb_n/2 &  \onebb_n/2
\end{matrix}
\Big].
\end{equation}
If $m_\test \geq m_\pt $, let $\rho=\sqrt{{m_\pt}/{m_\test}}$, we have that
\begin{equation}\label{appeq:ov-svm-solution-2}
\mOV^\textup{F} =
\Big[
\begin{matrix}
\overbrace{\Ib_n  - \Eb_n/n  \cdots  \Ib_n  - \Eb_n/n }^{m_\pt~\text{blocks}} & \overbrace{\Ib_n  - \Eb_n/n  \cdots  \Ib_n  - \Eb_n/n }^{m_\test~\text{blocks}} &  \onebb_n/2 & - \onebb_n/2 \\ 
\underbrace{\Ib_n  - \Eb_n/n  \cdots  \Ib_n  - \Eb_n/n }_{m_\pt~\text{blocks}} & \underbrace{\rho\Ib_n  - \rho\Eb_n/n  \cdots  \rho\Ib_n  - \rho\Eb_n/n }_{m_\test~\text{blocks}} & - \onebb_n/2 &  \onebb_n/2
\end{matrix}
\Big].
\end{equation}
\end{theorem}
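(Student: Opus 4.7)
The plan is to assemble Lemmas~\ref{lemma:ov-restrict}, \ref{lemma:nuclear-norm-closed-form}, and \ref{lemma:min-nuclear-norm} to reverse-engineer the explicit form of the minimizer. First, I would invoke Lemma~\ref{lemma:ov-restrict} to restrict attention to the block-structured ansatz~\eqref{appeq:ov-svm-restricted}, which reduces \eqref{eq:ov-svm} to a finite-dimensional problem over the twelve scalar parameters $(p_1,p_2,q_1,q_2,f_1,f_2,g_1,g_2,\beta_1,\beta_2,\gamma_1,\gamma_2)$ subject to the inequalities in~\eqref{appeq:ov-svm-restricted-coeff}. Lemma~\ref{lemma:nuclear-norm-closed-form} then supplies the closed-form split $\|\mOV^{\textup{F}}\|_\star = M_1 + M_2$, and the minimization of each piece is carried out in Lemma~\ref{lemma:min-nuclear-norm}.

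Next, I would read off the minimizing parameters from the equality conditions of Lemma~\ref{lemma:min-nuclear-norm}. The $M_1 \ge \sqrt{n}$ bound is tight only when $p_2 = -p_1/n$, $q_2 = -q_1/n$, $f_2 = -f_1/n$, $g_2 = -g_1/n$, together with $\beta_1 = \gamma_2 = 1/2$ and $\beta_2 = \gamma_1 = -1/2$. For the $M_2$ piece, the case analysis prescribes $(p_1,q_1,f_1,g_1) = (1,1,1,1)$ when $m_\test < m_\pt$, and the canonical choice $(p_1,q_1,f_1,g_1) = (1,1,1,\rho)$ with $\rho = \sqrt{m_\pt/m_\test}$ when $m_\test \ge m_\pt$. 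Substituting these scalars into the template~\eqref{appeq:ov-svm-restricted} yields precisely~\eqref{appeq:ov-svm-solution-1} and~\eqref{appeq:ov-svm-solution-2}.

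A short feasibility check then verifies that the proposed matrices satisfy the margin inequalities~\eqref{appeq:ov-svm-restricted-coeff}. In the $m_\test < m_\pt$ regime all four inequalities reduce to the identity $3/2 - 1/n = 3/2 - 1/n$ and hold with equality. In the $m_\test \ge m_\pt$ regime the first three inequalities again become equalities, while the fourth reduces to $\rho \le 1$, which is automatic since $\rho = \sqrt{m_\pt/m_\test}$. This confirms optimality by matching the computed lower bound on the nuclear norm.

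The main obstacle is that Lemma~\ref{lemma:min-nuclear-norm} actually exhibits a one-parameter family of nuclear-norm minimizers in the $m_\test \ge m_\pt$ regime, namely $p_1 \in [1, (m_\test/m_\pt)^{1/4}]$ with $g_1 = p_1 \sqrt{m_\pt/m_\test}$ and $f_1 = q_1 = 1$. To promote the canonical representative in~\eqref{appeq:ov-svm-solution-2} to \emph{the} solution (as the theorem is phrased), I would need an extra argument: for example, showing that among all members of the family only the $p_1 = 1$ representative can be realized as a limit of the bilinear gradient flow that actually drives Theorem~\ref{thm:svm}, or alternatively observing that every member of the family reproduces the same margin $h_{(s,r),a'}(\mOV^{\textup{F}}) \ge \sqrt{m_\pt/m_\test} \wedge 1$ needed for the downstream OCR bound in Theorem~\ref{thm:ocr}, so the nonuniqueness is inconsequential for the conclusion. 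Formalizing this selection step is where I expect the analytical subtlety to lie; the rest of the proof is a bookkeeping exercise combining the three preceding lemmas.
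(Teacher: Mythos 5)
The first half of your proposal (combining Lemmas~\ref{lemma:ov-restrict}, \ref{lemma:nuclear-norm-closed-form}, and \ref{lemma:min-nuclear-norm}, reading off the achieving parameters, and the feasibility check against~\eqref{appeq:ov-svm-restricted-coeff}) does match the paper's opening step, and you handle the typographical inconsistency between the Lemma~\ref{lemma:min-nuclear-norm} statement and the theorem well. Your worry about a one-parameter family of restricted-form minimizers can actually be discharged within the lemmas: equality in the $M_1 \geq \sqrt{n}$ bound uses both margin constraints~\eqref{eq:c2}--\eqref{eq:c3} being tight, which together with $\beta_1 = \gamma_2 = 1/2$, $\gamma_1 = \beta_2 = -1/2$ forces $p_1 = q_1$; combined with $q_1 = 1$ from the $M_2$ analysis this pins $p_1 = 1$ and eliminates the family you flagged.

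The genuine gap is a different one, and your proposal does not address it. Lemma~\ref{lemma:ov-restrict} is an \emph{existence} statement: it averages a given minimizer over permutation orbits and observes, via convexity of $\|\cdot\|_\star$, that the average is also a minimizer of the restricted block form. It does \emph{not} say that the original minimizer already has that form. The theorem, however, asserts that \emph{every} solution $\mOV^{\textup{F}}$ of~\eqref{eq:ov-svm} equals~\eqref{appeq:ov-svm-solution-1} or~\eqref{appeq:ov-svm-solution-2}. Because the nuclear norm is not strictly convex, you cannot upgrade "one minimizer has this form" to "all minimizers have this form" for free. The paper closes this gap with a separate argument (Lemma~\ref{lemma:polar-decomposition} and the second part of the theorem's proof): by a simultaneous-polarization result for nuclear-norm minimizers \citep{hoheisel_paquette_2023}, every optimum factors as $\mOV^{\textup{F}} = \Ab \cdot [\text{block template}]$ with $\Ab$ positive semidefinite and a right factor that is \emph{the same} for all minimizers; one then shows, using that $\|\Ab\|_\star = \operatorname{trace}(\Ab)$ for PSD $\Ab$ together with the margin constraints, that $\Ab$ itself is forced to the block form $\xi_1 \Ib_n + \xi_2 \Eb_n$. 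Your two proposed escape routes --- restricting to the gradient-flow-reachable limit, or arguing that the margin bound is invariant across a candidate family --- would at best suffice for the downstream Theorem~\ref{thm:ocr}, but neither proves the uniqueness that the present theorem actually asserts. That polarization step is the missing idea.
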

\begin{proof}[Proof of~\Cref{lemma:nuclear-norm-uniqueness}]
Firstly, combining Lemmas~\ref{lemma:ov-restrict} and~\ref{lemma:min-nuclear-norm}, we get Equation~\eqref{appeq:ov-svm-solution-1} and~\eqref{appeq:ov-svm-solution-2}. This shows that Equation~\eqref{appeq:ov-svm-solution-1} and~\eqref{appeq:ov-svm-solution-2} are one of the solutions to the optimization problem~\eqref{eq:ov-svm}.

Suppose that there exists an asymmetric solution of the SVM problem~\Cref{eq:ov-svm} $\mOV^{\textup{F}\star}$ that takes a different form from \eqref{appeq:ov-svm-restricted}.
\Cref{lemma:negative-solution} proves the existence of another asymmetric solution such that the difference takes the reversed sign.

\begin{lemma}[The solution with the difference taking the reverse sign]
\label{lemma:negative-solution}
Suppose that the $\mOV^{\textup{F}\star}$ defined above is an asymmetric solution. Then there exists another asymmetric solution $\widetilde{\mOV}$ and $\epsilon \in (0,1)$ such that
\[
\widetilde{\mOV} - \mOV^\textup{F} = -\epsilon \Big( \mOV^{\textup{F}\star} - \mOV^\textup{F} \Big).
\]
\end{lemma}
\begin{proof}
Let $\tilde{\varphi}$ be all possible permutations on $\mOV$ as described in \Cref{lemma:ov-restrict}, excluding the identity permutation. We get that
\[
\frac{\sum_{\tilde{\varphi}} \tilde{\varphi}(\mOV^{\textup{F}\star}) + \mOV^{\textup{F}\star}}{1 + \sum_{\tilde{\varphi}} 1} = \mOV^\textup{F}.
\]
So we can take 
\[
\widetilde{\mOV} = \frac{\sum_{\tilde{\varphi}} \tilde{\varphi}(\mOV^{\textup{F}\star}) }{\sum_{\tilde{\varphi}} 1}
\]
and get that
\[
\widetilde{\mOV} - \mOV^\textup{F} = \frac{\mOV^{\textup{F}} - \mOV^{\textup{F}\star}}{\sum_{\tilde{\varphi}} 1}.
\]
This finishes the proof for~\Cref{lemma:negative-solution}.
\end{proof}

\Cref{appass:asym-solution} implies that $\onebb_{2n}^\top (\mOV^{\textup{F}\star} - \mOV^\textup{F}) \neq \boldsymbol{0}_{nm+2}^\top$. We choose $\Ub\in\R^{2n\times r}$ and $\Vb\in\R^{(nm+2)\times r}$ so that they are the SVD decompositions corresponding to $r$ positive singular values for the symmetric solution $\mOV^{\textup{F}}$.
Since $\onebb_{2n}^\top \mOV^{\textup{F}} = \boldsymbol{0}_{nm+2}^\top$, it also holds that $\onebb_{2n}^\top \Ub = \boldsymbol{0}_{nm+2}^\top$. Using \Cref{lemma:nuclear-norm-lower}, we get
\begin{align*}
\norm{\mOV^{\textup{F}\star}}_\star & > \mathrm{Trace}(\Ub^\top \mOV^{\textup{F}\star} \Vb)\\
& = \mathrm{Trace}(\Ub^\top \mOV^{\textup{F}} \Vb) + \mathrm{Trace}(\Ub^\top (\mOV^{\textup{F}\star}-\mOV^{\textup{F}}) \Vb)\\
& = \norm{\mOV^{\textup{F}}}_\star+ \mathrm{Trace}(\Ub^\top (\mOV^{\textup{F}\star}-\mOV^{\textup{F}}) \Vb).
\end{align*}
Using~\Cref{lemma:negative-solution}, there exists another asymmetric solution $\widetilde{\mOV}$ such that
\begin{align*}
\norm{\widetilde{\mOV}}_\star & > \mathrm{Trace}(\Ub^\top \mOV^{\textup{F}} \Vb) + \mathrm{Trace}(\Ub^\top (\widetilde{\mOV}-\mOV^{\textup{F}}) \Vb)\\
& = \norm{\mOV^{\textup{F}}}_\star - \epsilon \mathrm{Trace}(\Ub^\top (\mOV^{\textup{F}\star}-\mOV^{\textup{F}}) \Vb).
\end{align*}
Therefore, we have that
\begin{align*}
~&\max\{\norm{\mOV^{\textup{F}\star}}_\star, \norm{\widetilde{\mOV}}_\star\} \\
> ~& \norm{\mOV^{\textup{F}}}_\star + \max\{\mathrm{Trace}(\Ub^\top (\mOV^{\textup{F}\star}-\mOV^{\textup{F}}) \Vb), -\epsilon \mathrm{Trace}(\Ub^\top (\mOV^{\textup{F}\star}-\mOV^{\textup{F}}) \Vb)\} \\
\geq ~& \norm{\mOV^{\textup{F}}}_\star.
\end{align*}
This leads to a contradiction that both $\mOV^{\textup{F}\star}$ and $\widetilde{\mOV}$ are solutions to~\Cref{eq:ov-svm} that minimize the nuclear norm. This confirms that all solutions would take the form~\Cref{appeq:ov-svm-restricted}, which would be~\Cref{appeq:ov-svm-solution-1,appeq:ov-svm-solution-2}.

\end{proof}

\subsubsection{Proof for non-factorized model}
\label{app:subsubsec:non_factorized_model}

Similar to \Cref{app:subsubsec_factorized_model}, we only need to consider a reduced matrix $\mOV \in \mathbb{R}^{(2n)\times(nm+2)}$ throughout this section, where each row corresponds to a token in $\Ac$ and each column corresponds to a token in $\Sc\cup \Rc$. Now We restate the second part of \Cref{thm:ocr} below.

\begin{theorem}[Part 2 in Theorem~\ref{thm:ocr}: Non-factorized model has no OCR ability]\label{appthm:w-ocr}
Let $n>1$. Suppose $\mOV$ is a solution to the SVM problem in \eqref{eq:w-svm}. For any $(s,r)\in\Dc_\test$ and any $a^\prime\in \mathcal{A}_2 \setminus \{a^\star(s,r)\}$, it holds that
\begin{equation}\label{appeq:w-svm-ocr}
h_{(s,r),a^\prime}(\mOV) = 0, \text{ indicating no OCR ability.}
\end{equation}
\end{theorem}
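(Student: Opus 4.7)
The strategy is to leverage uniqueness of the Frobenius-norm SVM minimizer, combined with two layers of symmetry, to force the entries $\mOV^*(c_i, s)$ and $\mOV^*(c_j, s)$ to coincide for any test subject $s \in \Sc_{i,\test}$. Working under the fixed-attention reduction of \Cref{subsec:implicit_bias}, the logits take the form $f_{\mOV}((s,r),a) = \mOV(a,s) + \mOV(a,r)$, so only entries $\mOV(a,z)$ with $a \in \Ac$ and $z \in \Sc \cup \Rc$ influence the constraints of \eqref{eq:w-svm}. The feasible set is a non-empty convex polytope by Assumption~\ref{assump:sep} and $\|\cdot\|_F^2$ is strictly convex, so the minimizer $\mOV^*$ of \eqref{eq:w-svm} is unique.

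First I exploit a ``joint permutation'' symmetry: for any $\sigma \in S_n$, the simultaneous relabeling $b_i \mapsto b_{\sigma(i)}$, $c_i \mapsto c_{\sigma(i)}$, $\Sc_{i, \train} \mapsto \Sc_{\sigma(i), \train}$, $\Sc_{i, \test} \mapsto \Sc_{\sigma(i), \test}$, combined with arbitrary permutations inside each $\Sc_{i,\train}$ and $\Sc_{i,\test}$, preserves $\Dc_\train$. The induced row/column permutation of $\mOV$ fixes both the Frobenius objective and the constraint set, so the image of $\mOV^*$ is again optimal and, by uniqueness, equals $\mOV^*$. Two consequences: $\mOV^*(b_i, r_k)$ and $\mOV^*(c_i, r_k)$ are independent of $i$ for $k\in\{1,2\}$, and for any $s, s' \in \Sc_{i, \test}$ the columns $\mOV^*(\cdot, s)$ and $\mOV^*(\cdot, s')$ coincide.

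The crux is a ``partial swap'' that localizes to a single test block. Fix $i$ and $j \neq i$, and define $\tilde \mOV$ from $\mOV^*$ by swapping $\mOV^*(c_i, s) \leftrightarrow \mOV^*(c_j, s)$ for every $s \in \Sc_{i, \test}$, leaving all other entries unchanged. Because test subjects $s \in \Sc_\test$ enter $\Dc_\train$ only through facts $(s, r_1, b_i)$, the only constraints that can be affected are $h_{(s,r_1), a'}$ with $s \in \Sc_{i,\test}$ and $a' \in \{c_i, c_j\}$. The swapped constraint at $a' = c_i$ becomes $\mOV^*(b_i,s) + \mOV^*(b_i, r_1) - \mOV^*(c_j, s) - \mOV^*(c_i, r_1) \geq 1$, which coincides with the original $a' = c_j$ constraint on $\mOV^*$ once we substitute $\mOV^*(c_i, r_1) = \mOV^*(c_j, r_1)$ from the first step (and symmetrically for the $c_j$ constraint). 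Hence $\tilde \mOV$ is feasible with $\|\tilde \mOV\|_F = \|\mOV^*\|_F$, and uniqueness forces $\tilde \mOV = \mOV^*$, yielding $\mOV^*(c_i, s) = \mOV^*(c_j, s)$ for all $s \in \Sc_{i, \test}$.

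To conclude, for any $(s, r_2) \in \Dc_\test$ with $s \in \Sc_{i,\test}$, choose $a' = c_j$ with some $j \neq i$ (possible since $n > 1$), so that
\[
h_{(s,r_2), c_j}(\mOV^*) = \bigl[\mOV^*(c_i, s) - \mOV^*(c_j, s)\bigr] + \bigl[\mOV^*(c_i, r_2) - \mOV^*(c_j, r_2)\bigr] = 0,
\]
where the first bracket vanishes by the partial swap and the second by the joint permutation step. The main obstacle I anticipate is the bookkeeping in the partial swap: one must confirm that no constraint outside of the $r_1$-facts for $s \in \Sc_{i, \test}$ sees any modified entry, and that both endpoint constraints genuinely match after invoking the $r_1$-symmetry. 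This structural asymmetry---test subjects participate only in $r_1$-facts during training---is exactly why the Frobenius-norm-minimizing model cannot transport information from $r_1$ to $r_2$ for test subjects, and therefore fails OCR.
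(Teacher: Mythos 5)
Your proof is correct, and it takes a genuinely different route from the paper's. The paper first shows that the unique Frobenius minimizer has a block-structured form (Lemma~\ref{lemma:w-restrict}), then computes its norm in closed form via an explicit SVD of that form (Lemma~\ref{lemma:f-norm-closed-form}), and finally minimizes the resulting quadratic to find that the test-implication block vanishes identically (Lemma~\ref{lemma:min-f-norm}, Theorem~\ref{thm:f-norm-solution}). You bypass the SVD and the explicit optimization entirely: after establishing uniqueness from strict convexity of $\|\cdot\|_F^2$ on the feasible polytope, you use the global permutation symmetry of the data (which the paper also uses, but only to arrive at its restricted form) and then add a \emph{partial swap} --- exchanging $\mOV^*(c_i,s)\leftrightarrow\mOV^*(c_j,s)$ only on columns $s\in\Sc_{i,\test}$ --- that is not in the paper. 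The bookkeeping you flag as the main risk does check out: since test subjects enter $\Dc_\train$ only through $(s,r_1,b_i)$, and since the modified entries all lie in columns indexed by $\Sc_{i,\test}$, the only constraints that see a changed entry are $h_{(s,r_1),c_i}$ and $h_{(s,r_1),c_j}$ for $s\in\Sc_{i,\test}$; these swap into each other exactly once the relation-column identity $\mOV^*(c_i,r_1)=\mOV^*(c_j,r_1)$ (from the global symmetry) is substituted. Your route is more conceptual --- it exposes that OCR failure under Frobenius minimization is a pure symmetry consequence of the data asymmetry (test subjects carry no $r_2$-constraints) --- whereas the paper's computational route additionally delivers the explicit closed-form minimizer~\eqref{appeq:f-norm-solution}. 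Note also that your argument in fact proves the stronger statement from the main text (Part 2 of Theorem~\ref{thm:ocr}): the margin vanishes for \emph{every} $a'\in\Ac_2\setminus\{a^*(s,r_2)\}$, not merely for some $a'$.
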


The proof of~\Cref{appthm:w-ocr} follows the same idea as \Cref{appthm:ov-ocr}. The roadmap of the proof is as follows:
\begin{enumerate}
\item \textbf{A unique solution of a similar form to~\Cref{lemma:ov-restrict}}: \Cref{lemma:w-restrict} shows both the existence and uniqueness of the solution to~\eqref{eq:w-svm}.
    
\item \textbf{Frobenius norm formula of the restricted form}:  Using the same SVD decomposition as in~\Cref{lemma:svd}, ~\Cref{lemma:f-norm-closed-form} gives $\|\mOV \|_\textup{F}$ in closed form~\eqref{appeq:f-norm-closed-form}.

\item \textbf{Optimization}: \Cref{lemma:min-f-norm} finds the minimum of \eqref{appeq:f-norm-closed-form}. 
\item \textbf{Solution characterization}: \Cref{thm:f-norm-solution} gives the form of the solution to \eqref{eq:w-svm} and finishes the proof for \Cref{appthm:w-ocr}. 
\end{enumerate}

\begin{lemma}[Existence and uniqueness of a restricted form solution to~\eqref{eq:w-svm}]\label{lemma:w-restrict}
Suppose $\mOV$ is the solution to the optimization problem~\eqref{eq:w-svm}. The solution must take the form in Equation~\eqref{appeq:w-svm-restricted} 
\begin{equation}\label{appeq:w-svm-restricted}
\mOV =
\Big[
\begin{matrix}
\overbrace{p_1 \Ib_n  + p_2 \Eb_n  \cdots  p_1 \Ib_n  + p_2 \Eb_n}^{m_\pt~\text{blocks}} & \overbrace{f_1 \Ib_n  + f_2 \Eb_n  \cdots  f_1 \Ib_n  + f_2 \Eb_n}^{m_\test~\text{blocks}} & \beta_1 \onebb_n & \beta_2 \onebb_n \\ 
\underbrace{q_1 \Ib_n  + q_2 \Eb_n  \cdots  q_1 \Ib_n  + q_2 \Eb_n}_{m_\pt~\text{blocks}} & \underbrace{g_1 \Ib_n  + g_2 \Eb_n  \cdots  g_1 \Ib_n  + g_2 \Eb_n}_{m_\test~\text{blocks}} & \gamma_1 \onebb_n & \gamma_2 \onebb_n
\end{matrix}
\Big].
\end{equation}
with $p_1$, $p_2$, $q_1$, $q_2$, $f_1$, $f_2$, and $g_1$, $g_2$ satisfying
\begin{equation}\label{appeq:w-svm-restricted-coeff}
\begin{aligned}
p_1, f_1, q_1 &\ge 1, \\
p_1+p_2+\beta_1 &\ge q_1+q_2+\gamma_1+1, \\
q_1 + q_2 + \gamma_2 & \ge p_1 + p_2 + \beta_2 + 1,\\
f_1 + f_2 + \beta_1 & \ge (g_1 \vee 0) + g_2 + \gamma_1 + 1.
\end{aligned}
\end{equation}
\end{lemma}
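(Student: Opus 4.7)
\begin{proofsk}
The plan is to mirror the symmetrization argument used in the proof of \Cref{lemma:ov-restrict}, but exploit the strict convexity of the squared Frobenius norm to upgrade ``existence of a restricted-form solution'' to ``every solution has the restricted form.'' Concretely, I would first identify three families of symmetries of the program \eqref{eq:w-svm} that leave both the feasible set and the objective invariant: (i) simultaneous permutations $\sigma$ of the $n$ pair-indices that relabel $\{\Sc_i, b_i, c_i\}_{i=1}^n$, implemented by left-multiplication by $\mathrm{diag}(\Pb_\sigma, \Pb_\sigma)$ and right-multiplication by $\mathrm{diag}(\Pb_\sigma \otimes \Ib_n, \ldots, \Pb_\sigma \otimes \Ib_n, 1, 1)$; (ii) permutations $\tau_\pt$ of the $m_\pt$ training-subject slots within each $\Sc_{i,\pt}$; and (iii) permutations $\tau_\test$ of the $m_\test$ test-subject slots within each $\Sc_{i,\test}$. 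As in the proof of \Cref{lemma:ov-restrict}, each of these is an orthogonal conjugation, so it preserves $\|\mOV\|_F$, and each merely relabels training tuples, so it preserves the margin constraints $h_{(s,r),a'}(\mOV)\ge 1$.

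Next I would invoke strict convexity. The objective $\tfrac{1}{2}\|\mOV\|_F^2$ is strictly convex on $\R^{2n \times (nm+2)}$ and the feasible set is a nonempty convex polytope (nonemptiness follows by taking a large enough block-diagonal identity-like matrix), so the minimizer of \eqref{eq:w-svm} is unique. Given uniqueness, if $\mOV^\star$ is the unique optimizer then every permuted matrix obtained by applying any composition of $\sigma$, $\tau_\pt$, $\tau_\test$ is also optimal and hence equals $\mOV^\star$. This invariance under the full symmetry group is exactly the condition that $\mOV^\star$ be constant within each of the $n\times n$ blocks determined by (train/test/relation) column groups and (fact/implication) row groups, and, within each $n\times n$ block, be invariant under simultaneous row/column permutation by $\Pb_\sigma$. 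A $n\times n$ matrix invariant under $X \mapsto \Pb_\sigma X \Pb_\sigma^\top$ for all $\sigma \in S_n$ is exactly of the form $\alpha \Ib_n + \beta \Eb_n$, and a $n$-vector invariant under $S_n$ is a multiple of $\onebb_n$. Matching parameters block by block yields precisely the form~\eqref{appeq:w-svm-restricted}.

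For the coefficient inequalities~\eqref{appeq:w-svm-restricted-coeff}, I would plug the restricted form into $f_{\tilde\bte}((s,r),a) = \mOV(a,s)+\mOV(a,r)$ and read off the margins. For $s\in\Sc_{i,\pt}$ with $r=r_1$, taking $a'=b_k$ with $k\ne i$ gives $h=p_1$, forcing $p_1\ge 1$; symmetrically, $(s,r_2)$ with $s\in\Sc_{i,\pt}$ forces $q_1\ge 1$, and $(s,r_1)$ with $s\in\Sc_{i,\test}$ forces $f_1\ge 1$. Taking $a'=c_i$ in the $(s,r_1)$ training constraint (the binding choice among implications) gives the second inequality; taking $a'=b_i$ in the $(s,r_2)$ training constraint gives the third; and taking $a'=c_k$ in the $(s,r_1)$ test constraint gives the fourth, where the $\max(g_1,0)$ accounts for whether $k=i$ (contributing $g_1$) or $k\ne i$ (contributing $0$) yields the binding bound.

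I expect the main technical subtlety to be the justification that invariance under the full symmetry group really collapses each $n \times n$ column-block to the $\alpha \Ib_n+\beta\Eb_n$ form, rather than merely to a matrix constant on diagonal versus off-diagonal entries in some weaker sense; this is where one must be careful that the pair-index permutation $\sigma$ acts simultaneously on the row-pair $(b,c)$ side and on every column-block, so that conjugation $X\mapsto \Pb_\sigma X \Pb_\sigma^\top$ truly exhausts $S_n\times S_n$-invariance. Everything else is routine symmetrization plus a short case analysis of the margins; no additional dynamics or implicit-bias machinery beyond \Cref{thm:svm} is needed.
\end{proofsk}
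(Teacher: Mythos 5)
Your proof is correct and follows essentially the same approach as the paper's: reuse the three permutation-symmetry families from the nuclear-norm case to constrain the form of the solution, and then invoke strict (strong) convexity of the squared Frobenius norm to conclude the minimizer is unique and therefore invariant under the full symmetry group, forcing the block structure~\eqref{appeq:w-svm-restricted}. Your orbit/fixed-point phrasing is a minor stylistic variant of the paper's averaging argument, and your explicit derivation of the margin inequalities~\eqref{appeq:w-svm-restricted-coeff} fills in a step the paper leaves implicit.
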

\begin{proof}[Proof of~\Cref{lemma:w-restrict}]
The existence proof is the same as the proof for~\Cref{lemma:ov-restrict}. Since the Frobenius norm is strongly convex, the solution is unique.
\end{proof}

\begin{lemma}\label{lemma:f-norm-closed-form}
The $\mOV$ in restricted form Equation~\eqref{appeq:w-svm-restricted} has the Frobenius norm.
\begin{equation}\label{appeq:f-norm-closed-form}
\|\mOV\|_\textup{F} = \left( n \left[ \begin{array}{l}
m_\pt((p_1+p_2)^2 + (n-1)p_2^2 + (q_1+q_2)^2 + (n-1)q_2^2) \\
\quad + m_\test((f_1+f_2)^2 + (n-1)f_2^2 + (g_1+g_2)^2 + (n-1)g_2^2) \\
\quad + \beta_1^2 + \gamma_2^2+\beta_2^2 + \gamma_1^2
\end{array} \right] \right)^{1/2}.
\end{equation}
\end{lemma}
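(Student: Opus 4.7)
The statement is a direct entrywise computation, so the plan is simply to exploit the block structure of the restricted form \eqref{appeq:w-svm-restricted} and sum the squared entries block by block, rather than going through the SVD of Lemma~\ref{lemma:svd}. The matrix decomposes into $2(m_\pt + m_\test)$ identical-type blocks of the form $\alpha\,\Ib_n + \beta\,\Eb_n$, plus four rank-one ``column blocks'' of the form $\mu\,\onebb_n$. Since the Frobenius norm squared is additive over disjoint blocks, it suffices to compute these two types of contributions in closed form and add them.

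First, I will establish the two elementary identities
\begin{align*}
\|\alpha\,\Ib_n + \beta\,\Eb_n\|_F^2 &= n(\alpha+\beta)^2 + n(n-1)\beta^2,\\
\|\mu\,\onebb_n\|_F^2 &= n\mu^2.
\end{align*}
The first follows by splitting the $n^2$ entries into the $n$ diagonal entries, each equal to $\alpha+\beta$, and the $n(n-1)$ off-diagonal entries, each equal to $\beta$. The second is immediate since $\mu\,\onebb_n$ is an $n$-vector with all entries equal to $\mu$.

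Next, I will apply these identities to each of the twelve pieces of $\mOV$: the $m_\pt$ upper blocks with coefficients $(p_1,p_2)$, the $m_\test$ upper blocks with $(f_1,f_2)$, the two upper column blocks $\beta_1\onebb_n$ and $\beta_2\onebb_n$, and the four analogous lower-row pieces with $(q_1,q_2)$, $(g_1,g_2)$, $\gamma_1$, $\gamma_2$. Summing yields
\begin{align*}
\|\mOV\|_F^2
&= m_\pt\bigl[n(p_1+p_2)^2+n(n-1)p_2^2\bigr] + m_\test\bigl[n(f_1+f_2)^2+n(n-1)f_2^2\bigr]\\
&\quad + m_\pt\bigl[n(q_1+q_2)^2+n(n-1)q_2^2\bigr] + m_\test\bigl[n(g_1+g_2)^2+n(n-1)g_2^2\bigr]\\
&\quad + n\beta_1^2 + n\beta_2^2 + n\gamma_1^2 + n\gamma_2^2.
\end{align*}
Factoring out $n$ and grouping terms by $m_\pt$ and $m_\test$ reproduces exactly the expression inside the square root in \eqref{appeq:f-norm-closed-form}, and taking the square root completes the proof.

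There is no real obstacle here; the only thing to watch is careful bookkeeping of the repeated blocks (there are $m_\pt$ copies of the same block, not $m_\pt$ distinct ones) and verifying that the off-diagonal count is $n(n-1)$ rather than $n^2-n-\ldots$. No optimization argument, homogeneity property, or use of the constraints \eqref{appeq:w-svm-restricted-coeff} is needed at this stage, since the claim is purely an algebraic identity about the Frobenius norm of a parameterized matrix family.
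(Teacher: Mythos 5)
Your proof is correct and takes a genuinely different route from the paper. The paper derives the Frobenius norm by reusing the SVD machinery of Lemma~\ref{lemma:svd}: it writes $\|\mOV\|_F^2 = \sum_i \sigma_i^2 = \text{Tr}(\Hb_1) + (n-1)\text{Tr}(\Hb_2)$, substitutes the coefficients $C_{A1}, C_{A2}, C_{D1}, C_{D2}$, and then simplifies the traces to reach \eqref{appeq:f-norm-closed-form}. You instead exploit the entrywise additivity of the Frobenius norm and sum the squared entries block by block, using only the elementary identity $\|\alpha\Ib_n + \beta\Eb_n\|_F^2 = n(\alpha+\beta)^2 + n(n-1)\beta^2$. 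Your argument is shorter and self-contained --- it does not depend on the correctness of the SVD decomposition --- which is a real advantage for a standalone lemma. The paper's choice makes sense in context because $\Hb_1, \Hb_2$ and the SVD were already set up for the nuclear norm computation in Lemma~\ref{lemma:nuclear-norm-closed-form}, and reusing that apparatus keeps the two closed-form norm computations parallel; for the nuclear norm the direct entrywise approach would not work, since that norm is not entrywise, whereas the Frobenius norm is a special case where the shortcut applies. Both derivations yield the same expression, and your bookkeeping of block counts ($m_\pt$, $m_\test$, and the $n(n-1)$ off-diagonal entries) is correct.
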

\begin{proof}[Proof of~\Cref{lemma:f-norm-closed-form}]

The square of the Frobenius norm is the sum of the squares of these $2n$ singular values:
$$\|\mOV^{\textup{F}}\|_F^2 = (\sigma_1^{(1)})^2 + (\sigma_1^{(2)})^2 + (n-1)(\sigma_2^{(1)})^2 + (n-1)(\sigma_2^{(2)})^2$$
From Lemma~\ref{lemma:svd}, we know that $(\sigma_1^{(k)})^2 = \lambda_1^{(k)}$ (eigenvalues of $\Hb_1$) and $(\sigma_2^{(k')})^2 = \lambda_2^{(k')}$ (eigenvalues of $\Hb_2$).
So,
$$\|\mOV^{\textup{F}}\|_F^2 = \lambda_1^{(1)} + \lambda_1^{(2)} + (n-1)(\lambda_2^{(1)} + \lambda_2^{(2)})$$
Since the sum of eigenvalues of a matrix is its trace, we have 
$\lambda_1^{(1)} + \lambda_1^{(2)} = \text{Tr}(\Hb_1)$,
$\lambda_2^{(1)} + \lambda_2^{(2)} = \text{Tr}(\Hb_2)$.
Therefore,
\begin{align*}
\|\mOV^{\textup{F}}\|_F^2 &= \text{Tr}(\Hb_1) + (n-1)\text{Tr}(\Hb_2) \\
&= n(C_{A1} + C_{A2} + C_{D1} + C_{D2})\\
& = n \left[ \begin{array}{l}
m_\pt((p_1+p_2)^2 + (n-1)p_2^2 + (q_1+q_2)^2 + (n-1)q_2^2) \\
\quad + m_\test((f_1+f_2)^2 + (n-1)f_2^2 + (g_1+g_2)^2 + (n-1)g_2^2) \\
\quad + \beta_1^2 + \gamma_2^2+\beta_2^2 + \gamma_1^2
\end{array} \right].
\end{align*}
This proves~\Cref{lemma:f-norm-closed-form}.
\end{proof}

\begin{lemma}\label{lemma:min-f-norm}
Let the expression $\norm{\mOV }_F$ be defined as Equation~\eqref{appeq:f-norm-closed-form}.
The closed-form minimum of $\norm{\mOV }_\textup{F}$ is given by
$$ \min \norm{\mOV }_\textup{F} = \sqrt{n} \left(2m_\pt(1-1/n)+m_\test (1-1/n) + 1\right)^{1/2}, $$
where the minimum is achieved at $p_1^\star=f_1^\star=q_1^\star=1, p_2^\star=f_2^\star=q_2^\star=-1/n, \beta_1^\star=\gamma_2^\star=1/2, \gamma_1^\star=\beta_2^\star=-1/2$, and $g_1^\star=g_2^\star=0$.
\end{lemma}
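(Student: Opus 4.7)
The plan is to lower bound the (squared) Frobenius norm from Lemma~\ref{lemma:f-norm-closed-form} block by block, using the constraints in Equation~\eqref{appeq:w-svm-restricted-coeff}, and then verify that the claimed minimizer attains every bound simultaneously and is feasible for all four constraints. The closed form splits naturally into four ``block'' contributions (the $(p_1,p_2)$, $(q_1,q_2)$, $(f_1,f_2)$, $(g_1,g_2)$ parts) plus a scalar part in $(\beta_1,\beta_2,\gamma_1,\gamma_2)$; the only coupling comes through the three inequality constraints, and I will show this coupling is slack at the optimum.

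For the first three blocks, the key identity is
\[
x^2 + (n-1)y^2 \;=\; \tfrac{n-1}{n}(x-y)^2 \;+\; \tfrac{1}{n}\bigl(x+(n-1)y\bigr)^2 \;\ge\; \tfrac{n-1}{n}(x-y)^2,
\]
with equality iff $x = -(n-1)y$. Applying this with $(x,y)=(p_1+p_2,\,p_2)$ gives $x-y=p_1\ge 1$, hence $(p_1+p_2)^2 + (n-1)p_2^2 \ge (n-1)/n$, with equality iff $p_1=1$ and $p_2=-1/n$; the same holds for the $(q_1,q_2)$ and $(f_1,f_2)$ blocks. The $(g_1,g_2)$ block has no lower-bound constraint, so $(g_1+g_2)^2 + (n-1)g_2^2 \ge 0$ with equality at $g_1=g_2=0$. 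Summing, the first four blocks contribute at least $m_\pt \cdot 2(n-1)/n + m_\test\cdot(n-1)/n$.

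For the scalar part, use $\beta_1^2+\gamma_1^2 \ge \tfrac12(\beta_1-\gamma_1)^2$ and $\beta_2^2+\gamma_2^2 \ge \tfrac12(\gamma_2-\beta_2)^2$. Adding the second and third constraint lines in~\eqref{appeq:w-svm-restricted-coeff} gives
\[
(\beta_1-\gamma_1)+(\gamma_2-\beta_2) \;\ge\; 2,
\]
so by Cauchy--Schwarz $(\beta_1-\gamma_1)^2+(\gamma_2-\beta_2)^2 \ge 2$, yielding $\beta_1^2+\gamma_1^2+\beta_2^2+\gamma_2^2 \ge 1$, with equality iff $\beta_1-\gamma_1 = \gamma_2-\beta_2 = 1$ and $\beta_1=-\gamma_1,\ \gamma_2=-\beta_2$, i.e.\ $\beta_1=\gamma_2=1/2$, $\gamma_1=\beta_2=-1/2$. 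Combining all bounds, $\|\mOV\|_F^2 \ge n\bigl[2m_\pt(1-1/n)+m_\test(1-1/n)+1\bigr]$.

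Finally I verify that the claimed $(p^\star,q^\star,f^\star,g^\star,\beta^\star,\gamma^\star)$ realize equality in every inequality above and satisfy the constraints in~\eqref{appeq:w-svm-restricted-coeff}. The first two constraint lines become $\beta_1-\gamma_1\ge 1$ and $\gamma_2-\beta_2\ge 1$, both tight at the optimum. The only non-trivial check is the fourth constraint, $f_1+f_2+\beta_1 \ge (g_1\vee 0)+g_2+\gamma_1+1$: with our values the LHS equals $3/2 - 1/n$ and the RHS equals $1/2$, so this constraint is slack for every $n\ge 1$ and does not tighten the lower bound. The principal obstacle is precisely this last step---ruling out the possibility that constraint (4), which couples the $(f,g,\beta_1,\gamma_1)$ variables through the nonlinear term $g_1\vee 0$, forces a strictly higher objective; the calculation above shows it does not, so the lower bound is attained and the minimum is as claimed.
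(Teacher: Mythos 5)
Your proof is correct and reaches the same minimizer as the paper, but it is carried out with noticeably more care at the one step where the paper is terse. The paper asserts the pointwise lower bound $(p_1+p_2)^2+(n-1)p_2^2 \ge np_2^2+2p_2+1$ (i.e., substituting $p_1=1$ directly into the objective), with equality at $p_1=1$, and then minimizes the resulting quadratic in $p_2$; but that substitution is not a valid pointwise bound (take $p_1=1.5$, $p_2=-2$: $(p_1+p_2)^2=0.25 < (1+p_2)^2=1$) and only becomes correct after jointly minimizing over $p_2$. Your orthogonal decomposition $x^2+(n-1)y^2=\tfrac{n-1}{n}(x-y)^2+\tfrac{1}{n}(x+(n-1)y)^2$ sidesteps this entirely: it isolates the constrained quantity $x-y=p_1\ge 1$ in one nonnegative term and makes the other term vanish at $p_2=-p_1/n$, so the block bound $\ge (n-1)/n$ and the equality conditions $p_1=1$, $p_2=-1/n$ fall out simultaneously with no order-of-minimization subtlety. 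Your handling of the scalar block via $\beta_1^2+\gamma_1^2\ge \tfrac12(\beta_1-\gamma_1)^2$ plus adding constraints (2)–(3) is the same calculation the paper imports from Lemma~\ref{lemma:min-nuclear-norm}. You also make explicit what the paper leaves implicit: since only constraints (1)–(3) are used in the lower bound, you must still check that constraint (4) is satisfied (in fact slack for $n>1$) at the claimed optimum, which you do. Net: same plan, but your route is cleaner and actually closes a small rigor gap in the paper's argument.
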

\begin{proof}[Proof of~\Cref{lemma:min-f-norm}]
Given the constraints in~\Cref{appeq:w-svm-restricted-coeff}, we have that
\begin{align*}
\norm{\mOV}_\textup{F} \geq \left( n \left[ \begin{array}{l}
m_\pt (np_2^2+2p_2+1+nq_2^2+2q_2+1) \\
\quad + m_\test(n f_2^2 + 2f_2 + 1 + (g_1+g_2)^2 + (n-1)g_2^2)  + 1)
\end{array} \right] \right)^{1/2},
\end{align*}
where the equality holds when $p_1^\star=f_1^\star=q_1^\star=1$ and $\beta_1^\star=\gamma_2^\star=1/2, \gamma_1^\star=\beta_2^\star=-1/2$. This lower bound is a quadratic form, which we can show that $p_2^\star=f_2^\star=q_2^\star=-1/n$ and 
$g_1^\star=g_2^\star=0$ gives its minimum. This finishes the proof of~\Cref{lemma:min-f-norm}.
\end{proof}

\begin{theorem}\label{thm:f-norm-solution}
Therefore, we have that
\begin{equation}\label{appeq:f-norm-solution}
\mOV =
\Big[
\begin{matrix}
\overbrace{\Ib_n  - \Eb_n /n \cdots \Ib_n  - \Eb_n /n}^{m_\pt~\text{blocks}} & \overbrace{\Ib_n  - \Eb_n /n  \cdots  \Ib_n  - \Eb_n /n}^{m_\test~\text{blocks}} & \onebb_n/2 & -\onebb_n/2 \\ 
\underbrace{\Ib_n  - \Eb_n /n  \cdots  \Ib_n  - \Eb_n /n}_{m_\pt~\text{blocks}} & \underbrace{\mtx{0}  \cdots  \mtx{0}}_{m_\test~\text{blocks}} & -\onebb_n/2 & \onebb_n/2
\end{matrix}
\Big].
\end{equation}
It implies that for any $s\in \mathcal{S}_\test$ and any $a^\prime\in \Ac_2 \setminus \{a^*(s,r_2)\}$, it holds that $h_{(s,r_2),a^\prime}=0$.   This proves \Cref{appthm:w-ocr}.
\end{theorem}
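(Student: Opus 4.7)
The plan is to obtain Theorem~\ref{thm:f-norm-solution} as a direct synthesis of the three preceding lemmas in Appendix~\ref{app:subsubsec:non_factorized_model}, and then compute the relevant margin on the test block by inspection. First, by Lemma~\ref{lemma:w-restrict}, any solution $\mOV$ of \eqref{eq:w-svm} is forced into the restricted block form \eqref{appeq:w-svm-restricted} parametrized by the twelve scalars $(p_1,p_2,q_1,q_2,f_1,f_2,g_1,g_2,\beta_1,\beta_2,\gamma_1,\gamma_2)$ subject to \eqref{appeq:w-svm-restricted-coeff}. Crucially, since the Frobenius norm is strongly convex on the affine feasible set, this solution is unique.

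Next, I apply Lemma~\ref{lemma:min-f-norm}, which identifies the unique minimizer of the Frobenius-norm objective \eqref{appeq:f-norm-closed-form} under constraints \eqref{appeq:w-svm-restricted-coeff} as $p_1^\star=f_1^\star=q_1^\star=1$, $p_2^\star=f_2^\star=q_2^\star=-1/n$, $\beta_1^\star=\gamma_2^\star=1/2$, $\beta_2^\star=\gamma_1^\star=-1/2$, and $g_1^\star=g_2^\star=0$. Substituting these values into the template \eqref{appeq:w-svm-restricted} immediately produces the closed form displayed in \eqref{appeq:f-norm-solution}: the upper row-block reads $\Ib_n-\Eb_n/n$ across every subject block (both train and test) and $\pm\onebb_n/2$ on the two relation columns, while the lower row-block matches the upper one on the training subject columns and on the relation columns (with flipped signs) but collapses to the zero matrix on every test subject column, because $g_1^\star=g_2^\star=0$ forces $g_1^\star\Ib_n+g_2^\star\Eb_n=\mtx{0}$.

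The final step is to read off the test-time margin directly from this matrix. Fix any $s\in\mathcal{S}_{i,\test}$ with true answer $a^\star(s,r_2)=c_i\in\mathcal{A}_2$, and any $a'\in\mathcal{A}_2\setminus\{c_i\}$. Using $f_{\tilde\bte}((s,r_2),a)=\mOV(a,s)+\mOV(a,r_2)$ and the fact that rows indexed by $\mathcal{A}_2$ vanish on every test-subject column, one gets $\mOV(c_i,s)=\mOV(a',s)=0$. On the $r_2$ column, the lower block equals $\onebb_n/2$, so $\mOV(c_i,r_2)=\mOV(a',r_2)=1/2$. Therefore
\begin{equation*}
h_{(s,r_2),a'}(\mOV)=f_{\tilde\bte}((s,r_2),c_i)-f_{\tilde\bte}((s,r_2),a')=\tfrac{1}{2}-\tfrac{1}{2}=0,
\end{equation*}
which is exactly \eqref{appeq:w-svm-ocr} and completes the proof of Theorem~\ref{appthm:w-ocr}.

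There is essentially no technical obstacle at this stage, since all the heavy lifting (restricting to the block form, computing the closed-form Frobenius norm, and minimizing the quadratic program) has been done in Lemmas~\ref{lemma:w-restrict}--\ref{lemma:min-f-norm}. The only care needed is bookkeeping: verifying that the column indexing lines up so that ``test subject columns'' in the lower ($\mathcal{A}_2$) row-block are indeed the ones that vanish, and that $r_2$ corresponds to the column with entries $\pm\onebb_n/2$. This is the structural reason the non-factorized model cannot perform OCR: minimizing the Frobenius norm drives every parameter that is never activated during training on an $\mathcal{A}_2$ label to exactly zero, collapsing the test-implication logits to a uniform constant.
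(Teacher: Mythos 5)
Your proposal is correct and follows essentially the same route as the paper: substitute the minimizing coefficients from Lemma~\ref{lemma:min-f-norm} (with uniqueness guaranteed by Lemma~\ref{lemma:w-restrict}) into the restricted block template to obtain \eqref{appeq:f-norm-solution}, then read off the zero margin on the test-implication block directly from the matrix. The paper's margin check is written in one-hot vector notation, but it computes the same quantity you do, and your observation that $g_1^\star=g_2^\star=0$ forces the lower test block to vanish is exactly the structural mechanism the paper invokes.
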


\begin{proof}[Proof of~\Cref{thm:f-norm-solution}]
We take the results from~\Cref{lemma:min-f-norm} into the restricted form~\Cref{appeq:w-svm-restricted} and get~\Cref{appeq:f-norm-solution}. Given any $s_{i,j}\in \mathcal{S}_\test$ and  $c_k \in \mathcal{A}_2$ where $k \neq i$, we have 
$$
h_{(s_{i,j},r_2),c_k} = [\boldsymbol{0}_n^\top, \eb_i^\top-\eb_k^\top]\mOV \cdot \begin{bmatrix}
0\\
\vdots\\
\eb_i\\
0\\
\vdots\\
0\\
1\\
\end{bmatrix} = 0.
$$
This proves~\Cref{thm:f-norm-solution}.
\end{proof}

\section{Proof of \Cref{sec:sub-nf-gen}} \label{sec:nf-gen}

In this section, we provide the complete proof of \Cref{theorem:nf-gen} via gradient flow analysis. We first present several key lemmas to prove \Cref{theorem:nf-gen}. In \Cref{cor:grad-reduced}, we give the gradient form of the reparameterized parameters to simplify the analysis. In \Cref{lemma:lipschitz}, we prove the Lipschitzness of the gradient flow, and \Cref{lemma:perm-equiv} proves that the gradient flow also satisfies permutation equivariance. Using these two lemmas, we are able to conclude the symmetry in parameters by employing a standard argument for the uniqueness of ODE solutions in \Cref{lemma:nf-weight-symmetry}.

\paragraph{Useful notations.} We introduce useful notations for this section. Let $\vc(\mtx{A})$ denote the vectorization of a matrix $\mtx{A}$. Specifically, for $\mtx{A} = ( a_{ij} )_{i=1,j=1}^{m,n} \in \mathbb{R}^{m\times n}$, we have $\vc(\mtx{A}) = (a_{11}, \ldots, a_{1n}, a_{21}, \ldots, a_{2n}, \ldots, a_{m1}, \ldots, a_{mn})^\top \in \mathbb{R}^{mn}$. We use $\| \mtx{A} \|_{\sf op}$ to denote the operator norm of a matrix $\mtx{A}$. For any vector $\ub \in \mathbb{R}^n$, let $\mathsf{diag}(\ub) \in \mathbb{R}^{n\times n}$ denote a diagonal matrix whose diagonal entries are corresponding entries in $\ub$. Let $\one{\event}$ denote the indicator function of an event $\event$.

\begin{lemma} \label{cor:grad-reduced} Recall that for any $z \in \Vc$, we have $\fOV(a,z) = \eb_a^\top \mOV \eb_z$ and $\fKQ(z) = \eb_z^\top \mKQ \eb_\eos$. Then for a fixed $(a,s) \in \Ac \times \Sc$, the gradient of $\fOV(a, s)$ is given by:
\[
    \partial_{\fOV(a,s)} \Lc(\tilde \bte) 
    = -\fKQ(s) \cdot \sum_{r \in \Rc} p(s, r)  \cdot \left( \one{a = a^*(s,r )} - p_{\tilde \bte}(a | s,r ) \right).
\]
Similarly, for a fixed $(a,r) \in \Ac \times \Rc$, we have:
\[
    \partial_{\fOV(a,r)} \Lc(\tilde \bte) 
    = -\fKQ(r) \cdot \sum_{s \in \Sc} p(s, r)  \cdot \left( \one{a = a^*(s,r )} - p_{\tilde \bte}(a | s,r ) \right).
\]
Moreover, for $\eos$ token and a fixed $a \in \Ac$, we have:
\[
    \partial_{\fOV(a,\eos)} \Lc(\tilde \bte) 
    = -\fKQ(\eos) \cdot \sum_{s \in \Sc} \sum_{r \in \Rc} p(s, r) \cdot \left( \one{a = a^*(s,r )} - p_{\tilde \bte}(a | s,r ) \right).
\]
Lastly, for $s \in \Sc$, we have:
\[
    \partial_{\fKQ(s)} \Lc(\tilde \bte) 
    = - \sum_{a \in \Ac} \fOV(a, s) \sum_{r \in \Rc} p(s,r) \cdot  \left( \one{a = a^*(s,r )} - p_{\tilde \bte}(a | s,r ) \right).
\]
\end{lemma}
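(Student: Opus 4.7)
The plan is to carry out a direct chain-rule computation after first rewriting the logits in terms of the scalars $\fOV(a,z)$ and $\fKQ(z)$. Since the prompt has the form $z_{1:T}=[s,r,\eos]$ and $\Sc$, $\Rc$, $\{\eos\}$ are pairwise disjoint, $\X^\top\X = \eb_s\eb_s^\top+\eb_r\eb_r^\top+\eb_\eos\eb_\eos^\top$. Contracting with $\mKQ\eb_\eos$ and then with $\eb_a^\top\mOV$ yields the additive decomposition
\[
f_{\tilde\bte}((s,r),a) \;=\; \fOV(a,s)\,\fKQ(s) \;+\; \fOV(a,r)\,\fKQ(r) \;+\; \fOV(a,\eos)\,\fKQ(\eos)
\]
for every $a\in\Ac$. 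Next I would record the standard softmax identity
\[
\frac{\partial}{\partial f_{\tilde\bte}((s,r),a)}\bigl[-\log p_{\tilde\bte}(a^*(s,r)\mid s,r)\bigr] \;=\; p_{\tilde\bte}(a\mid s,r) - \one{a=a^*(s,r)},\qquad a\in\Ac,
\]
using that the softmax in \eqref{eq:ntp} is normalized only over $\Ac$.

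The three $\fOV$ formulas then follow by chain rule and bookkeeping. By the disjointness above, the parameter $\fOV(a,s)$ with $s\in\Sc$ enters the logit only for prompts whose first token is $s$, only at answer slot $a$, and with multiplicative factor $\fKQ(s)$. Averaging under $\Dc_\train$ therefore weighs the softmax derivative by $\fKQ(s)\,p(s,r)$ and sums over the relations $r\in\Rc$ paired with $s$, giving the first identity. Identical reasoning, with ``first-token equals $s$'' replaced by ``second-token equals $r$'' (summed over $s\in\Sc$) or by ``applies to every prompt'' (double-summed over $(s,r)$), produces the formulas for $\fOV(a,r)$ and $\fOV(a,\eos)$.

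For the key-query entry, differentiating the additive logit decomposition with respect to $\fKQ(s)$ gives $\partial_{\fKQ(s)} f_{\tilde\bte}((s',r'),a) = \fOV(a,s)\,\one{s'=s}$ for every $a\in\Ac$. Applying the chain rule then contracts the softmax derivative against $\fOV(a,s)$ and sums over $a\in\Ac$, producing the fourth identity. No real obstacle is expected beyond careful indexing; the two points worth checking are (i) the disjointness of $\Sc$, $\Rc$, and $\{\eos\}$, so that the position-indicators do not collide, and (ii) the restriction of the softmax normalization to $\Ac$, which determines the range of the summation index $a$ in the $\fKQ(s)$-derivative and ensures that parameters $\fOV(a,\cdot)$ with $a\notin\Ac$ do not contribute.
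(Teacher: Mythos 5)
Your proposal is correct and follows essentially the same route as the paper: rewrite the logit as the additive sum $\sum_{t} \fOV(a,z_t)\fKQ(z_t)$, apply the softmax-derivative identity normalized over $\Ac$, and track which prompts contribute to each scalar parameter via the disjointness of $\Sc$, $\Rc$, $\{\eos\}$. The paper's only cosmetic difference is introducing a token-count $C(z_{1:T},z)$ before noting it equals $\one{z\in z_{1:T}}$, whereas you exploit the single-occurrence structure directly.
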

\begin{proof}
First, note that the logit function can be written as:
\begin{align}\label{eq:logits-reduced}
     f_{\tilde \bte}(z_{1:T}, a) = \eb_{a}^\top \mOV\X^\top\X\mKQ\x_T  = \sum_{t \in [T]} \fOV(a,z_t) \fKQ(z_t).
\end{align}
Then for any $z \in \Vc$, we get
    \begin{equation} \label{eq:p-grad}
        \begin{split}
         \partial_{\fOV(a,z)} \left ( \eb_{a'}^\top \mOV\X^\top\X\mKQ\x_T \right) & = \one{a = a'} C(z_{1:T}, z) \fKQ(z),
     \\ \partial_{\fKQ(z)} \left ( \eb_{a'}^\top \mOV\X^\top\X\mKQ\x_T \right) &= C(z_{1:T}, z) \fOV(a', z),
        \end{split}
    \end{equation} 
where $C(z_{1:T}, z)$ is the number of occurrences of $z$ in the sequence $z_{1:T}$.
Recall that we can simplify the loss function in \eqref{eq:erm} as
\begin{align*}
\Lc(\tilde \bte) & 
= \mathbb{E}_{z_{1:T+1}} \left[-\log \frac{\exp (\eb_{z_{T + 1}}^\top \mOV\X^\top\X\mKQ\x_T)}{\sum_{z' \in \Ac} \exp (\eb_{z'}^\top \mOV\X^\top\X\mKQ\x_T)}\right]
\\ &= \mathbb{E}_{z_{1:T+1}} [-\eb_{z_{T + 1}}^\top \mOV\X^\top\X\mKQ\x_T + \log {\sum_{z' \in \Ac} \exp (\eb_{z'}^\top \mOV\X^\top\X\mKQ\x_T)}].
\end{align*}
Using \eqref{eq:p-grad}, we get
\begin{align*}
    \partial_{\fOV(a,z)} \Lc(\tilde \bte) &=  \fKQ(z) \mathbb{E}_{z_{1:T+1}} \Big[-\one{a = z_{T + 1}} C(z_{1:T}, z) 
    \\ & \qquad \qquad \qquad \qquad + \frac{\sum_{z' \in \Ac}\exp (\eb_{z'}^\top \mOV\X^\top\X\mKQ\x_T) \one{a = z'} C(z_{1:T}, z)}{\sum_{z' \in \Ac} \exp (\eb_{z'}^\top \mOV\X^\top\X\mKQ\x_T)} \Big]
\\ & = -\fKQ(z) \mathbb{E}_{z_{1:T}} \Big[C(z_{1:T}, z) \big(\one{a = a^*(z_{1:T})} 
    - p_{\tilde \bte}(a | z_{1:T}) \big) \Big].
\end{align*}
Similarly, 
\begin{align*}
    \partial_{\fKQ(z)} \Lc(\tilde \bte) &
    =  \mathbb{E}_{z_{1:T}} \Big[-C(z_{1:T}, z)\fOV(a^*(z_{1 :T}), z) + \sum_{a \in \Ac}p_{\tilde \bte}(a | z_{1:T}) \fOV(a,z) C(z_{1:T}, z)  \Big]
    \\ &=  \mathbb{E}_{z_{1:T}} \Big[C(z_{1:T}, z) \big(-\fOV(a^*(z_{1 :T}), z) + \sum_{a \in \Ac}p_{\tilde \bte}(a | z_{1:T}) \fOV(a,z) \big) \Big]
    \\ &=  -\sum_{a \in \Ac} \fOV(a, z) \mathbb{E}_{z_{1:T}} \Big[C(z_{1:T}, z) \big(\one{a = a^*(z_{1 : T})} - p_{\tilde \bte}(a | z_{1:T}) \big) \Big].
\end{align*}
Let $p(s) = \sum_{z_{1:T}} p(z_{1:T}) \one{s \in z_{1:T}} = \mathbb{E}_{\Dc_\train}[\one{s \in z_{1:T}}]$, which is the marginal probability of observing $s$ under the data-generating distribution $\Dc_{\train}$. Similarly, we can define $p(r)$ and $p(s,r)$ for any $r \in\Rc$ and any $(s,r) \in \Sc\times\Rc$. Then for a fixed $(a,s) \in \Ac \times \Sc$, we have
\begin{align*}
    \partial_{\fOV(a,s)} \Lc(\tilde \bte) 
    & = -\fKQ(s) \mathbb{E}_{z_{1:T} \sim \Dc_{\train}} \Big[C(z_{1:T}, s) \big(\one{a = a^*(z_{1:T})} - p_{\tilde \bte}(a | z_{1:T}) \big) \Big]
    \\     & \stackrel{(a)}= -\fKQ(s) \mathbb{E}_{z_{1:T} \sim \Dc_{\train}} \Big[\one{s \in z_{1:T}} \big(\one{a = a^*(z_{1:T})} - p_{\tilde \bte}(a | z_{1:T}) \big) \Big]
    \\     & = -\fKQ(s) \sum_{z_{1:T}} p(z_{1:T}) \one{s \in z_{1:T}} \big(\one{a = a^*(z_{1:T})} - p_{\tilde \bte}(a | z_{1:T}) \big) 
    \\     & = -\fKQ(s) p(s)\sum_{z_{1:T}} \Big( \frac{ p(z_{1:T}) \one{s \in z_{1:T}}}{\sum_{z_{1:T}'} p(z_{1:T}') \one{s \in z_{1:T}'} } \big(\one{a = a^*(z_{1:T})} - p_{\tilde \bte}(a | z_{1:T}) \big) \Big)
    \\     & = -\fKQ(s) \cdot p(s)  \cdot \mathbb{E}_{z_{1:T}} \Big[\one{a = a^*(z_{1:T})} - p_{\tilde \bte}(a | z_{1:T}) \mid s \in z_{1:T} \Big]
    \\ & = -\fKQ(s) \cdot \sum_{r \in \Rc} p(s, r)  \cdot \mathbb{E}_{z_{1:T}} \Big[\one{a = a^*(z_{1:T})} - p_{\tilde \bte}(a | z_{1:T}) \mid s, r \in z_{1:T} \Big]
    \\ & \stackrel{(b)}= -\fKQ(s) \cdot \sum_{r \in \Rc} p(s, r)  \cdot \left( \one{a = a^*(s,r )} - p_{\tilde \bte}(a | s,r ) \right),
\end{align*}
where (a) comes from the fact that $C(z_{1:T} ,s) = \one{s \in z_{1:T}}$ as $s$ occurs at most once in the sequence from the task definition and (b) comes from the fact that $z_{1:T}$ only contains $(s, r, \eos)$ and thus $p_{\tilde \bte}(\cdot | z_{1:T}) = p_{\tilde \bte}(\cdot | s,r)$. 

Similarly, consider a fixed $(a,r) \in \Ac \times \Rc$, we have:
\begin{align*}
    \partial_{\fOV(a,r)} \Lc(\tilde \bte) 
    & = -\fKQ(r) \cdot p(r)  \cdot \mathbb{E}_{z_{1:T}} \Big[\one{a = a^*(z_{1:T})} - p_{\tilde \bte}(a | z_{1:T}) \mid s \in z_{1:T} \Big]
    \\ & = -\fKQ(r) \cdot \sum_{s \in \Sc} p(s, r)  \cdot \left( \one{a = a^*(s,r )} - p_{\tilde \bte}(a | s,r ) \right).
\end{align*}
Lastly for any $a \in \Ac$, we have:
\begin{align*}
    \partial_{\fOV(a,\eos)} \Lc(\tilde \bte) 
    & = -\fKQ(\eos) \sum_{z_{1:T}} p(z_{1:T}) \big(\one{a = a^*(z_{1:T})} - p_{\tilde \bte}(a | z_{1:T}) \big) 
    \\     & = -\fKQ(\eos) \cdot \sum_{s \in \Sc} \sum_{r \in \Rc} p(s, r) \cdot \left( \one{a = a^*(s,r )} - p_{\tilde \bte}(a | s,r ) \right).
\end{align*}
We conclude by proving the gradient in terms of $\fKQ(s)$ for any $s \in \Sc$. We have:
\begin{align*}
    \partial_{\fKQ(s)} \Lc(\tilde \bte) 
    & = - \sum_{a \in \Ac} \fOV(a, s) \cdot p(s)  \cdot \mathbb{E}_{z_{1:T}} \Big[\one{a = a^*(z_{1:T})} - p_{\tilde \bte}(a | z_{1:T}) \mid s \in z_{1:T} \Big]
    \\& = - \sum_{a \in \Ac} \fOV(a, s) \sum_{r \in \Rc} p(s,r) \cdot  \left( \one{a = a^*(s,r )} - p_{\tilde \bte}(a | s,r ) \right).
\end{align*}
\end{proof}

\begin{lemma}[Lipschitz gradient] \label{lemma:lipschitz} Let $d_0 := ( |\Ac| + 1) \cdot |\Vc|$ and concatenate the parameters by
   \[
    \w = \begin{bmatrix}
        \vc(\mOV) \\ \mKQ
    \end{bmatrix} \in \R^{d_0}.
   \]
Let $F(\w(t))$ be the vector field in the ODE
\[
   \dot \w(t) \;=\; F\bigl(\w(t)\bigr) = - \nabla_{\w} \Lc(\w),
\]
where we omit the index $t$ and set $\w(t) = \w$ when the context is clear and $\Lc(\w)$ is the cross entropy loss given by:
\[
    \Lc(\w) = \sum_{s, r} p(s,r) \left[- \log \frac{\exp \Bigl(f_{ \w} \bigl((s,r), a^*(s,r) \bigr) \Bigr)}{\sum_{a \in \Ac} \exp \Bigl(f_{ \w} \bigl((s,r), a \bigr) \Bigr)}\right],
\]
where the logit function $f_{\w}((s,r),a)$ follows Eq.~\eqref{eq:logits-reduced} with $z_{1:T} = (s,r,\eos)$.
Assuming that $\| \w \|_2 \leq R$ for some constant $R > 0$, then for any fixed $(s, r, a)$, $f_{ \w} \bigl((s,r), a \bigr)$ is Lipschitz in $\w$ with constant $L_1 := 4 R$, i.e., for any $\w_1, \w_2 \in \mathbb{R}^{d_0}$ with $\|\w_1\|_2 \leq R, \|\w_2\|_2 \leq R$, it holds that 
\[
    f_{ \w_1} \bigl((s,r), a \bigr) - f_{ \w_2} \bigl((s,r), a \bigr) \leq L_1 \| \w_1 - \w_2 \|.
\]
Moreover, there exists a constant $L := 4 |\Ac| \bigl( L_1 R +  1 \bigr)$ such that
\[
    \| F(\w_1) - F(\w_2) \| \leq  L \| \w_1 - \w_2 \|.
\] 
\end{lemma}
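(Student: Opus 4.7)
My plan is to establish the two claims in sequence, first the Lipschitz bound on the scalar logit function and then, using that together with the explicit gradient formulas from Lemma~\ref{cor:grad-reduced}, the Lipschitz bound on the full vector field $F$.

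For the logit bound, the approach is to exploit bilinearity. As already noted in the proof of Lemma~\ref{cor:grad-reduced}, the logit admits the explicit three-term expansion
\[
f_\w\bigl((s,r),a\bigr) \;=\; \fOV(a,s)\,\fKQ(s) + \fOV(a,r)\,\fKQ(r) + \fOV(a,\eos)\,\fKQ(\eos).
\]
I would apply the telescoping identity $x_1 y_1 - x_2 y_2 = (x_1 - x_2) y_1 + x_2 (y_1 - y_2)$ to each of the three products, then use Cauchy--Schwarz on the sum over $z \in \{s, r, \eos\}$, bounding the remaining factors by $\|\w_i\|_2 \le R$. This yields $|f_{\w_1}((s,r),a) - f_{\w_2}((s,r),a)| \le 2R \|\w_1 - \w_2\|_2$, which is comfortably within the claimed $L_1 = 4R$.

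For the Lipschitzness of $F$, the plan is to use Lemma~\ref{cor:grad-reduced} to write each coordinate of $F = -\nabla_\w \Lc$ explicitly in a product form $\alpha(\w)\beta(\w)$, where $\alpha$ is a single entry of $\mOV$ or $\mKQ$ (or, for the $\fKQ$-derivative coordinates, an $a$-sum of entries of $\mOV$) and $\beta$ is a probability-weighted combination of softmax residuals $\one{a=a^*(s,r)}-p_{\tilde\bte}(a \mid s,r)$. The key estimates are (i) $|\alpha(\w)| \lesssim R$ with $\alpha$ itself linear in $\w$ and hence Lipschitz with a small combinatorial constant, and (ii) $|\beta(\w)| \le 1$ together with $\beta$ being $O(L_1)$-Lipschitz, by the standard Lipschitzness of softmax composed with the $L_1$-Lipschitz logit established in the first claim. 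Applying the product-rule inequality $|\alpha_1\beta_1-\alpha_2\beta_2| \le |\alpha_1-\alpha_2|\,|\beta_1|+|\alpha_2|\,|\beta_1-\beta_2|$ then yields a coordinatewise bound of order $(1 + L_1 R)\,\|\w_1-\w_2\|_2$.

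The main obstacle I expect is the bookkeeping required to assemble these coordinatewise bounds into the Euclidean norm $\|F(\w_1)-F(\w_2)\|_2$: the two parameter blocks of $\w$ have different dimensions, and the $\fKQ$-derivative coordinates come with an additional sum over $a \in \Ac$ from Lemma~\ref{cor:grad-reduced}, which inflates both their magnitude and their Lipschitz constant by a factor of $|\Ac|$. After squaring the coordinatewise bounds, summing over the $(|\Ac|+1)\cdot|\Vc|$ coordinates, and taking the square root, a routine but tedious accounting will produce a total Lipschitz constant of the form $c_1 |\Ac|\cdot L_1 R + c_2 |\Ac|$, which collapses to the stated $L = 4|\Ac|(L_1 R + 1)$ once constants are absorbed generously.
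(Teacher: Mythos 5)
Your first claim—the Lipschitz bound on the logit—is fine, and your telescoping-plus-Cauchy–Schwarz argument is essentially the same idea as the paper's. The paper writes $f_\w((s,r),a)=\w^\top \M \w$ for a fixed $\{0,1\}$-matrix $\M$ with three nonzero entries, then uses the add-and-subtract identity $\w_1^\top \M \w_1 - \w_2^\top \M \w_2 = \w_1^\top \M(\w_1-\w_2) + (\w_1-\w_2)^\top \M \w_2$ and $\|\M\|_F=\sqrt 3 \le 2$ to get $4R$; your three-term telescoping with Cauchy–Schwarz is the same decomposition written out coordinatewise, and the $2R$ you get is actually tighter. Both comfortably satisfy the claimed $L_1 = 4R$.

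The second claim is where you have a real gap. Your plan is to prove a coordinatewise Lipschitz bound of order $(1+L_1 R)\|\w_1-\w_2\|$ for each of the $(|\Ac|+1)|\Vc|$ coordinates of $F$, then square, sum, and take the square root. As stated, this gives a global constant of order $\sqrt{(|\Ac|+1)|\Vc|}\,(1+L_1R)$. But the target is $4|\Ac|(L_1R+1)$ with no dependence on $|\Vc|$, and here $|\Vc| = |\Sc|+|\Rc|+|\Ac|+1$ where $|\Sc| = nm$ can be arbitrarily large relative to $|\Ac| = 2n$. So the $\sqrt{|\Vc|}$ factor is not a ``constant absorbed generously''; it is a genuine extra dimensional dependence that your accounting would introduce and the statement does not have. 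The error is precisely in step (ii): you bound $|\beta(\w)| \le 1$ and the Lipschitz constant of $\beta$ by $O(L_1)$, but looking at \Cref{cor:grad-reduced}, the $\beta$ attached to the coordinate $\fOV(a,s)$ is $\sum_{r} p(s,r)\bigl(\one{a=a^*(s,r)}-p_{\w}(a\mid s,r)\bigr)$, which is bounded in magnitude by $p(s)$ (not merely by $1$) and is $p(s)\cdot O(L_1)$-Lipschitz. Tracking those $p(s,r)$ weights is essential: after summing squares over $s$, $\sum_s p(s)^2 \le 1$ kills the $|\Sc|$ growth. Without them the coordinatewise sum blows up. The paper sidesteps this entirely by never summing squares over coordinates: it writes $\nabla\Lc(\w)=\sum_{s,r}p(s,r)\sum_{a}\alpha_a(\ub_\w(s,r))\nabla f_{\w}((s,r),a)$, uses $\sum_{s,r}p(s,r)=1$ to pull the $(s,r)$-sum out as a $\max_{s,r}$ via the triangle inequality, bounds $\|\alpha^{(1)}-\alpha^{(2)}\|$ by the operator norm of the softmax Hessian (which is $\le 1$), and uses $\|\nabla f_{\w,a}\|\le 4R$; the factor $|\Ac|$ appears only from the explicit $a$-sum. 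To rescue your coordinatewise route you would need to carry the $p(s,r)$ factors through both the magnitude bound and the Lipschitz bound of $\beta$ and then use $\sum_s p(s)^2\le 1$; as written, the plan does not produce the claimed constant.
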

\begin{proof} Observing that $f_{ \w} \bigl((s,r), a \bigr)$ is bilinear in $\w$, i.e., 
    \begin{align*}
        f_{ \w} \bigl((s,r), a \bigr) 
        &= \sum_{t \in [T]} \fOV(a,z_t) \fKQ(z_t)
        \\ & = \fOV(a, s) \fKQ(s) + \fOV(a, r) \fKQ(r) + \fOV(a, \eos) \fKQ(\eos) 
        \\ & =  \w ^\top \M((s,r),a) \w, 
    \end{align*}
    where $\M((s,r),a) \in \R^{d_0 \times d_0}$ has only three non-zero entries, which are equal to one, where the positions depend on $((s,r), a)$. We omit the index $((s,r),a)$ when the context is clear.
    Then for any $\w_1, \w_2 \in \R^{d_0}$ with Euclidean norm bounded by $R$, we have
    \begin{align} 
        & \quad f_{ \w_1} \bigl((s,r), a \bigr) - f_{ \w_2} \bigl((s,r), a \bigr) \notag \\ & = \w_1 ^\top \M \w_1  - \w_2 ^\top \M \w_2  \notag
        \\ & = \w_1 ^\top \M \w_1 - \w_1 ^\top \M \w_2  + \w_1 ^\top \M \w_2  - \w_2 ^\top \M \w_2 \notag 
        \\ & \leq ( \| \w_1 \| + \| \w_2 \| ) \cdot \| \M \|_F \cdot \| \w_1 - \w_2 \|  \notag
        \\ & \stackrel{(a)}\leq 4R \| \w_1 - \w_2 \| = L_1 \| \w_1 - \w_2 \|, \label{eq:lip-logits}
    \end{align}
    where (a) uses $\| \M\|_F = \sqrt 3 \leq 2$. Moreover, the gradient of $f_{\w}((s,r),a)$ is also Lipschitz:
    \begin{align}  
        \nabla f_{ \w_1} \bigl((s,r), a \bigr) - \nabla f_{ \w_2} \bigl((s,r), a \bigr)  
        & = (\M + \M^\top) (\w_1 - \w_2)
        \leq 4 \|\w_1 - \w_2 \|. \label{eq:lip-grad-logits} 
    \end{align}
    Now let $p_k(\ub) = \frac{\exp ({u_k})}{\sum_{j =1 }^K \exp (u_j)}$ and $g_k(\ub) = - \log p_k(\ub)$ for $\ub \in \R^{K}$ where $K = |\Ac|$, and denote $p(\ub) = (p_1(\ub), \ldots, p_K(\ub))^\top \in \mathbb{R}^K$, $g(\ub) = (g_1(\ub), \ldots, g_K(\ub))^\top \in \mathbb{R}^K$. Then the gradient and hessian of $g_k(\ub)$ are given by:
    \begin{align*}
        \nabla g_k(\ub) &= p(\ub) - \eb_{k} \in \R^{K}, \\ 
          H(\ub) &= \mathsf{diag}(p(\ub) ) - p(\ub) p(\ub)^\top \in \R^{K \times K}.
    \end{align*}
    By the mean-value theorem, we have:
    \begin{align} \label{eq:lip-ce}
      \| \nabla g_k(\ub ) - \nabla g_k(\vb) \| \leq \sup_{w \in \R^K} \| H(\w) \|_{\sf op} \| \ub - \vb \| \leq \| \ub - \vb \|.
    \end{align}
    Let $\ub_{\w}(s,r) =f_{\w}((s,r),\cdot) \in \R^{|\Ac|}$ be the logit vector for input $(s,r)$ and denote 
    \[
    \alpha_a(\ub_{\w}(s,r)):= {[\nabla g_{a^*(s,r)}(\ub_{\w}(s,r))]_a} = p_a(\ub_{\w}(s,r)) - \one{a = a^*(s,r)}.
    \]
    Then the gradient of the loss function can be written as
    \[  
        \nabla \Lc(\w) = \sum_{s,r} p(s,r)  \nabla_{\w} g_{a^*(s,r)} (\ub_{\w}(s,r)) = \sum_{s,r} p(s,r)  \sum_{a \in \Ac} \alpha_a(\ub_{\w}(s,r)) \nabla  f_{\w}((s,r), a).
    \]
    For brevity let $\nabla f_{\w, a} := \nabla  f_{\w}((s,r), a)$ and note that $\| \nabla f_{\w, a} \| = \|  (\M + \M ^\top )\w \| \leq 4R $.
    Combining Eq.~\eqref{eq:lip-logits} and~\eqref{eq:lip-ce} we have:
    \begin{align*}
    & \| F(\w_1 ) - F(\w_2) \| 
    \\  =&  \| \nabla \Lc(\w_1) - \nabla \Lc(\w_2) \|
    \\  \leq&  \sum_{s, r} p(s,r) \left\| \sum_{a \in \Ac} \alpha_a(\ub_{\w_1}(s,r)) \nabla f_{\w_1, a} - \sum_{a \in \Ac} \alpha_a(\ub_{\w_2}(s,r))\nabla f_{\w_2, a}\right\|
    \\  \leq& \max_{s,r} \left\| \sum_{a \in \Ac} \Bigl( \bigl(\alpha_a(\ub_{\w_1}(s,r)) - \alpha_a(\ub_{\w_2}(s,r)) \bigr) \nabla f_{\w_1, a} +  \alpha_a(\ub_{\w_2}(s,r)) \bigr(\nabla f_{\w_1, a}  -  \nabla f_{\w_2, a}\bigr ) \Bigr) \right\|
    \\   \stackrel{(a) }\leq & \max_{s,r} \left( \sum_{a\in\Ac}  \| \nabla f_{\w_1, a} \| \cdot \|\ub_{\w_1}(s,r) - \ub_{\w_2}(s,r)\| +  4|\Ac| \max_{a} |\alpha_a(\ub_{\w_2}(s,r))| \cdot \|\w_1 - \w_2 \| \right)
    \\  \stackrel{(b) }\leq & \| \w_1 - \w_2 \| \cdot \left(L_1 \sum_{a\in\Ac} \| \nabla f_{\w_1, a} \| +  4|\Ac| \right)
    \\  \leq & 4 |\Ac| \bigl( L_1 R +  1 \bigr) \| \w_1 - \w_2 \|, 
    \end{align*}
    where (a) follows Eq.~\eqref{eq:lip-grad-logits} and Eq.~\eqref{eq:lip-ce} and (b) uses Eq.~\eqref{eq:lip-logits} and $|\alpha_a(\ub_{\w_2}(s,r))| \leq 1 $ for any $a \in \Ac$.
\end{proof}

Before proceeding, we provide the following definition.

\begin{definition}[Data permutation]  \label{def:perm} Let $d_0 := ( |\Ac| + 1) \cdot |\Vc|$.
   Consider the flattened parameter $\w$ defined as
   \[
    \w = \begin{bmatrix}
        \vc(\mOV) \\ \mKQ
    \end{bmatrix} \in \R^{d_0}.
   \]
Recall $|\Ac_1| = |\Ac_2| = n$. Let $\sigma$ be any permutation over $[n]$ where $\sigma(i) \neq i$ for any $i \in [n]$ and $\pi: \Vc \to \Vc$ be a permutation function determined by $\sigma$. Specifically, for any $i \in [n], j \in [m]$, we have 
\[
    \pi(b_i) = b_{\sigma(i)}, \pi(c_i) = c_{\sigma(i)}, \pi(s_{i,j }) = s_{\sigma(i),j},
\]
and we have $\pi(v) = v$ for $v \in \{r_1, r_2, \eos \}$. Moreover, let $\Pb_\pi$ be a permutation matrix built on permutation $\pi$ defined as follows. First, we have
   \[
    \Pb := \Pb_\pi = \begin{bmatrix}
        \Pb_{\sf OV} & \mathbf{0} \\ \mathbf{0} & \Pb_{\sf KQ}
    \end{bmatrix} \in \R^{d_0 \times d_0},
   \]
where we omit the subscript $\pi$ for brevity. 
   We denote the resulting $\sf OV$ block in $\w$ after permutation as $(\Pb \w)_{\sf OV} := \Pb_{\sf OV} \vc(\mOV) \in \mathbb{R}^{|\Vc|\cdot|\Ac|}$. Similarly we denote $(\Pb \w)_{\sf KQ} := \Pb_{\sf KQ} \mKQ \in \mathbb{R}^{|\Vc|}$.
    Then, each entry of $\Pb \w$ is defined as follows: 
    \begin{itemize}
        \item $\forall \, s \in \Sc_{\ft}
    ,\quad \forall\,a \in \Ac_2:
\quad (\Pb\w)_{\sf OV}(a, s)  = \fOV(\pi(a), s),$ 
\item {$\forall \, s \in \Sc_{\pt}, \quad \forall\, a \in \Ac:
\quad (\Pb\w)_{\sf OV}(a, s)  = \fOV(\pi(a), \pi(s)),$}
        \item $    \forall\,v \in \{r_1, r_2, \eos \},\quad
\forall\,a \in \Ac_2:\quad
(\Pb\w)_{\sf OV}(a,v) = \fOV(\pi(a), v),$
\item $\forall \, s \in \Sc_{\pt}:
\quad (\Pb\w)_{\sf KQ}(s)  = \fKQ(\pi(s)),$ 
    \item Otherwise, $(\Pb \w)_{\sf OV} (a, v) = \fOV(a, v), (\Pb \w)_{\sf KQ} (v) = \fKQ(v)$.
    \end{itemize}
\end{definition}
\begin{lemma}[Gradient is permutation equivariant] \label{lemma:perm-equiv} Recall the flattened parameter $\w$ and the permutation matrix $\Pb$ defined in \Cref{def:perm}.
Denote $F(\w) = -\nabla \Lc(\w)$ following \Cref{lemma:lipschitz}, then we have
\begin{subequations}
    \begin{align}
    \forall\, s \in \Sc_\ft, \quad \forall \, a \in \Ac_2:\quad  [F(\Pb \w) ]_{\sf OV} (a, s) &= [\Pb F(\w)]_{\sf OV}(a, s),
    \label{eq:claim-ov-test} \\ 
    \forall\, s \in \Sc_\pt, \quad \forall \, {a \in \Ac}:\quad  [F(\Pb \w) ]_{\sf OV} (a, s) &= [\Pb F(\w)]_{\sf OV}(a, s),
    \label{eq:claim-ov-train} \\ 
    \forall\,v \in \Rc \cup \{\eos \},\quad
\forall\, a\in \Ac_2:\quad
    [F(\Pb \w)]_{\sf OV}(a, v)  &=  [\Pb F(\w)]_{\sf OV}(a, v) ,
\label{eq:claim-ov-rel} \\
\forall\, s \in \Sc_\pt :\quad [F(\Pb \w) ]_{\sf KQ} (s) &= [\Pb F(\w)]_{\sf KQ}(s), \label{eq:claim-kq-train}
\end{align}
\label{eq:claim-weights}
\end{subequations}
which implies the gradeint of $\Lc$ w.r.t $\w$ is permutation equivariant, i.e.,
\[
    \nabla \Lc(\Pb \w) = \Pb \nabla \Lc(\w).
\]
\end{lemma}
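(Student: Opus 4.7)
My plan is to verify each of the four equations \eqref{eq:claim-ov-test}--\eqref{eq:claim-kq-train} by substituting the closed-form gradient expressions from \Cref{cor:grad-reduced} on both sides and matching terms via the symmetry of $\pi$ and the data distribution.

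The first step is to unpack how $\Pb$ acts on the bilinear logit $f_{\Pb\w}((s,r),a) = (\Pb\w)_{\sf OV}(a,s)(\Pb\w)_{\sf KQ}(s) + (\Pb\w)_{\sf OV}(a,r)(\Pb\w)_{\sf KQ}(r) + (\Pb\w)_{\sf OV}(a,\eos)(\Pb\w)_{\sf KQ}(\eos)$ by reading off the five cases in \Cref{def:perm}. A direct expansion yields the two clean identities $f_{\Pb\w}((s,r),a) = f_\w((s,r),\pi(a))$ when $s \in \Sc_\ft, a \in \Ac_2$, and $f_{\Pb\w}((s,r),a) = f_\w((\pi(s),r),\pi(a))$ when $s \in \Sc_\pt, a \in \Ac_2$. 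Since $\pi$ permutes $\Ac_1$ and $\Ac_2$ separately, the softmax denominator $\sum_{a' \in \Ac} \exp(f_{\Pb\w}((s,r),a'))$ can be re-indexed by $a' \mapsto \pi(a')$ and is therefore invariant. This yields the compact transformation rule $p_{\Pb\w}(a|s,r) = p_\w(\pi(a)|\psi(s),r)$ on the relevant entries, where $\psi(s) = s$ for $s \in \Sc_\ft$ and $\psi(s) = \pi(s)$ for $s \in \Sc_\pt$.

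With this transformation rule in hand, each equation becomes a direct substitution. For \eqref{eq:claim-ov-test}, the facts that $p(s,r_2) = 0$ for $s \in \Sc_\ft$ and $a \in \Ac_2 \neq a^\ast(s,r_1)$ collapse the formula in \Cref{cor:grad-reduced} to $\fKQ(s)\,p(s,r_1)\,p_{\Pb\w}(a|s,r_1)$, which equals $\fKQ(s)\,p(s,r_1)\,p_\w(\pi(a)|s,r_1) = [F(\w)]_{\sf OV}(\pi(a),s) = [\Pb F(\w)]_{\sf OV}(a,s)$. For \eqref{eq:claim-ov-train}, \eqref{eq:claim-ov-rel}, and \eqref{eq:claim-kq-train}, I would additionally invoke the data-distribution symmetries $p(s,r) = p(\pi(s),r)$ for $s \in \Sc_\pt$ and $a^\ast(\pi(s),r) = \pi(a^\ast(s,r))$ to align the indicator terms $\one{a = a^\ast(s,r)} = \one{\pi(a) = a^\ast(\pi(s),r)}$, and the bijection $a \mapsto \pi(a)$ on $\Ac_2$ to re-index the outer summation in the $\fKQ$ gradient. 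The full equivariance $\nabla \Lc(\Pb\w) = \Pb \nabla\Lc(\w)$ claimed after the four equations then follows by assembling the component identities, noting that $\Pb$ is identity on the remaining coordinates.

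The technical obstacle I anticipate is the $a \in \Ac_1$ case in \eqref{eq:claim-ov-train}: because $\Pb$ acts as identity on the $\{r_1,r_2,\eos\}$ columns for rows in $\Ac_1$, the logit identity $f_{\Pb\w}((s,r),a) = f_\w((\pi(s),r),\pi(a))$ does not extend cleanly. To close this gap, I would split the softmax $\sum_{a' \in \Ac}\exp(f_{\Pb\w}((s,r),a'))$ into its $\Ac_1$ and $\Ac_2$ pieces, re-index the $\Ac_2$ piece perfectly via $\pi$, and then argue that the residual asymmetric contributions from the $\Ac_1$ piece pair correctly after exploiting $p(s,r_1) = p(s,r_2)$ for $s \in \Sc_\pt$ together with the $r_1 \leftrightarrow r_2$ symmetry in the training data. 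This careful bookkeeping of the asymmetric $\Ac_1$ versus $\Ac_2$ action is the technical heart of the argument; once handled, the remaining steps are routine algebra using \Cref{cor:grad-reduced}.
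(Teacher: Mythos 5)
Your approach mirrors the paper's — you verify each of the four displays by substituting the gradient formulas from \Cref{cor:grad-reduced}, derive the logit transformation $f_{\Pb\w} = f_{\w}\circ(\text{index permutation})$, and reindex the softmax denominator. You get the $s\in\Sc_\ft, a\in\Ac_2$ and $s\in\Sc_\pt, a\in\Ac_2$ cases essentially as the paper does.

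The obstacle you flag at the end, however, is not a bookkeeping detail to be patched — it is a genuine gap, and in fact it is a gap in the \emph{paper's own} Part~2 derivation. Under \Cref{def:perm} as written, the third bullet permutes the relation/\eos\ columns only on rows $a\in\Ac_2$; for $a\in\Ac_1$ these entries fall to the ``Otherwise'' clause and are unchanged, so for $s\in\Sc_\pt$, $a\in\Ac_1$ one gets
\[
f_{\Pb\w}((s,r),a) = \fOV(\pi(a),\pi(s))\fKQ(\pi(s)) + \fOV(a,r)\fKQ(r) + \fOV(a,\eos)\fKQ(\eos),
\]
which is \emph{not} $f_\w((\pi(s),r),\pi(a))$. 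Consequently the softmax denominator $\sum_{a'}\exp(f_{\Pb\w}((s,r),a'))$ does not equal $\sum_{a'}\exp(f_\w((\pi(s),r),a'))$ for generic $\w$ — the $\Ac_1$ piece does not reindex — so $p_{\Pb\w}(a|s,r)\ne p_\w(\pi(a)|\pi(s),r)$ and Eq.~\eqref{eq:claim-ov-train} does not follow for $a\in\Ac_1$. The paper's Part~2 silently asserts the clean logit identity for all $a\in\Ac$, which amounts to reading the third bullet as $\forall a\in\Ac$; but making that change breaks the denominator reindexing in Part~1, because then $f_{\Pb\w}((s,r),a')\ne f_\w((s,r),a')$ for $a'\in\Ac_1$, $s\in\Sc_\ft$. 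The two parts cannot simultaneously hold under any single reading of \Cref{def:perm}, so the lemma as stated appears to require a repaired permutation (one that also maps $\Sc_\ft$ columns, making $\Lc$ genuinely $\Pb$-invariant), and the downstream deduction in \Cref{lemma:nf-weight-symmetry} would then need a composition-of-derangements argument.

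Your proposed repair — split the denominator into $\Ac_1$ and $\Ac_2$ pieces, reindex the $\Ac_2$ part, and ``pair'' the $\Ac_1$ residuals using $p(s,r_1)=p(s,r_2)$ — does not close the gap. The $r_1\leftrightarrow r_2$ symmetry operates on the outer sum $\sum_r p(s,r)[\cdot]$ in \Cref{cor:grad-reduced}, but the mismatch lives \emph{inside} each fixed-$r$ softmax denominator; the offending term is $\sum_{a'\in\Ac_1}\exp(\fOV(\pi(a'),\pi(s))\fKQ(\pi(s))+\fOV(a',r)\fKQ(r)+\fOV(a',\eos)\fKQ(\eos))$, which for generic $\w$ cannot be re-expressed as $\sum_{a'\in\Ac_1}\exp(f_\w((\pi(s),r),a'))$ without imposing $\fOV(a',r)=\fOV(\pi(a'),r)$ — precisely the symmetry the whole argument is trying to \emph{establish}. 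Summing over $r$ with equal weights does not make two exponential sums with different exponents equal. So your worry is well-founded, but the suggested fix is circular; a correct repair has to change the permutation $\Pb$ itself, not just the softmax bookkeeping.
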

\begin{proof} 
We examine Eq.~\eqref{eq:claim-ov-test} -~\eqref{eq:claim-kq-train} one by one.

\noindent \textbf{Part 1: Proof of Eq.~\eqref{eq:claim-ov-test}. } Given a fixed $s \in \Sc_{\ft}, a \in \Ac_2$, we will prove 
\begin{align}    \label{eq:perm-equi-test-name}
[F(\Pb \w) ]_{\sf OV} (a, s) = [\Pb F(\w)]_{\sf OV}(a, s).
\end{align}
Using \Cref{cor:grad-reduced}, we have:
\begin{align*}
    [F(\w)]_{\sf OV}(a, s) 
    & = -\partial_{\fOV(a, s)} \Lc(\w) 
    \\ &= \fKQ(s) \sum_{r \in \Rc} p(s,r)[ \one {a = a^*(s,r)  } - p_{\w}(a |s,r)] 
    \\ & \stackrel{(a)}= \fKQ(s)\cdot p(s,r_1)[ \one {a = a^*(s,r_1)  } - p_{\w}(a |s,r_1)], 
\end{align*}
where (a) comes from the fact that $p(s,r_2) = 0$ since $s \in \Sc_\ft$. Then for the right side of Eq.~\eqref{eq:perm-equi-test-name}, we have
\begin{equation}\label{eq:perm-equiv-name-right-1}
    \begin{aligned}
    [\Pb F(\w)]_{\sf OV}(a, s)  \stackrel{(a)} =& -\partial_{\fOV(\pi(a), s)} \Lc(\w)
    \\ =& \fKQ(s) \cdot p(s,r_1)[ \one {\pi(a) = a^*(s,r_1)  } - p_{\w}(\pi(a) |s,r_1)], 
    \end{aligned}
\end{equation}
where (a) holds by \Cref{def:perm}. Similarly, for the left-hand side, we have 
\begin{equation}
\begin{aligned}
        [F(\Pb \w)]_{\sf OV}(a, s)
    & = -\partial_{\fOV(a, s)} \Lc(\Pb \w)
    \\ &  \stackrel{(a)}  = \fKQ(s) \cdot p(s,r_1) [ \one {a = a^*(s,r_1)  } - p_{\Pb \w}(a |s,r_1)],  
\end{aligned} \label{eq:perm-equiv-name-left-1}
\end{equation}
{where (a) comes from $(\Pb \w)_{\sf KQ}(s) = \fKQ(s)$} for any $s \in \Sc_\ft$. For the prediction probability, recall from \eqref{eq:ntp}, we have
\begin{align*}
p_{\w}(a | s, r ) 
&= \frac{\exp (f_{ \w}((s,r), a))}{\sum_{a' \in \Ac} \exp (f_{ \w}((s,r), a')))}.    
\end{align*} 
The logit function $f_{ \w}((s,r), a)$ can be written as 
\begin{align*}
        f_{\w }((s,r), a)  
        =& \fKQ(s) \fOV(a, s)
        + \fKQ(r) \fOV(a, r)
        + \fKQ(\eos) \fOV(a, \eos).
\end{align*}
Therefore, given $s \in \Sc_\ft$ and $a \in \Ac_2$, we have
\begin{align*}
        f_{\Pb \w }((s,r), a)  
        =& {\fKQ(s) \fOV(\pi(a), s)
        + \fKQ(r) \fOV(\pi(a), r)}
        \\ &+ \fKQ(\eos) \fOV(\pi(a), \eos) 
        \\ =& f_{\w} ((s, r), \pi(a)). 
\end{align*}
{Similarly, for any $a' \in \Ac_1$, given $s \in \Sc_\ft$, we have}
\begin{align*}
    f_{\Pb \w }((s,r), a') = f_{\w} ((s, r), a'). 
\end{align*}
Then
\begin{equation}
\begin{aligned}
        p_{\Pb \w}(a | s, r ) 
&= \frac{\exp (f_{\Pb \w}((s,r), a))}{\sum_{a' \in \Ac} \exp (f_{\Pb \w}((s,r), a')))}   
\\ & = \frac{\exp (f_{ \w}((s,r), \pi(a)))}{\sum_{a' \in \Ac_1} \exp (f_{\w} ((s, r), a') ) + \sum_{a' \in \Ac_2} \exp (f_{\w} ((s, r), \pi(a')) )} 
\\ & \stackrel{(a)}= \frac{\exp (f_{ \w}((s,r), \pi(a)))}{\sum_{z \in \Ac} \exp (f_{\w}((s,r), z)))} 
\\ &= p_{\w} (\pi(a) | s, r), 
\end{aligned} \label{eq:prediction-prob-perm}
\end{equation}
where in (a) we re-index the summand by $z := \pi(a')$ since $\pi$ is a bijection for any $a' \in \Ac_2$. Then Eq.~\eqref{eq:perm-equiv-name-left-1} can be written as
\begin{align} 
    [F(\Pb \w)]_{\sf OV}(a, s)
    &= \fKQ(s) \cdot p(s,r_1)[ \one {a = a^*(s,r_1)  } - p_{\w}(\pi(a) |s,r_1)]. \label{eq:perm-equiv-name-left-2}
\end{align}
By comparing Eq.~\eqref{eq:perm-equiv-name-right-1} and Eq.~\eqref{eq:perm-equiv-name-left-2}, it remains to prove 
\[
\one {a = a^*(s,r_1)  } = \one {\pi(a) = a^*(s,r_1)  },
\]
which both equal $0$ since $a, \pi(a) \in \Ac_2$. Then we conclude that for any $s \in \Sc_\ft, a \in \Ac_2$,
\[
[F(\Pb \w) ]_{\sf OV} (a, s) = [\Pb F(\w)]_{\sf OV}(a, s).
\]

\noindent \textbf{Part 2: Proof of Eq.~\eqref{eq:claim-ov-train}. }Consider any fixed $s \in \Sc_\pt, a \in \Ac$, we start from the RHS of the equation:

\begin{equation}
\begin{aligned} 
    [\Pb F(\w)]_{\sf OV}(a, s)
    & = -\partial_{\fOV(\pi(a), \pi(s))} \Lc(\w)
    \\ &= \fKQ(\pi(s)) \cdot \sum_{r \in \Rc}p(\pi(s),r)[ \one {\pi(a) = a^*(\pi(s),r)  } - p_{\w}(\pi(a) |\pi(s),r)]
    \\ & \stackrel{(a)}= \fKQ(\pi(s)) \cdot \sum_{r \in \Rc} p(s,r)[ \one {\pi(a) = a^*(\pi(s),r)  } - p_{\w}(\pi(a) |\pi(s),r)], 
\end{aligned} \label{eq:perm-equiv-train-right-1}
\end{equation}
where (a) uses $p(s,r) = p(\pi(s),r)$ for any fixed $r \in \Rc$ as data is uniformly distributed. Similarly, on the LHS, we have 
\begin{align*} 
    [F(\Pb \w)]_{\sf OV}(a, s)
    & = -\partial_{\fOV(a, s)} \Lc(\Pb \w)
    \\ & = \fKQ(\pi(s)) \cdot \sum_{r \in \Rc} p(s,r) [ \one {a = a^*(s,r)  } - p_{\Pb \w}(a |s,r)],  
\end{align*}
To proceed, given $s \in \Sc_\pt$ and $a \in \Ac$, we have
\begin{align*}
        f_{\Pb \w }((s,r), a)  
        &= {\fKQ(\pi(s)) \fOV(\pi(a), \pi(s))
        + \fKQ(r) \fOV(\pi(a), r)}
        \\ & \quad + \fKQ(\eos) \fOV(\pi(a), \eos) 
        \\ &= f_{\w} ((\pi(s), r), \pi(a)). 
\end{align*}
Following Eq.~\eqref{eq:prediction-prob-perm}, when $s \in \Sc_\pt, a \in \Ac$, we have
\begin{equation}
\begin{aligned}
       p_{\Pb \w}(a | s, r ) 
= & \frac{\exp (f_{\Pb \w}((s,r), a))}{\sum_{a' \in \Ac} \exp (f_{\Pb \w}((s,r), a')))}  
\\ \stackrel{(a)}=& \frac{\exp (f_{ \w}((\pi(s),r), \pi(a)))}{\sum_{z \in \Ac} \exp (f_{\w}((\pi(s),r), z)))}
\\ =& p_{\w} (\pi(a) | \pi(s), r), 
\end{aligned}  
\label{eq:prediction-prob-perm-trained}
\end{equation}

where in (a) we re-index the summand by $z := \pi(a')$. Thus we get
\begin{equation}
\begin{aligned} 
    [F(\Pb \w)]_{\sf OV}(a, s)
    &= \fKQ(\pi(s)) \cdot \sum_{r \in \Rc} p(s,r) [ \one {a = a^*(s,r)  } - p_{\Pb \w}(a |s,r)]  
    \\ &  = \fKQ(\pi(s))\cdot \sum_{r \in \Rc} p(s,r) [ \one {a = a^*(s,r)  } - p_{ \w}(\pi(a) |\pi(s),r)],   
\end{aligned} \label{eq:perm-equiv-train-left-1}
\end{equation}
By comparing Eq.~\eqref{eq:perm-equiv-train-right-1} and Eq.~\eqref{eq:perm-equiv-train-left-1}, it remains to prove for any $r \in \Rc$,
\[
\one {a = a^*(s,r)  } = \one {\pi(a) = a^*(\pi(s),r)  },
\]
which holds since the event $a = a^*(s,r)$ is equivalent to $\pi(a) = a^*(\pi(s),r)$ by the definition of $\pi$ and the dataset construction.
Then we can conclude that for any $s \in \Sc_\pt, a \in \Ac$,
\[
[F(\Pb \w) ]_{\sf OV} (a, s) = [\Pb F(\w)]_{\sf OV}(a, s).
\]
\noindent \textbf{Part 3: Proof of Eq.~\eqref{eq:claim-ov-rel}. }Let's first consider $v \in \Rc = \{r_1, r_2\}$. For any fixed $r:= v \in \{r_1, r_2\}$ and $a \in \Ac_2$, using \Cref{cor:grad-reduced}, we have
\begin{align*}
    [F(\w)]_{\sf OV}(a, r) 
    & = \fKQ(r) \cdot \sum_{s \in \Sc} p(s, r)  \cdot \left( \one{a = a^*(s,r )} - p_{\w}(a | s,r ) \right).
\end{align*}
Then we have
\begin{equation}
\label{eq:perm-equiv-rel-right-1}    
\begin{aligned}
    [\Pb F(\w)]_{\sf OV}(a, r)  
    =& \fKQ(r) \cdot  \sum_{s \in \Sc} p(s, r)  \cdot \left( \one{\pi(a) = a^*(s,r )} - p_{\w}(\pi(a) | s,r ) \right) \\ =&  \fKQ(r) \cdot  \sum_{s \in \Sc_\pt} p(s, r)  \cdot \left( \one{\pi(a) = a^*(s,r )} - p_{\w}(\pi(a) | s,r ) \right)
    \\ & + \fKQ(r) \cdot  \sum_{s \in \Sc_\ft} p(s, r)  \cdot \left( \one{\pi(a) = a^*(s,r )} - p_{\w}(\pi(a) | s,r ) \right).
\end{aligned}
\end{equation}

Similarly,
\begin{equation}
 \label{eq:perm-equiv-rel-left-1}
\begin{aligned}
    [F(\Pb \w)]_{\sf OV}(a, r) 
    =& \fKQ(r) \cdot \sum_{s \in \Sc} p(s, r)  \cdot \left( \one{a = a^*(s,r )} - p_{\Pb \w}(a | s,r ) \right) 
    \\ \stackrel{(a)} =& \fKQ(r) \cdot 
    \sum_{s \in \Sc_{\pt}} p(s, r)  \cdot \left( \one{a = a^*(s,r )} - p_{\w}(\pi(a) | \pi(s),r ) \right)   
    \\  & {+  \fKQ(r) \cdot 
    \sum_{s \in \Sc_{\ft}} p(s, r)  \cdot \left( \one{a = a^*(s,r )} - p_{\w}(\pi(a) | s, r ) \right)},
\end{aligned}
\end{equation}
{where (a) follows Eq.~\eqref{eq:prediction-prob-perm}.}
Now we compare \eqref{eq:perm-equiv-rel-right-1} and \eqref{eq:perm-equiv-rel-left-1}. We first consider $\Sc_\pt$. Note that for any $s \in \Sc_\pt$ and fixed $r \in \Rc$, the values of $p(s,r)$ are the same due to the data distribution, and we denote the value as $p_\pt$. To proceed, let $i, i'$ be the corresponding index of $a, \pi(a)$ in $\Ac_2$, i.e., $a = c_i, \pi(a) = c_{i'}$. Then 
\begin{equation}    
    \begin{aligned}
  &  \sum_{s \in \Sc_\pt} p(s, r)  \cdot \left( \bigl( \one{a = a^*(s,r )} - p_{\w}(\pi(a) | \pi(s),r ) \bigr) - \bigl( \one{\pi(a) = a^*(s,r )} - p_{\w}(\pi(a) | s,r ) \bigr)  \right)
  \\ =& p_\pt \sum_{j \in [m_\pt] }\left(
     \sum_{k \in [n], k \notin \{i, i'\}}  
      \underbrace{\left(\one{a = a^*(s_{k,j},r)} 
             - \one{\pi(a) = a^*(s_{k,j},r)}\right)}_{\Gamma(a,\pi(a), s_{k,j} ,r )}  \right. 
     \\  &  \qquad \qquad \qquad \qquad  + \sum_{k' \in \{i, i'\}}  
       \underbrace{\left( \one{a = a^*(s_{k', j},r)} 
             - \one{\pi(a) = a^*(s_{k', j},r)}  \right)}_{\Gamma(a,\pi(a), s_{k',j} ,r )}  
     \\ & \left.  \qquad \qquad \qquad \qquad  +  \sum_{k \in [n]}{
     p_{\w}(\pi(a) | s_{k, j},r) - p_{\w}(\pi(a) | \pi(s_{k, j}),r))} \right). 
\end{aligned} \label{eq:perm-equiv-rel-train}
\end{equation}
For the first term ${\Gamma(a,\pi(a), s_{k,j} ,r )}$, note that $\one{a = a^*(s_{k,j},r)} 
             = \one{\pi(a) = a^*(s_{k,j},r)} = 0 $ for any $j \in [m], k \in [n], k \neq \{i, i'\}$. Now we consider the second term ${\Gamma(a,\pi(a), s_{k',j} ,r )}$. We have
    \begin{align*}
        &\one{a = a^*(s_{k', j},r_2)} 
             =  1, \quad  \one{\pi(a) = a^*(s_{k', j},r_2)} = 0, \quad \text{when } k' = i,
            \\         & \one{a = a^*(s_{k', j},r_2)} 
             = 0, \quad \one{\pi(a) = a^*(s_{k', j},r_2)} = 1, \quad \text{when } k' = i',
             \\ & \one{a = a^*(s_{k', j},r_1)} 
             = \one{\pi(a) = a^*(s_{k', j},r_1)} = 0, \quad \text{when } k' \in \{i, i'\},
    \end{align*}
which implies $\sum_{k' \in \{i, i'\}} \Gamma(a,\pi(a), s_{k', j} ,r) = 0$. Then we can simplify Eq.~\eqref{eq:perm-equiv-rel-train} as
\begin{equation}    
    \begin{aligned}
  &  \sum_{s \in \Sc_\pt} p(s, r)  \cdot \left( \bigl( \one{a = a^*(s,r )} - p_{\w}(\pi(a) | \pi(s),r ) \bigr) - \bigl( \one{\pi(a) = a^*(s,r )} - p_{\w}(\pi(a) | s,r ) \bigr) \right)
  \\ = & p_\pt  \sum_{j \in [m_\pt] } \left( \sum_{k \in [n]}{
     p_{\w}(\pi(a) | s_{k, j},r) - \sum_{k \in [n]} p_{\w}(\pi(a) | \pi(s_{k, j}),r))} \right)
  \\ \stackrel{(a)} = & p_\pt  \sum_{j \in [m_\pt] } \left( \sum_{k \in [n]}{
     p_{\w}(\pi(a) | s_{k, j},r) - \sum_{k' \in [n]} p_{\w}(\pi(a) | s_{k', j},r))} \right)
     \\ =& 0,
\end{aligned}
\label{eq:perm-equiv-rel-train-reduced}
\end{equation}
where in (a) we re-index the summand $\sum_{k \in [n]}p_{\w}\bigl(\pi(a) | \pi(s_{k,j}),r)\bigr)$ by noting that $\pi(s_{k,j}) = s_{\sigma(k), j}$.
Next, we consider $\Sc_\ft$. When $r = r_2$, we have $p(s,r_2) = 0$ for any $s \in \Sc_\ft$. Otherwise, when $r = r_1$, we similarly can define  $p_\ft := p(s,r_1)$ and obtain that
\begin{equation}
\begin{aligned}
   &  \sum_{s \in \Sc_\ft} p(s, r)  \cdot \left(\bigl( \one{\pi(a) = a^*(s,r )} - p_{\w}(\pi(a) | s,r ) \bigr) - \bigl( \one{a = a^*(s,r )} - p_{\w}(\pi(a) | s,r ) \bigr) \right) \\
    =&  p_\ft \sum_{s \in \Sc_\ft} \left( \one{\pi(a) = a^*(s,r )} - \one{a = a^*(s,r )} \right) \\
    \stackrel{(a)}= & 0, 
\end{aligned}
\label{eq:perm-equiv-rel-test}
\end{equation}
where (a) holds since for any $s \in \Sc_\ft$, $\one{\pi(a) = a^*(s,r )} = \one{a = a^*(s,r )} = 0$ when $r = r_1, a \in \Ac_2$.
Combining Eq.~\eqref{eq:perm-equiv-rel-train-reduced} and~\eqref{eq:perm-equiv-rel-test}, for any $r \in \Rc$ we have
\[
    [F(\Pb \w)]_{\sf OV}(a, r)  =  [\Pb F(\w)]_{\sf OV}(a, r). 
\]
When $v = \eos$, using \Cref{cor:grad-reduced} again, we have
\begin{align*}
    [F(\w)]_{\sf OV}(a, \eos) 
    & = \fKQ(\eos) \cdot \sum_{r \in \Rc}  \sum_{s \in \Sc} p(s, r)  \cdot \left( \one{a = a^*(s,r)} - p_{\w}(a | s,r) \right)
    \\ & = \frac{\fKQ(\eos)}{\fKQ(r)} \cdot \sum_{r \in \Rc} \fKQ(r) \sum_{s \in \Sc}  p(s, r)  \cdot \left( \one{a = a^*(s,r)} - p_{\w}(a | s,r) \right)
    \\ & = \frac{\fKQ(\eos)}{\fKQ(r)} \cdot \sum_{r \in \Rc} [F(\w)]_{\sf OV}(a, r). 
\end{align*}
Thus we have 
\begin{align*}
   & [F(\Pb\w)]_{\sf OV}(a, \eos)  - [\Pb F(\w)]_{\sf OV}(a, \eos) 
   \\ & =\frac{\fKQ(\eos)}{\fKQ(r)} \cdot \sum_{r \in \Rc} \left( [F(\Pb \w)]_{\sf OV}(a, r)  -   [\Pb F(\w)]_{\sf OV}(a, r) \right) 
   \\ & = 0.
\end{align*}

\noindent \textbf{Part 4: Proof of Eq.~\eqref{eq:claim-kq-train}. } Given a fixed $s \in \Sc_\pt$, using \Cref{cor:grad-reduced}, we have:
\begin{align*}
    [F(\w)]_{\sf KQ}(s) 
    & = -\partial_{\fKQ(s)} \Lc(\w) 
    \\ &= \sum_{r \in \Rc} p(s,r) \sum_{a \in \Ac} \fOV(a, s) [ \one {a = a^*(s,r)  } - p_{\w}(a |s,r)]. 
\end{align*}
For the RHS of Eq.~\eqref{eq:claim-kq-train}, we have
\begin{align}
    [\Pb F(\w)]_{\sf KQ}(s) 
    & = -\partial_{\fKQ(\pi(s))} \Lc(\w) \notag
    \\ &= \sum_{r \in \Rc} p(\pi(s),r) \sum_{a \in \Ac} \fOV(a, \pi(s)) [ \one {a = a^*(\pi(s),r)  } - p_{\w}(a | \pi(s),r)] \notag 
    \\ & \stackrel{(a)} = \sum_{r \in \Rc} p(s,r) \sum_{a \in \Ac} \fOV(a, \pi(s)) [ \one {a = a^*(\pi(s),r)  } - p_{\w}(a | \pi(s),r)], 
    \label{eq:perm-equiv-kq-right-1}
\end{align}
where (a) uses $p(s,r) = p(\pi(s),r), \: \forall r \in \Rc$. On the LHS, we have
\begin{equation}
\label{eq:perm-equiv-kq-left-1}
\begin{aligned} 
    [F(\Pb \w)]_{\sf KQ}(s) 
     \stackrel{(a)}=& \sum_{r \in \Rc} p(s,r) {\sum_{a \in \Ac} \fOV(\pi(a), \pi(s)) [ \one {a = a^*(s,r)  } - p_{\Pb \w}(a |s,r)]} \\
    \stackrel{(b)}=& \sum_{r \in \Rc} p(s,r) \sum_{a \in \Ac} \fOV(\pi(a), \pi(s)) [ \one {a = a^*(s,r)  } - p_{\w}(\pi(a) |\pi(s),r)] \\
    \stackrel{(c)}=& \sum_{r \in \Rc} p(s,r) \sum_{z \in \Ac} \fOV(z, \pi(s)) [ \one {\pi^{-1}(z) = a^*(s,r)  } - p_{\w}(z | \pi(s),r)],  
\end{aligned}
\end{equation}
where (a) comes from $(\Pb \w)_{\sf OV}(a,s) = \fOV(\pi(a), \pi(s))$ for $s \in \Sc_\train$, $a \in \Ac$, (b) follows Eq.~\eqref{eq:prediction-prob-perm-trained},
 and in (c) we re-index the summand by $z = \pi(a)$ and thus $a = \pi^{-1}(z)$. Now by comparing Eq.~\eqref{eq:perm-equiv-kq-right-1} and Eq.~\eqref{eq:perm-equiv-kq-left-1},
{ it remains to prove that for any $a \in \Ac$,
\[
    \one {\pi^{-1}(a) = a^*(s,r)  } = \one {a = a^*(\pi(s),r) },
\]which holds by the definition of $\pi$.} 
\end{proof}

\begin{lemma} \label{lemma:nf-weight-symmetry} Assuming \Cref{assumption:init-nf} holds.
    Let $\Pb$ be the permutation matrix defined in \Cref{lemma:perm-equiv}. If 
    for any $t \geq 0$, there exists a constant $R> 0$ such that $\|\w\| \leq R$ is bounded, then $F(\w) := -\nabla \Lc(\w)$ is Lipschitz with some constant $L$. Moreover, since $F(\w)$ is permutation equivariant w.r.t $\Pb$, i.e., $F(\Pb \w) = \Pb F(\w)$ as established in \Cref{lemma:perm-equiv}, we have
       \[
        \w (t) = \Pb \w(t).
   \]
   In particular, for any $t \geq 0$, we have
   \begin{align}
        \fOV (a, v) = \fOV(a',v), \quad \forall\,v \in \Sc_\ft \cup \Rc \cup \{\eos \}, \,
    \forall\, a, a'\in \Ac_2.
    \label{eq:symmetry-ov}
    \end{align}
\end{lemma}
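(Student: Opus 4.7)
The plan is to reduce the symmetry claim to the uniqueness of solutions for a Lipschitz ODE, using the permutation equivariance established in Lemma~\ref{lemma:perm-equiv}.

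First, I would verify that the symmetric initialization prescribed in Assumption~\ref{assumption:init-nf} is invariant under $\Pb$, i.e., $\Pb\w(0)=\w(0)$. Since $\fOV(a,z)=\alpha$ is constant across all $(a,z)$ and $\fKQ(z)=\alpha\sqrt{|\Ac|+1}$ is constant across all $z$, permuting rows/columns according to any $\pi$ from Definition~\ref{def:perm} leaves every entry untouched. Next, given the boundedness hypothesis $\|\w(t)\|\le R$, Lemma~\ref{lemma:lipschitz} provides a global Lipschitz constant $L$ for $F$ on the ball of radius $R$, so Picard--Lindel\"of guarantees that the initial value problem $\dot\w=F(\w),\,\w(0)=\w_0$ has a unique solution on $[0,\infty)$.

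The core step is to consider the curve $\tilde\w(t):=\Pb\w(t)$ and show that it is also a solution of the same IVP. Differentiating and applying Lemma~\ref{lemma:perm-equiv} gives
\begin{equation*}
\dot{\tilde\w}(t)=\Pb\dot\w(t)=\Pb F(\w(t))=F(\Pb\w(t))=F(\tilde\w(t)),
\end{equation*}
while $\tilde\w(0)=\Pb\w(0)=\w(0)$ by the first step. By uniqueness, $\Pb\w(t)=\w(t)$ for all $t\ge 0$. Reading off the entries indexed as in Definition~\ref{def:perm} and unpacking $\Pb$ then yields $\fOV(\pi(a),v)=\fOV(a,v)$ for every $v\in\Sc_\ft\cup\Rc\cup\{\eos\}$ and every $a\in\Ac_2$, for any admissible derangement $\pi$.

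To upgrade this to the full claim \eqref{eq:symmetry-ov}, I would argue that since $n>1$ (i.e.\ $|\Ac_2|>1$, which is assumed in Theorem~\ref{theorem:nf-gen}), for any distinct $a,a'\in\Ac_2$ there exists a derangement $\sigma$ of $[n]$ sending the index of $a$ to that of $a'$ (e.g.\ swap the two indices and derange the remaining $n-2$ in any manner, which is possible whenever $n\ge 2$ by taking a cyclic shift or a composition of transpositions). The corresponding $\pi$ then witnesses $\fOV(a,v)=\fOV(a',v)$.

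The main obstacle is less conceptual than bookkeeping: one must be careful that Lemma~\ref{lemma:perm-equiv} only covers the \emph{subset} of coordinates listed in \eqref{eq:claim-weights}, namely the $\sf OV$ entries at test-subjects (for $a\in\Ac_2$), at training-subjects (for all $a$), at relations and $\eos$ (for $a\in\Ac_2$), and the $\sf KQ$ entries at training-subjects. I would check that every coordinate affected by $\Pb$ (as specified in Definition~\ref{def:perm}) is precisely one of these, so that the equivariance identity $F(\Pb\w)=\Pb F(\w)$ indeed holds as a vector identity; for coordinates that $\Pb$ leaves fixed, equivariance is automatic. Once this coordinate-by-coordinate verification is in place, the ODE-uniqueness argument closes the proof.
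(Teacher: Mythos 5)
Your proposal is correct and follows essentially the same approach as the paper: both reduce the symmetry claim to ODE uniqueness by observing that $\Pb\w(0)=\w(0)$ and that $F$ is permutation-equivariant, hence $\Pb\w(t)$ and $\w(t)$ solve the same initial value problem. The paper implements the uniqueness step directly via a Gronwall estimate on $\Db(t)=\w(t)-\Pb\w(t)$, whereas you invoke Picard--Lindel\"of; these are equivalent, since the Gronwall bound is precisely how uniqueness in Picard--Lindel\"of is proved. Your additional care about the derangement constraint ($\sigma(i)\neq i$) when upgrading $\fOV(\pi(a),v)=\fOV(a,v)$ to $\fOV(a,v)=\fOV(a',v)$ is a point the paper treats briefly, and your coordinate-by-coordinate check that every entry moved by $\Pb$ is covered by Lemma~\ref{lemma:perm-equiv} is a sound and worthwhile verification.
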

\begin{proof} Note that when $\w$ is bounded by $R$, i.e., $\| \w \| \leq R$, $F(\w)$ is Lipschitz with constant $L$ using \Cref{lemma:lipschitz}. Moreover, \Cref{lemma:perm-equiv} indicates that $F(\w)$ is also permutation equivariant. Let $\Db(t) := \w(t) - \Pb \w(t) $, then we have $\Db(0) = \mathbf{0}$ by \Cref{assumption:init-nf}. Note that \[
    \dot \Db(t) = \dot \w(t) - \Pb \dot \w(t) = F(\w(t)) - \Pb F(\w(t)) = F(\w(t)) - F(\Pb \w(t) ).
\]
Let $\w_1 = \w(t), \w_2 = \Pb \w(t)$, respectively. Using \Cref{lemma:lipschitz}, we get
\[
  \| \dot \Db(t) \| =  \| F(\w(t)) - F(\Pb \w(t)) \| \leq L \| \Db(t) \|.
\]
Then
\[
    \frac{d}{d t} \| \Db(t) \|=  \frac{\Db(t) ^\top \dot \Db(t)}{\| \Db(t) \|} \stackrel{(a) }\leq \| \dot \Db(t) \| \leq L \| \Db(t) \|,
\]
where (a) uses the Cauchy-Schwarz inequality. Using Gronwall's inequality, for all $t \geq 0$, we have
\[
    \| \Db(t) \| \leq \| \Db(0) \| e^{L t} = 0,
\]
since $\| \Db(0) \|= 0$. Equivalently, for any $t \geq 0$, we have:
\[
    \w(t) = \Pb \w(t).
\]
Moreover, recall the definition of $\pi, \Pb$ in \Cref{lemma:perm-equiv}. Given a fixed $v \in \Sc_\ft \cup \Rc \cup \{\eos \}$ and any $a, a'\in \Ac_2$, we have
   \begin{align}
        \fOV (a, v) \stackrel{(a)}= \fOV(\pi(a),v) \stackrel{(b)}= \fOV(a',v) ,
    \end{align}
    where (a) follows \Cref{def:perm}, and (b) holds since for any fixed $a, a' \in \Ac_2$, one can find a permutation $\pi$ such that $\pi(a) = a'$.
\end{proof}
Now we are ready to prove \Cref{theorem:nf-gen}.
\begin{proof}[Proof of \Cref{theorem:nf-gen}]
   First note that for any $t \geq 0$, the parameters $\tilde \bte_t$ or equivalently its flattened form $\w(t)$ is bounded. Then using \Cref{lemma:nf-weight-symmetry}, we have
    \[
        \w(t) = \Pb \w(t).
    \]
    Consider any time $t \geq 0$, $s \in \Sc_\finetune$ and $\bar a := a^*(s, r_2)$, the prediction probability is given by
    \[
        p_{\tilde \bte_t} (\bar a | s, r_2 ) = \frac{\exp (f_{\tilde \bte_t}((s,r_2), \bar a))}{\sum_{a' \in \Ac} \exp (f_{\tilde \bte_t}((s,r_2), a')))}.
    \]
    We proceed by showing that $f_{\tilde \bte_t}((s,r_2), \bar a)$ is no larger than $f_{\tilde \bte_t}((s,r_2), a)$ for any $a \in \Ac_2$. WLOG, consider a fixed $a \in \Ac_2$ where $a \neq \bar a$. Comparing the logit functions we get
    \begin{align*}
        f_{\tilde \bte_t}((s,r_2), \bar a) - f_{\tilde \bte_t}((s,r_2), a)
         =&  \fKQ(s; t)\bigl(\underbrace{\fOV(\bar a, s; t)) - \fOV(a, s; t)) }_{A(a, \bar a, s; t)} \bigr) 
        \\  &  +  \fKQ(r_2; t) \bigl(\underbrace{\fOV(\bar a, r_2; t) -\fOV(a, r_2; t)}_{B(a, \bar a, r_2 ;t)}\bigr) 
        \\  & +  \fKQ(\eos; t) \bigl(\underbrace{ \fOV(\bar a, \eos; t) -  \fOV(a, \eos; t)}_{C(a, \bar a, \eos ;t)} \bigr) 
        \\ =& 0, 
    \end{align*}
    where $A(a, \bar a, s; t) = 0 = B(a, \bar a, r_2 ;t) = C(a, \bar a, 
    \eos;t) = 0$ following \Cref{lemma:nf-weight-symmetry}. As a result, for all $a \in \Ac_2$ and any $t \geq 0$, 
we have
\[
f_{\tilde \bte_t}((s,r_2), \bar a ) =  f_{\tilde \bte_t}((s,r_2), a).
\]
Thus for any input $(s,r_2)$ where $s \in \sFinetune$, its prediction probability can be upper bounded
    \[ 
    p_{\tilde \bte_t} (\bar a  | s, r_2 ) = \frac{\exp (f_{\tilde \bte_t}((s,r_2), \bar a ))}{\sum_{a' \in \Ac} \exp (f_{\tilde \bte_t}((s,r_2), a')))} \leq \frac{\exp (f_{\tilde \bte_t}((s,r_2), \bar a ))}{\sum_{a' \in \Ac_2} \exp (f_{\tilde \bte_t}((s,r_2), a')))} = 1 / |\Ac_2 |,
    \]
which implies that 
\[
    \lTest(\tilde \bte_t) = {\mathbb{E}_{z_{1:T+1} \sim \Dc_{\test}} [-\log p_{\tilde \bte_t}(z_{T+1} | z_{1:T} )]} \geq \log |\Ac_2|.
\]
    
\end{proof}

\section{Additional Experiments for One-layer Models}

\label{app:sec_exp_one_layer}
We provide additional experimental results to verify our theoretical results in \Cref{sec:one-layer-exp}.

\begin{figure}[!t]
    \centering
    \subfigure{
        \includegraphics[width=0.45\textwidth]{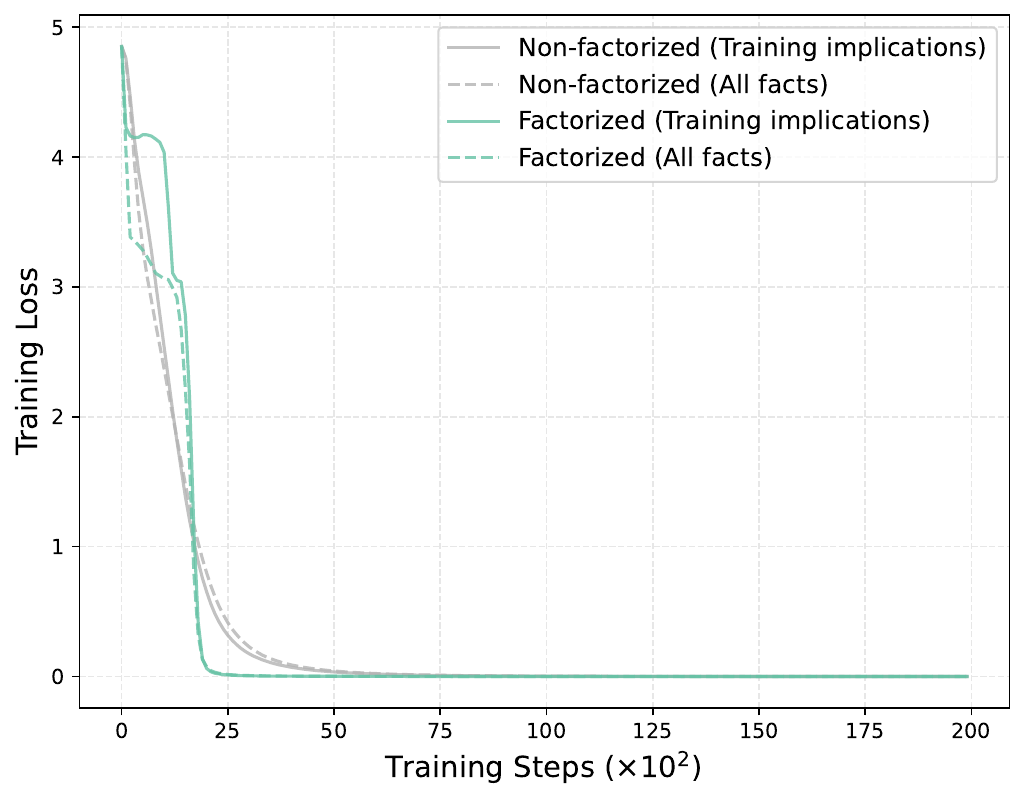}
    }
    \subfigure{
        \includegraphics[width=0.45\textwidth]{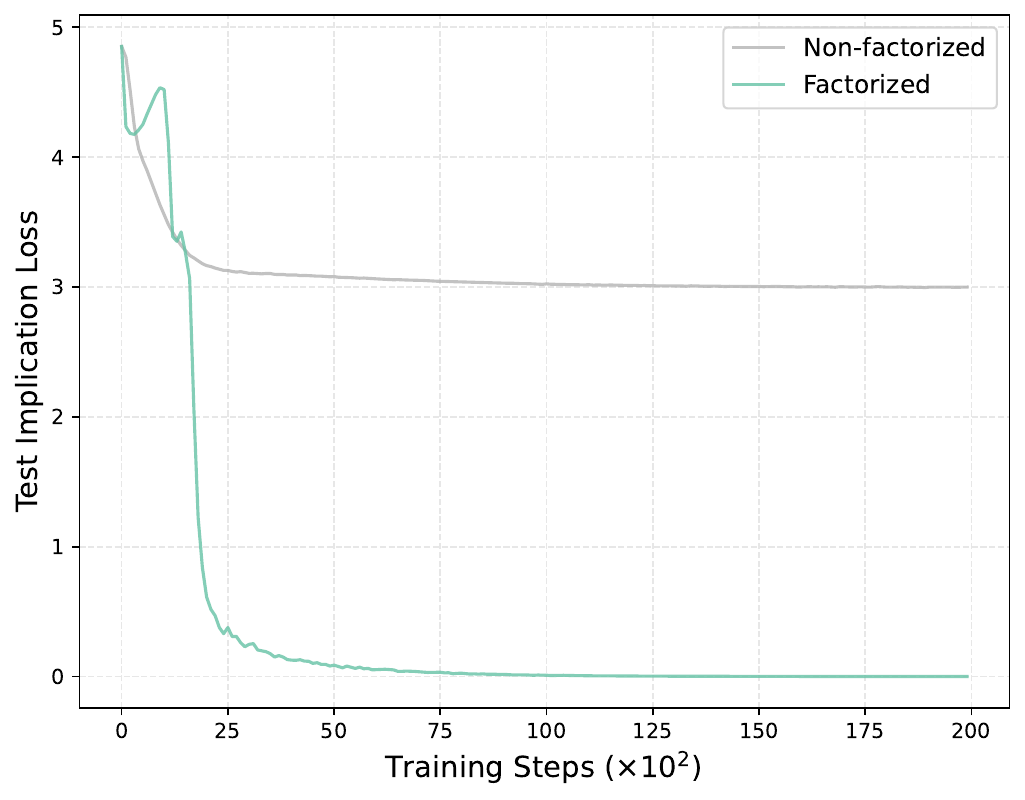}
    }
    \vspace{-2mm}
    \caption{\textbf{Training and Test Implication Loss for Factorized vs. Non-Factorized Models.} While both models effectively minimize the training loss (\textit{left}), their performance on unseen test implications differs starkly (\textit{right}). The factorized model successfully generalizes, achieving low test implication loss and thus demonstrating OCR, while the non-factorized model fails to generalize.}
    \label{fig:loss-one-layer}
\end{figure}

\begin{figure}[H]
    \centering
    \subfigure{
        \includegraphics[width=\textwidth]{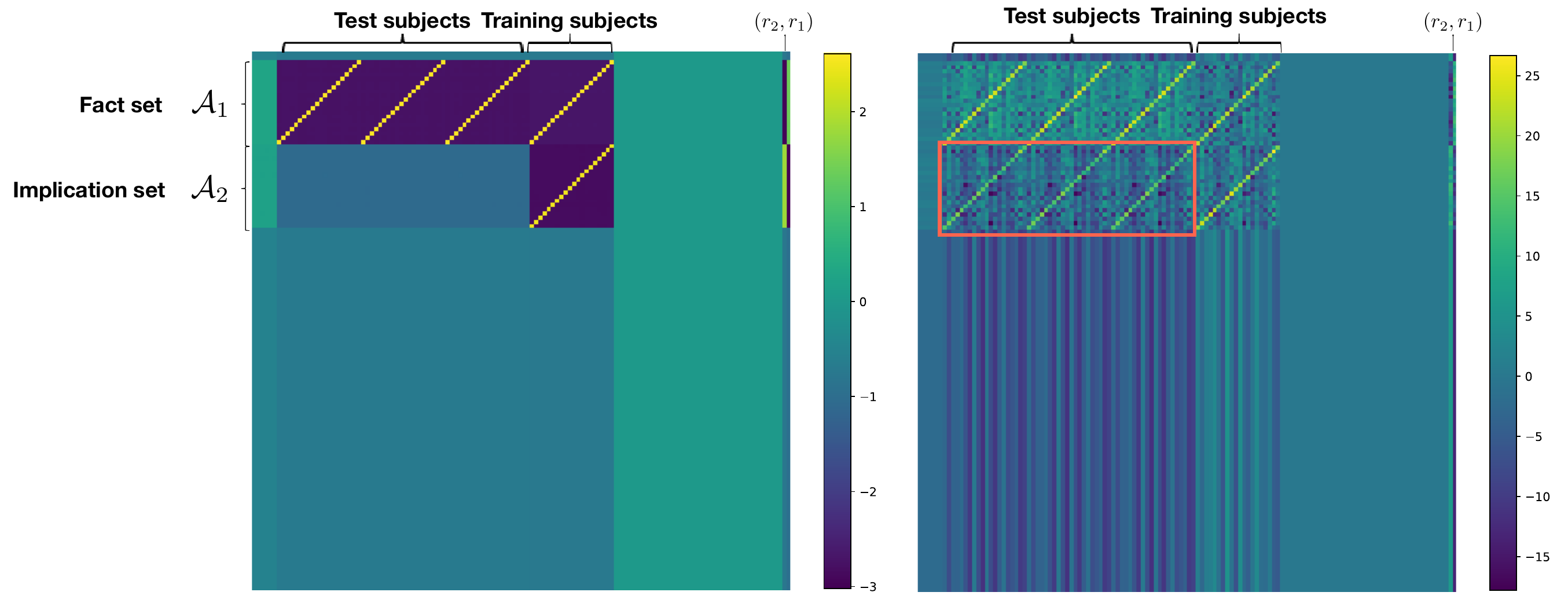}
    }
    \vspace{-2mm}
    \caption{\textbf{Comparison of full weights of trained one-layer linear attention models.} \textit{Left: } Non-factorized model. \textit{Right: } Factorized model. The factorized model shows strong OCR capability compared to the non-factorized model. }
    \label{fig:weight-one-layer-full}
\end{figure}

\paragraph{Loss curves. }In \Cref{fig:loss-one-layer}, we present the training and test loss curve during training, which shows that the factorized model can exhibit OCR while the non-factorized model cannot. Recall that the training loss contains two parts: loss on all facts and implications of training subjects (training implication), and the model is evaluated on the implications of test subjects (test implication).
\paragraph{Full weight inspection. }In \Cref{fig:weight-one-layer-linear}, we only showed partial model weights that are related to prediction. For completeness, we show the full model weights $\mOV = \mOutput \mValue^\top$ in \Cref{fig:weight-one-layer-full}.
\paragraph{SVM solutions. }We setup the SVM problems defined in \eqref{eq:w-svm} and \eqref{eq:ov-svm} with $|\Sc| = 12, n = 3, m = 4, m_\pt = 3$. Solutions by CVXPY \citep{diamond2016cvxpy} are shown on the left side of \Cref{fig:weight-svm}. The results are consistent with the weights of the one-layer attention model in \Cref{fig:weight-one-layer-linear}. Moreover, we decompose the solution of \eqref{eq:ov-svm} using SVD and keep the directions with singular values larger than $10^{-5}$. This results in $\mOV^{\textup{F}} = \mOutput \mValue^\top$ with intrinsic dimension $d_h = 3$. On the right of \Cref{fig:weight-svm}, we visualize the corresponding rows of the subjects and relations in $\mValue$ as well as the corresponding rows of answers in $\mOutput$, which suggests that the model generalizes effectively even with a small hidden dimension.
\begin{figure}[!t]
    \centering
    \subfigure{
    \includegraphics[width=\textwidth]{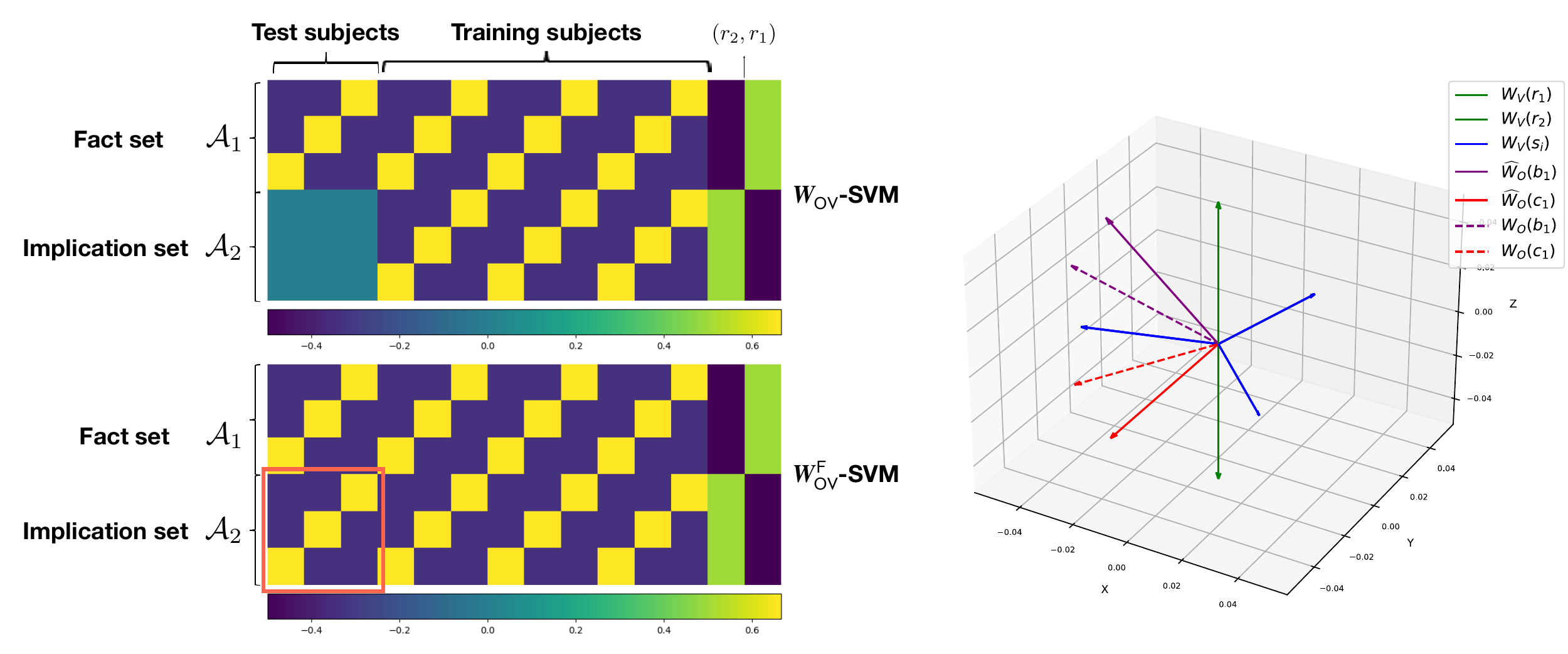}
    }
    \vspace{-2mm}
    \caption{\textbf{Comparison of solutions to \eqref{eq:w-svm} and \eqref{eq:ov-svm}.} \textit{Top Left: } \eqref{eq:w-svm} with the Frobenius norm objective. \textit{Bottom Left: } \eqref{eq:ov-svm} with the nuclear norm objective. Here we only show the partial weights in the output-value matrix related to the prediction, i.e., $\mOV \in \R^{|\Ac| \times (mn + 2)}$. \textit{Right: } Geometric interpretation of $\mOutput$ and $\mValue$ solved in \eqref{eq:ov-svm}. All the subjects' feature vectors (corresponding to rows in $\mValue$) reside in the $xy$ plane while the relation vectors corresponding to $r_1$ and $r_2$ are orthogonal to the subjects and point in opposite directions. The predictions $\widehat{\mOutput}(b_i), \widehat{\mOutput}(c_i)$ are made by summing up the feature vector of $s_i$ with $r_1$ or $r_2$, which aligns well with the features of $b_i$ or $c_i$ respectively (plotted in the figure, which are corresponding rows in $\mOutput$) with cosine similarity greater than $0.9$.}
    \label{fig:weight-svm}
\end{figure}

\paragraph{Lower bound of the intrinsic dimension of output and value matrix.} 
 To further support the claim that the factorized model generalizes effectively with a small hidden dimension, we train a one-layer attention model in \Cref{fig:weight-one-layer-rank} and sweep across multiple candidate values for the intrinsic dimension, i.e., $d_h \in \{3, 4, 8, 16, 32, 128\}$ with the embedding dimension $d = 128$, $m_\pt = 3$ and other parameters unchanged, which demonstrates that OCR can be achieved efficiently in terms of the number of parameters.
\begin{figure}[!t]
    \centering
    \subfigure{
        \includegraphics[width=0.65\textwidth]{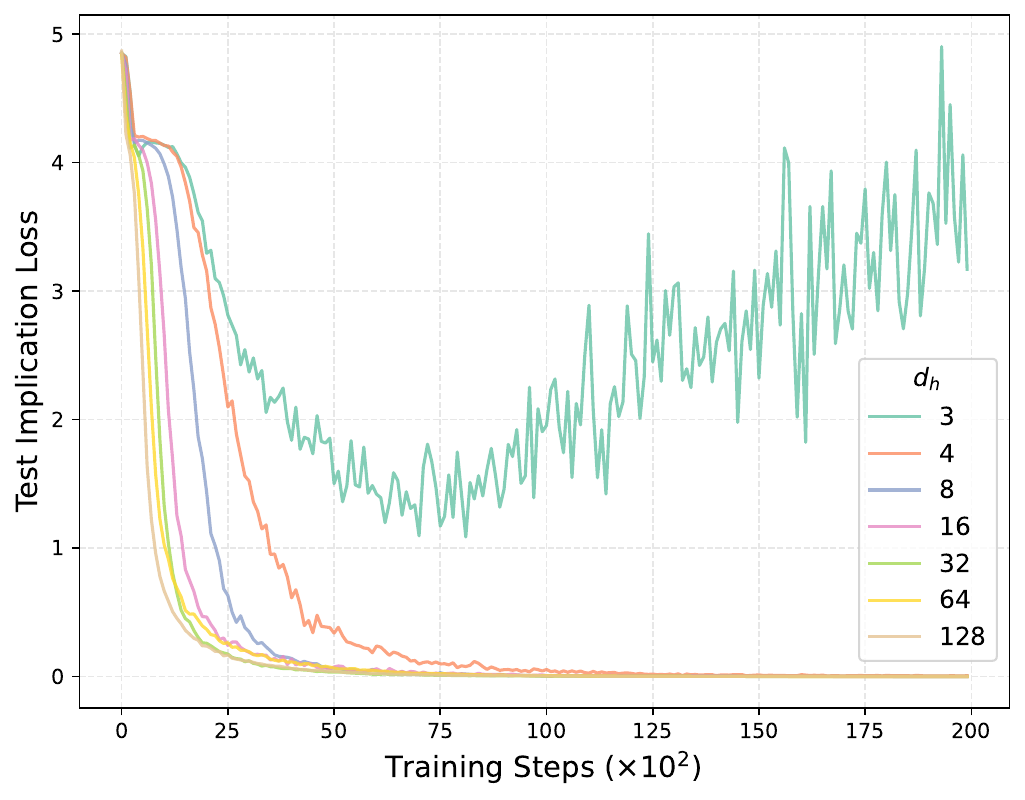}
    }
    \vspace{-2mm}
    \caption{Test loss versus training steps for a one-layer linear attention model with varying intrinsic dimensions ($d_h$). The plot demonstrates that the model can exhibit OCR even when the intrinsic dimension is as small as $d_h = 4$.}
    \label{fig:weight-one-layer-rank}
\end{figure}

\section{Additional LLM Experiments on Real-World Data} 
\label{app:pop-qa}
To verify our claim in real-world datasets, we extended the LLM experiments in \Cref{sec:llm} to PopQA \citep{mallen2022not}, which is a large-scale open-domain question answering (QA) dataset. Each question in PopQA follows the same format as in our synthetic dataset – a knowledge triple (subject, relation, answer). Specifically, we use a subset of PopQA consisting of the \textit{place of birth (POB) and sport} relations termed \textbf{PopQA-OCR} and treat the first relation as fact and the second as implication. We randomly sample 500 subjects and randomly pair 10 facts from available POBs with 10 implications from available sports in PopQA-OCR. We follow the same data processing scheme as in \Cref{app:imp} with a $0.5$ training ratio. This new controlled dataset exhibits three key distinctions, compared to the synthetic dataset: 1) We scale up the number of subjects and fact-implication pairs, which results in \textasciitilde $10x$ training samples compared to the synthetic one. 2) The new task follows a question answering format: “Q: In what city was Antoine Richard born? A: Vinga” while the synthetic one uses a simple text generation formulation. 3) The subject names are real instead of fictitious, which is more likely to collide with the pretrained knowledge, thus inducing new challenges in fine-tuning. The results can be found in \Cref{tab:popqa}, which show that LLMs continue to exhibit OCR-driven hallucination on real-world data.

\begin{table}[tb]
\begin{center}
\caption{\small{
Performance comparison of different language models on PopQA-OCR.
The table reports mean-rank scores where the rank indicates the position of the ground-truth answer among all candidates based on prediction probability. 
Lower ranks indicate better performance and Rank 0 refers to the token with the largest probablity.
Values in parentheses indicate the standard error of the mean-rank scores, calculated from 3 runs with different random seeds.
} 
}
\vspace{1mm}
\small
\setlength{\tabcolsep}{4pt}
\begin{tabular}{c|ccccc}
\toprule
\textbf{{Models}} &
{Gemma-2-9B} &
{OLMo-7B} &
{Qwen-2-7B} &
{Mistral-7B-v0.3} &
{Llama-3-8B} \\
\midrule
PoB-Sport        & 0.68 (0.27) & 1.41 (0.29) & 3.30 (0.62) & 1.36 (0.70) & 2.29 (4.26) \\
\bottomrule
\end{tabular}
\label{tab:popqa}
\end{center}
\end{table}

\section{Implementation Details} \label{app:imp}
Our code is released at \url{https://github.com/yixiao-huang/OCR-Theory}.
We provide additional details on experiments in \Cref{sec:llm} and \Cref{sec:one-layer-exp}.

\paragraph{Training.} Throughout the paper, we finetune the models using the cross-entropy loss with AdamW optimizer~\citep{kingma2014adam}. We build on the implementation of \citet{feng2024extractive} for all LLM experiments\footnote{\url{https://github.com/jiahai-feng/extractive-structures}} and adopt a different training scheme for one-layer transformer as discussed in \Cref{sec:task}. For experiments on LLMs, we use full batch and train for $100$ epochs. Similar to \citet{feng2024extractive}, we notice that OCR is sensitive to learning rates and thus we sweep across different learning rates in $\{10^{-6}, 3\cdot 10^{-6}, 10^{-5}, 3\cdot 10^{-5}, 10^{-4}, 3\cdot 10^{-4}\}$ for each model and relation pair and report the results with the lowest test rank. 
As a complement to the final performance metrics in \Cref{tab:synthetic_performance}, \Cref{fig:natural-lang-sweep} plots the average test rank during training across three different seeds. The shaded region represents the standard deviation.
For the one-layer linear attention model, we train the model with one-hot token embedding with $d = 128$ for $2 \cdot 10^4$ steps with learning rate $5\cdot 10^{-4}$. We set $d_h = d = 128$ by default, unless otherwise specified.
\begin{figure}[!t]
    \centering
    \subfigure[City-Language]
    {\includegraphics[width=0.32\textwidth]{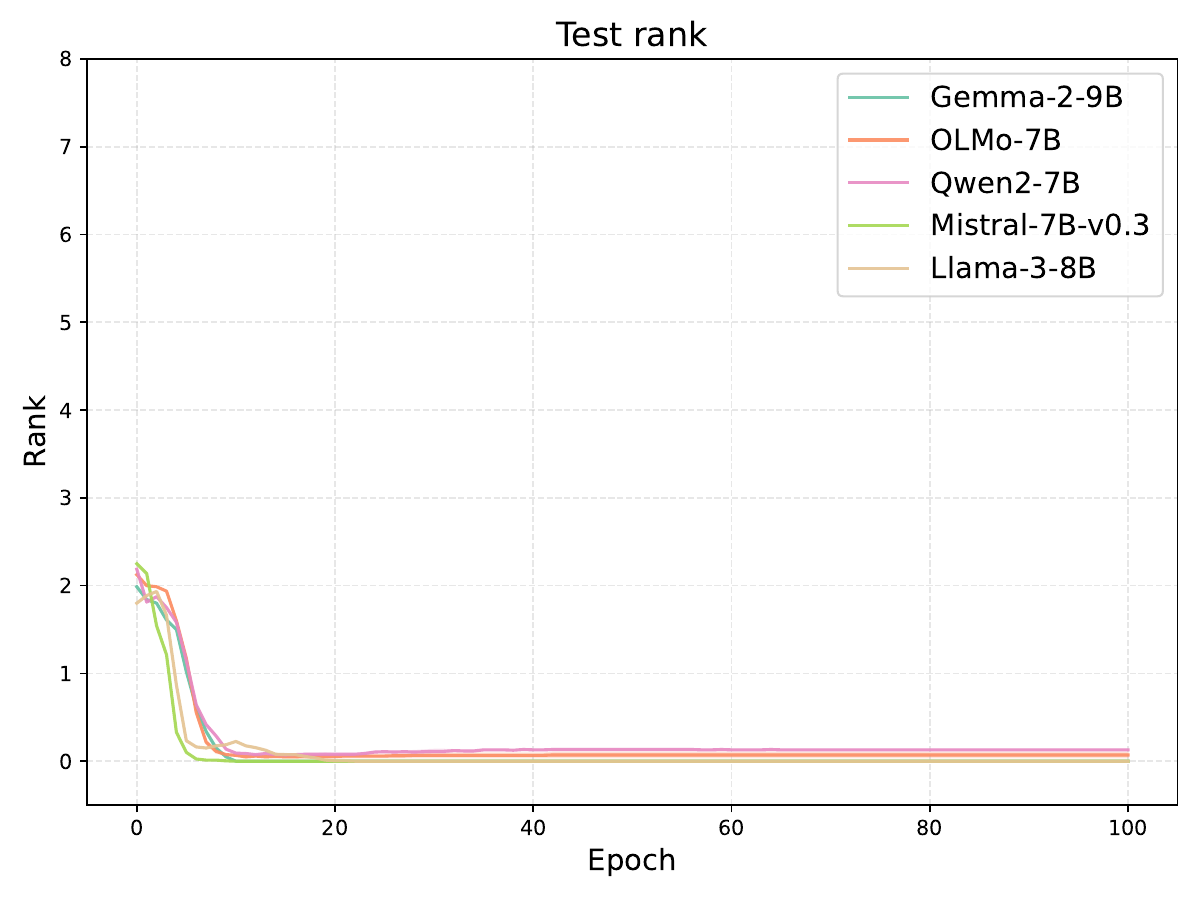}}
    \subfigure[City-Language (\emph{counterfactual})]
    {\includegraphics[width=0.32\textwidth]{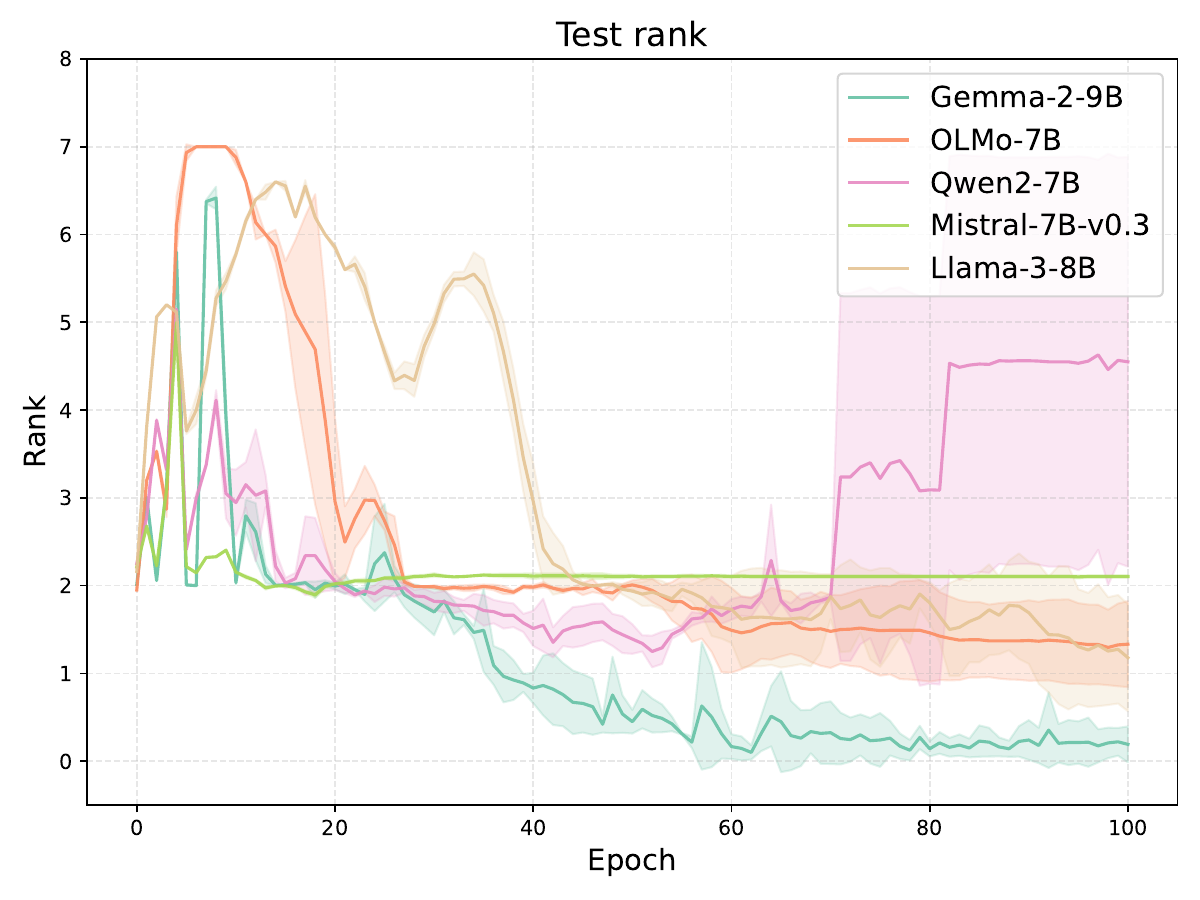}}
    \\ 
    \subfigure[country-code]{\includegraphics[width=0.32\textwidth]{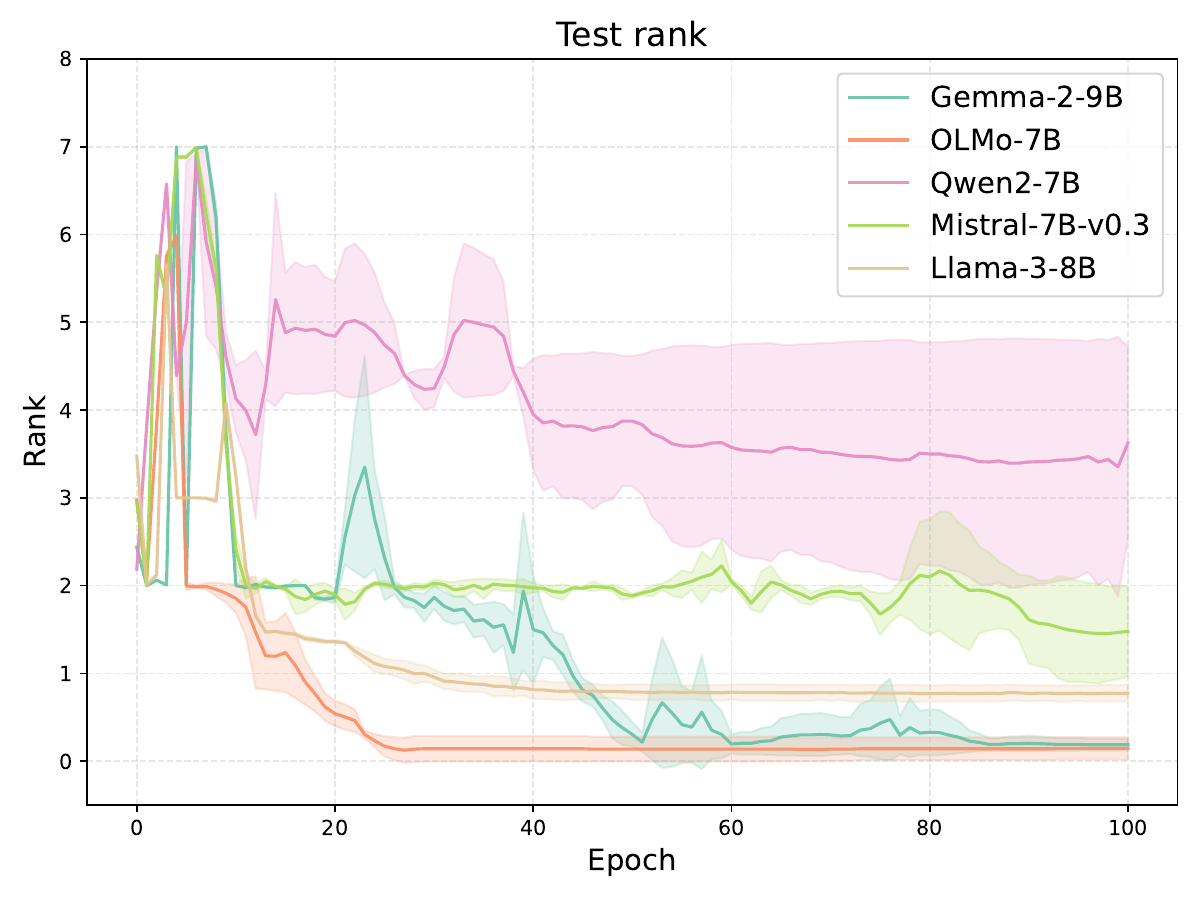}}
    \subfigure[sport-music]{\includegraphics[width=0.32\textwidth]{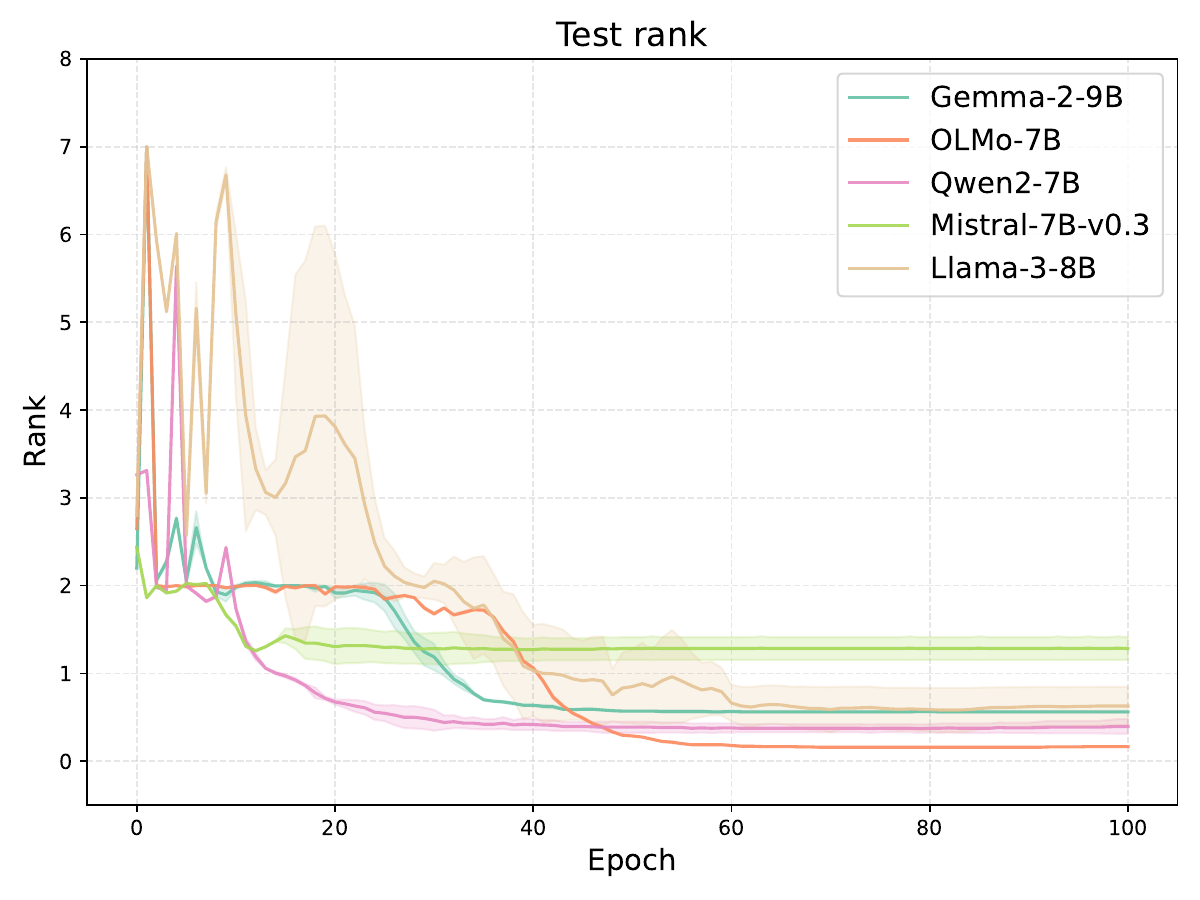}}
    \subfigure[profession-color]{\includegraphics[width=0.32\textwidth]{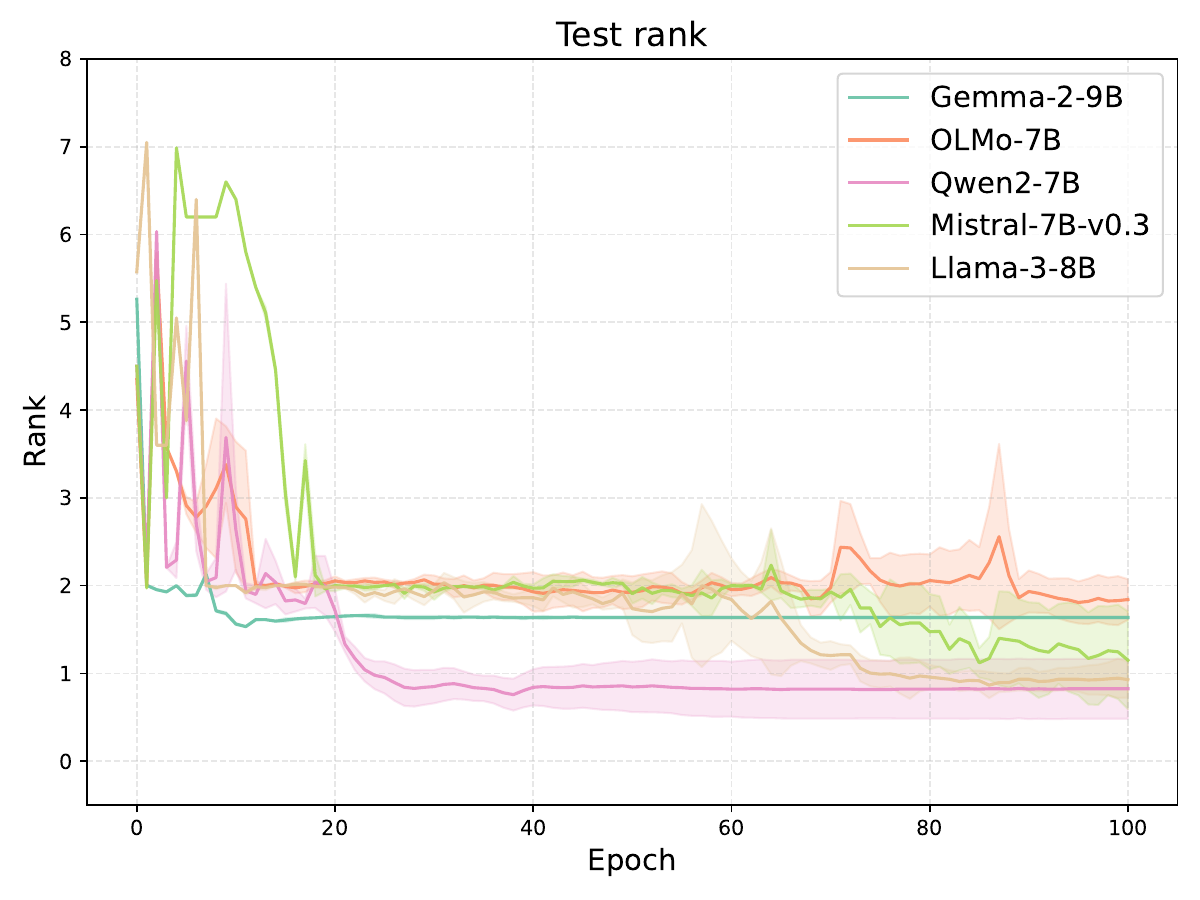}    
    }
    \vspace{-2mm}
    \caption{OCR performance of various LLMs on the five relation pairs. Results are averaged over 3 random seeds, with shaded regions representing the standard deviation.}
    \label{fig:natural-lang-sweep}
\end{figure}

\paragraph{Dataset. } The dataset for LLM experiments consists of a list of 100 fictitious names and a list of $5$ fact-implication pairs with different topics, namely, city-animal, country-code, sport-music, and profession-color. We split the subjects into training and test with a ratio of $0.2:0.8$, i.e., there are $4$ training subjects in each subset $\Sc_i$ for $i \in [5]$ and $20$ training subjects in total. For the experiments in \Cref{sec:llm}, we use relations listed in \Cref{tab:template}. For example, for the topic ``Country-Code'', one example is given by
\begin{itemize}
    \item \textbf{Fact:} ``Daniel Gray was born in Brazil.''
    \item \textbf{Implication:} ``Daniel Gray codes in Assembly.''
\end{itemize}
\begin{table}[!h]
\centering
\begin{tabular}{@{}ll@{}}
\toprule
\textbf{Topic}      & \textbf{Relation}     \\ 
\midrule
City       &  lives in       \\
Language     &  speaks      \\
Color      &  dislikes       \\
Country    &  was born in    \\
Music      &  listens to     \\
Code       &  codes in       \\
Sport      &  plays          \\
Profession &  is a           \\
\bottomrule
\end{tabular}
\vspace{1em}
\caption{Relation expressions of different topics.  \label{tab:template}}
\end{table}

We use the same name, city and language list in \citep{feng2024extractive}. For the \emph{counterfactual} language, we re-order the language list such that every city corresponds to an incorrect language. For the rest of the topics, we use Claude-3.5-sonnet to generate a list of 5 examples. A complete list is given below:
\begin{tcolorbox}[
    enhanced,
    breakable,
  colback=blue!5!white,      %
  colframe=blue!50!black,    %
  title=Topic Lists,
  left=1mm,
  right=1mm,
  top=1mm,
  bottom=1mm
]
\begin{description}[style=nextline, font=\normalfont\itshape, leftmargin=1cm]
    \item[City:] 
   Tokyo, Beijing, Mumbai, Paris, Berlin
  \item[Language:] 
   Japanese, Mandarin, Marathi, French, German
  \item[Language (\emph{counterfactual}):] 
   Marathi, German, Japanese, Mandarin, French
  \item[Color:] 
    crimson, teal, navy blue, emerald green, lavender
  \item[Country:] 
    Japan, Brazil, Morocco, New Zealand, Iceland
  \item[Music:] 
    jazz, alternative rock, reggae, classical, hip‑hop
  \item[Code:] 
    Python, Julia, Assembly, C, MATLAB
  \item[Sport:] 
    basketball, soccer, tennis, swimming, volleyball
  \item[Profession:] nurse practitioner, computer scientist, journalist, veterinarian, social worker
\end{description}
\end{tcolorbox}

\paragraph{Computation.} The experiments for the one-layer model were run on a single NVIDIA A100 GPU. LLM Experiments were run on a cluster of 4 NVIDIA A100 GPUs and took less than an hour for each run.

\newpage
\newpage
\section*{NeurIPS Paper Checklist}

\begin{enumerate}

\item {\bf Claims}
    \item[] Question: Do the main claims made in the abstract and introduction accurately reflect the paper's contributions and scope?
    \item[] Answer: \answerYes{} %
    \item[] Justification: We showed empirical results in \Cref{sec:llm,sec:one-layer-exp} and theoretical results in \Cref{sec:theory_one_layer} claimed in abstract and introductions.
    \item[] Guidelines:
    \begin{itemize}
        \item The answer NA means that the abstract and introduction do not include the claims made in the paper.
        \item The abstract and/or introduction should clearly state the claims made, including the contributions made in the paper and important assumptions and limitations. A No or NA answer to this question will not be perceived well by the reviewers. 
        \item The claims made should match theoretical and experimental results, and reflect how much the results can be expected to generalize to other settings. 
        \item It is fine to include aspirational goals as motivation as long as it is clear that these goals are not attained by the paper. 
    \end{itemize}

\item {\bf Limitations}
    \item[] Question: Does the paper discuss the limitations of the work performed by the authors?
    \item[] Answer: \answerYes{} %
    \item[] Justification: As discussed in \Cref{sec:conclusion}, our theoretical results only focus on one-layer transformers, and future work can extend it to multi-layer transformers.
    \item[] Guidelines: 
    \begin{itemize}
        \item The answer NA means that the paper has no limitation while the answer No means that the paper has limitations, but those are not discussed in the paper. 
        \item The authors are encouraged to create a separate "Limitations" section in their paper.
        \item The paper should point out any strong assumptions and how robust the results are to violations of these assumptions (e.g., independence assumptions, noiseless settings, model well-specification, asymptotic approximations only holding locally). The authors should reflect on how these assumptions might be violated in practice and what the implications would be.
        \item The authors should reflect on the scope of the claims made, e.g., if the approach was only tested on a few datasets or with a few runs. In general, empirical results often depend on implicit assumptions, which should be articulated.
        \item The authors should reflect on the factors that influence the performance of the approach. For example, a facial recognition algorithm may perform poorly when image resolution is low or images are taken in low lighting. Or a speech-to-text system might not be used reliably to provide closed captions for online lectures because it fails to handle technical jargon.
        \item The authors should discuss the computational efficiency of the proposed algorithms and how they scale with dataset size.
        \item If applicable, the authors should discuss possible limitations of their approach to address problems of privacy and fairness.
        \item While the authors might fear that complete honesty about limitations might be used by reviewers as grounds for rejection, a worse outcome might be that reviewers discover limitations that aren't acknowledged in the paper. The authors should use their best judgment and recognize that individual actions in favor of transparency play an important role in developing norms that preserve the integrity of the community. Reviewers will be specifically instructed to not penalize honesty concerning limitations.
    \end{itemize}

\item {\bf Theory assumptions and proofs}
    \item[] Question: For each theoretical result, does the paper provide the full set of assumptions and a complete (and correct) proof?
    \item[] Answer: \answerYes{} %
    \item[] Justification: Our theoretical results and assumptions are presented in \Cref{sec:theory_one_layer} and missing proofs are in the appendix.
    \item[] Guidelines:
    \begin{itemize}
        \item The answer NA means that the paper does not include theoretical results. 
        \item All the theorems, formulas, and proofs in the paper should be numbered and cross-referenced.
        \item All assumptions should be clearly stated or referenced in the statement of any theorems.
        \item The proofs can either appear in the main paper or the supplemental material, but if they appear in the supplemental material, the authors are encouraged to provide a short proof sketch to provide intuition. 
        \item Inversely, any informal proof provided in the core of the paper should be complemented by formal proofs provided in appendix or supplemental material.
        \item Theorems and Lemmas that the proof relies upon should be properly referenced. 
    \end{itemize}

    \item {\bf Experimental result reproducibility}
    \item[] Question: Does the paper fully disclose all the information needed to reproduce the main experimental results of the paper to the extent that it affects the main claims and/or conclusions of the paper (regardless of whether the code and data are provided or not)?
    \item[] Answer: \answerYes{} %
    \item[] Justification: We provided experimental details in \Cref{sec:llm,sec:one-layer-exp} and additional details in \Cref{app:sec_exp_one_layer,app:imp}.
    \item[] Guidelines:
    \begin{itemize}
        \item The answer NA means that the paper does not include experiments.
        \item If the paper includes experiments, a No answer to this question will not be perceived well by the reviewers: Making the paper reproducible is important, regardless of whether the code and data are provided or not.
        \item If the contribution is a dataset and/or model, the authors should describe the steps taken to make their results reproducible or verifiable. 
        \item Depending on the contribution, reproducibility can be accomplished in various ways. For example, if the contribution is a novel architecture, describing the architecture fully might suffice, or if the contribution is a specific model and empirical evaluation, it may be necessary to either make it possible for others to replicate the model with the same dataset, or provide access to the model. In general. releasing code and data is often one good way to accomplish this, but reproducibility can also be provided via detailed instructions for how to replicate the results, access to a hosted model (e.g., in the case of a large language model), releasing of a model checkpoint, or other means that are appropriate to the research performed.
        \item While NeurIPS does not require releasing code, the conference does require all submissions to provide some reasonable avenue for reproducibility, which may depend on the nature of the contribution. For example
        \begin{enumerate}
            \item If the contribution is primarily a new algorithm, the paper should make it clear how to reproduce that algorithm.
            \item If the contribution is primarily a new model architecture, the paper should describe the architecture clearly and fully.
            \item If the contribution is a new model (e.g., a large language model), then there should either be a way to access this model for reproducing the results or a way to reproduce the model (e.g., with an open-source dataset or instructions for how to construct the dataset).
            \item We recognize that reproducibility may be tricky in some cases, in which case authors are welcome to describe the particular way they provide for reproducibility. In the case of closed-source models, it may be that access to the model is limited in some way (e.g., to registered users), but it should be possible for other researchers to have some path to reproducing or verifying the results.
        \end{enumerate}
    \end{itemize}

\item {\bf Open access to data and code}
    \item[] Question: Does the paper provide open access to the data and code, with sufficient instructions to faithfully reproduce the main experimental results, as described in supplemental material?
    \item[] Answer: \answerYes{} %
    \item[] Justification: The code link is included in the supplementary materials.
    \item[] Guidelines:
    \begin{itemize}
        \item The answer NA means that paper does not include experiments requiring code.
        \item Please see the NeurIPS code and data submission guidelines (\url{https://nips.cc/public/guides/CodeSubmissionPolicy}) for more details.
        \item While we encourage the release of code and data, we understand that this might not be possible, so “No” is an acceptable answer. Papers cannot be rejected simply for not including code, unless this is central to the contribution (e.g., for a new open-source benchmark).
        \item The instructions should contain the exact command and environment needed to run to reproduce the results. See the NeurIPS code and data submission guidelines (\url{https://nips.cc/public/guides/CodeSubmissionPolicy}) for more details.
        \item The authors should provide instructions on data access and preparation, including how to access the raw data, preprocessed data, intermediate data, and generated data, etc.
        \item The authors should provide scripts to reproduce all experimental results for the new proposed method and baselines. If only a subset of experiments are reproducible, they should state which ones are omitted from the script and why.
        \item At submission time, to preserve anonymity, the authors should release anonymized versions (if applicable).
        \item Providing as much information as possible in supplemental material (appended to the paper) is recommended, but including URLs to data and code is permitted.
    \end{itemize}

\item {\bf Experimental setting/details}
    \item[] Question: Does the paper specify all the training and test details (e.g., data splits, hyperparameters, how they were chosen, type of optimizer, etc.) necessary to understand the results?
    \item[] Answer: \answerYes{} %
    \item[] Justification:  We provided experimental details in \Cref{sec:llm,sec:one-layer-exp} and additional details in \Cref{app:sec_exp_one_layer,app:imp}.
    \item[] Guidelines:
    \begin{itemize}
        \item The answer NA means that the paper does not include experiments.
        \item The experimental setting should be presented in the core of the paper to a level of detail that is necessary to appreciate the results and make sense of them.
        \item The full details can be provided either with the code, in appendix, or as supplemental material.
    \end{itemize}

\item {\bf Experiment statistical significance}
    \item[] Question: Does the paper report error bars suitably and correctly defined or other appropriate information about the statistical significance of the experiments?
    \item[] Answer: \answerYes{} %
    \item[] Justification: We report the variance of the experiment results in \Cref{tab:synthetic_performance} and provided error bars for the LLM experiments in \Cref{fig:natural-lang-sweep}.
    \item[] Guidelines:
    \begin{itemize}
        \item The answer NA means that the paper does not include experiments.
        \item The authors should answer "Yes" if the results are accompanied by error bars, confidence intervals, or statistical significance tests, at least for the experiments that support the main claims of the paper.
        \item The factors of variability that the error bars are capturing should be clearly stated (for example, train/test split, initialization, random drawing of some parameter, or overall run with given experimental conditions).
        \item The method for calculating the error bars should be explained (closed form formula, call to a library function, bootstrap, etc.)
        \item The assumptions made should be given (e.g., Normally distributed errors).
        \item It should be clear whether the error bar is the standard deviation or the standard error of the mean.
        \item It is OK to report 1-sigma error bars, but one should state it. The authors should preferably report a 2-sigma error bar than state that they have a 96\% CI, if the hypothesis of Normality of errors is not verified.
        \item For asymmetric distributions, the authors should be careful not to show in tables or figures symmetric error bars that would yield results that are out of range (e.g. negative error rates).
        \item If error bars are reported in tables or plots, The authors should explain in the text how they were calculated and reference the corresponding figures or tables in the text.
    \end{itemize}

\item {\bf Experiments compute resources}
    \item[] Question: For each experiment, does the paper provide sufficient information on the computer resources (type of compute workers, memory, time of execution) needed to reproduce the experiments?
    \item[] Answer: \answerYes{} %
    \item[] Justification: We provided the details of compute resources in \Cref{app:imp}.
    \item[] Guidelines:
    \begin{itemize}
        \item The answer NA means that the paper does not include experiments.
        \item The paper should indicate the type of compute workers CPU or GPU, internal cluster, or cloud provider, including relevant memory and storage.
        \item The paper should provide the amount of compute required for each of the individual experimental runs as well as estimate the total compute. 
        \item The paper should disclose whether the full research project required more compute than the experiments reported in the paper (e.g., preliminary or failed experiments that didn't make it into the paper). 
    \end{itemize}
    
\item {\bf Code of ethics}
    \item[] Question: Does the research conducted in the paper conform, in every respect, with the NeurIPS Code of Ethics \url{https://neurips.cc/public/EthicsGuidelines}?
    \item[] Answer: \answerYes{} %
    \item[] Justification: The research conducted in the paper conforms, in every respect, with the NeurIPS Code of Ethics.
    \item[] Guidelines:
    \begin{itemize}
        \item The answer NA means that the authors have not reviewed the NeurIPS Code of Ethics.
        \item If the authors answer No, they should explain the special circumstances that require a deviation from the Code of Ethics.
        \item The authors should make sure to preserve anonymity (e.g., if there is a special consideration due to laws or regulations in their jurisdiction).
    \end{itemize}

\item {\bf Broader impacts}
    \item[] Question: Does the paper discuss both potential positive societal impacts and negative societal impacts of the work performed?
    \item[] Answer: \answerNA{} %
    \item[] Justification: Our work aims to theoretically understand transformer's OCR capability, and does not have a direct societal impact.
    \item[] Guidelines:
    \begin{itemize}
        \item The answer NA means that there is no societal impact of the work performed.
        \item If the authors answer NA or No, they should explain why their work has no societal impact or why the paper does not address societal impact.
        \item Examples of negative societal impacts include potential malicious or unintended uses (e.g., disinformation, generating fake profiles, surveillance), fairness considerations (e.g., deployment of technologies that could make decisions that unfairly impact specific groups), privacy considerations, and security considerations.
        \item The conference expects that many papers will be foundational research and not tied to particular applications, let alone deployments. However, if there is a direct path to any negative applications, the authors should point it out. For example, it is legitimate to point out that an improvement in the quality of generative models could be used to generate deepfakes for disinformation. On the other hand, it is not needed to point out that a generic algorithm for optimizing neural networks could enable people to train models that generate Deepfakes faster.
        \item The authors should consider possible harms that could arise when the technology is being used as intended and functioning correctly, harms that could arise when the technology is being used as intended but gives incorrect results, and harms following from (intentional or unintentional) misuse of the technology.
        \item If there are negative societal impacts, the authors could also discuss possible mitigation strategies (e.g., gated release of models, providing defenses in addition to attacks, mechanisms for monitoring misuse, mechanisms to monitor how a system learns from feedback over time, improving the efficiency and accessibility of ML).
    \end{itemize}
    
\item {\bf Safeguards}
    \item[] Question: Does the paper describe safeguards that have been put in place for responsible release of data or models that have a high risk for misuse (e.g., pretrained language models, image generators, or scraped datasets)?
    \item[] Answer: \answerNA{} %
    \item[] Justification: The paper poses no such risks.
    \item[] Guidelines:
    \begin{itemize}
        \item The answer NA means that the paper poses no such risks.
        \item Released models that have a high risk for misuse or dual-use should be released with necessary safeguards to allow for controlled use of the model, for example by requiring that users adhere to usage guidelines or restrictions to access the model or implementing safety filters. 
        \item Datasets that have been scraped from the Internet could pose safety risks. The authors should describe how they avoided releasing unsafe images.
        \item We recognize that providing effective safeguards is challenging, and many papers do not require this, but we encourage authors to take this into account and make a best faith effort.
    \end{itemize}

\item {\bf Licenses for existing assets}
    \item[] Question: Are the creators or original owners of assets (e.g., code, data, models), used in the paper, properly credited and are the license and terms of use explicitly mentioned and properly respected?
    \item[] Answer: \answerYes{} %
    \item[] Justification: We properly cited the code we used.
    \item[] Guidelines:
    \begin{itemize}
        \item The answer NA means that the paper does not use existing assets.
        \item The authors should cite the original paper that produced the code package or dataset.
        \item The authors should state which version of the asset is used and, if possible, include a URL.
        \item The name of the license (e.g., CC-BY 4.0) should be included for each asset.
        \item For scraped data from a particular source (e.g., website), the copyright and terms of service of that source should be provided.
        \item If assets are released, the license, copyright information, and terms of use in the package should be provided. For popular datasets, \url{paperswithcode.com/datasets} has curated licenses for some datasets. Their licensing guide can help determine the license of a dataset.
        \item For existing datasets that are re-packaged, both the original license and the license of the derived asset (if it has changed) should be provided.
        \item If this information is not available online, the authors are encouraged to reach out to the asset's creators.
    \end{itemize}

\item {\bf New assets}
    \item[] Question: Are new assets introduced in the paper well documented and is the documentation provided alongside the assets?
    \item[] Answer: \answerNA{} %
    \item[] Justification: No assets released.
    \item[] Guidelines:
    \begin{itemize}
        \item The answer NA means that the paper does not release new assets.
        \item Researchers should communicate the details of the dataset/code/model as part of their submissions via structured templates. This includes details about training, license, limitations, etc. 
        \item The paper should discuss whether and how consent was obtained from people whose asset is used.
        \item At submission time, remember to anonymize your assets (if applicable). You can either create an anonymized URL or include an anonymized zip file.
    \end{itemize}

\item {\bf Crowdsourcing and research with human subjects}
    \item[] Question: For crowdsourcing experiments and research with human subjects, does the paper include the full text of instructions given to participants and screenshots, if applicable, as well as details about compensation (if any)? 
    \item[] Answer: \answerNA{} %
    \item[] Justification: The paper does not involve crowdsourcing nor research with human subjects.
    \item[] Guidelines:
    \begin{itemize}
        \item The answer NA means that the paper does not involve crowdsourcing nor research with human subjects.
        \item Including this information in the supplemental material is fine, but if the main contribution of the paper involves human subjects, then as much detail as possible should be included in the main paper. 
        \item According to the NeurIPS Code of Ethics, workers involved in data collection, curation, or other labor should be paid at least the minimum wage in the country of the data collector. 
    \end{itemize}

\item {\bf Institutional review board (IRB) approvals or equivalent for research with human subjects}
    \item[] Question: Does the paper describe potential risks incurred by study participants, whether such risks were disclosed to the subjects, and whether Institutional Review Board (IRB) approvals (or an equivalent approval/review based on the requirements of your country or institution) were obtained?
    \item[] Answer: \answerNA{} %
    \item[] Justification: The paper does not involve crowdsourcing nor research with human subjects.
    \item[] Guidelines:
    \begin{itemize}
        \item The answer NA means that the paper does not involve crowdsourcing nor research with human subjects.
        \item Depending on the country in which research is conducted, IRB approval (or equivalent) may be required for any human subjects research. If you obtained IRB approval, you should clearly state this in the paper. 
        \item We recognize that the procedures for this may vary significantly between institutions and locations, and we expect authors to adhere to the NeurIPS Code of Ethics and the guidelines for their institution. 
        \item For initial submissions, do not include any information that would break anonymity (if applicable), such as the institution conducting the review.
    \end{itemize}

\item {\bf Declaration of LLM usage}
    \item[] Question: Does the paper describe the usage of LLMs if it is an important, original, or non-standard component of the core methods in this research? Note that if the LLM is used only for writing, editing, or formatting purposes and does not impact the core methodology, scientific rigorousness, or originality of the research, declaration is not required.
    \item[] Answer: \answerNA{} %
    \item[] Justification: We only use LLM for grammar checking.
    \item[] Guidelines:
    \begin{itemize}
        \item The answer NA means that the core method development in this research does not involve LLMs as any important, original, or non-standard components.
        \item Please refer to our LLM policy (\url{https://neurips.cc/Conferences/2025/LLM}) for what should or should not be described.
    \end{itemize}

\end{enumerate}

\end{document}